\documentclass{article}
\usepackage{arxivstyle,times}
\usepackage{amsmath,amssymb,amsthm,mathtools}
\usepackage{graphicx,booktabs,multirow}
\usepackage{xcolor}
\usepackage{hyperref}
\usepackage{url}
\usepackage{microtype}

\newtheorem{theorem}{Theorem}
\newtheorem{proposition}{Proposition}
\newtheorem{corollary}{Corollary}
\newtheorem{lemma}{Lemma}
\newtheorem{definition}{Definition}
\newtheorem{assumption}{Assumption}
\theoremstyle{remark}
\newtheorem{remark}{Remark}

\newcommand{\R}{\mathbb{R}}
\newcommand{\E}{\mathbb{E}}
\newcommand{\Pd}{\mathbb{P}}
\newcommand{\one}{\mathbf{1}}
\newcommand{\cG}{\mathcal{G}}
\newcommand{\cC}{\mathcal{C}}
\newcommand{\cX}{\mathcal{X}}
\newcommand{\cY}{\mathcal{Y}}
\newcommand{\cD}{\mathcal{D}}
\newcommand{\cA}{\mathcal{A}}
\newcommand{\cL}{\mathcal{L}}
\newcommand{\cP}{\mathcal{P}}
\newcommand{\tOh}{\widetilde{O}}
\newcommand{\ttau}{\widetilde{\tau}}
\newcommand{\ReCal}{\textsc{ReCal}}
\DeclareMathOperator{\dtv}{d_{\mathrm{TV}}}

\title{Replicable Conformal Prediction}

\iclrfinalcopy
\author{
Marios Papamichalis\thanks{\raggedright Human Nature Lab, Yale University, New Haven, CT 06511, USA. \href{mailto:marios.papamichalis@yale.edu}{\nolinkurl{marios.papamichalis@yale.edu}}}
\and Regina Ruane\thanks{\raggedright Department of Statistics and Data Science, The Wharton School, University of Pennsylvania, Philadelphia, PA, USA. \href{mailto:ruanej@wharton.upenn.edu}{\nolinkurl{ruanej@wharton.upenn.edu}}}
\and Theofanis Papamichalis\thanks{\raggedright Department of Economics, Yale University, New Haven, CT, USA. \href{mailto:theofanis.papamichalis@yale.edu}{\nolinkurl{theofanis.papamichalis@yale.edu}}}
}

\begin{document}
\maketitle

\begin{abstract}
Two analysts who calibrate a conformal predictor on independent samples deploy different set-valued classifiers with probability one, because the calibrated threshold is an order statistic of continuous scores. We ask when the identical classifier can be deployed, and at what cost. Exact agreement is impossible: a procedure that returns one fixed output with high probability cannot be uniformly valid, and with a shared seed an exactly replicable procedure must ignore its data. Rounding the threshold upward on a shared-seed random grid (replicable calibration, \ReCal) makes the deployed classifier identical across analysts with probability at least $1-\rho$, preserves marginal coverage, keeps training-conditional coverage within $\varepsilon$ of target, and needs $O(\kappa^{2}\alpha(1-\alpha)/(\varepsilon^{2}\rho^{2}))$ calibration points at miscoverage level $\alpha$, where $\kappa$ bounds the local score-density ratio. Lower bounds with explicit constants show the sample cost and a coverage inflation of order $\sqrt{\alpha(1-\alpha)/n}/\rho$ at calibration size $n$ are unavoidable for any threshold calibrator; a consistent plug-in removes the factor $\kappa^{2}$ asymptotically. Without any seed, a deterministic grid confines all analysts to two adjacent classifiers, optimally. Selecting the most favorable of $M$ recalibrations changes a \ReCal{} classifier with probability at most $(M-1)\rho$; the same selection silently undercovers standard split conformal. Experiments on real ImageNet outputs, a four-hospital site split, four language-model families, and synthetic testbeds match the theory, including the predicted $(\varepsilon\rho)^{-2}$ sample frontier.
\end{abstract}

\section{Introduction}\label{sec:intro}

Conformal prediction converts a trained model into a set-valued classifier with a finite-sample coverage guarantee \citep{vovk2005,lei2018,angelopoulos2021gentle}: a threshold $\ttau$ is calibrated on $n$ held-out points, the classifier $\cC(x)=\{y:s(x,y)\le\ttau\}$ is deployed, and the probability that the true label is covered is at least $1-\alpha$. In deployment the calibrated classifier is an artifact: hashed, cached, versioned, audited. Three examples motivate this paper: two hospitals validate one frozen diagnostic model on their own held-out data before a joint deployment; a regulator re-runs a vendor's calibration during an audit; a serving system caches conformal answer sets for language-model decoding \citep{quach2024}. In each case the question is whether two independent calibrations produce the identical classifier, a stronger event than achieving similar coverage.

With standard split conformal they never do. The calibrated threshold is an order statistic of continuously distributed scores, so two independent calibrations agree with probability zero (Proposition~\ref{prop:vanilla}), and realized coverage fluctuates across calibration draws according to an exact Beta law \citep{vovk2012}. The instability is exploitable: a party that redraws its calibration set $M$ times and keeps the most favorable draw deploys sets that undercover by an amount on the order of $\sqrt{\log M}$ times the coverage standard deviation (Corollary~\ref{cor:attack}), while the artifact passes any support check (Section~\ref{sec:exp}, Figure~\ref{fig:hero}b). A selective-inference failure at the deployment layer \citep{berk2013,taylor2015}: candidate-wise validity does not survive selection \citep{fithian2014}; replicability restores selection robustness (Corollary~\ref{cor:attack}). Cross-sample instability is documented at benchmark scale \citep{prastalo2026}; for prediction sets, built to be audited, it undermines that function.

A separate line of work formalizes agreement across independent samples. An algorithm is $\rho$-replicable if two runs on independent samples that share an internal random seed return the identical output with probability at least $1-\rho$ \citep{ilps2022}; the theory covers costs, list relaxations, and connections to privacy and stability \citep{bun2023,dixon2023,chase2023,hopkins2024}. To our knowledge the framework has not previously been applied to conformal prediction or distribution-free uncertainty quantification; this paper does so and characterizes the achievable trade-offs for conformal calibration.

\begin{figure}[t]
\centering
\includegraphics[width=0.70\linewidth]{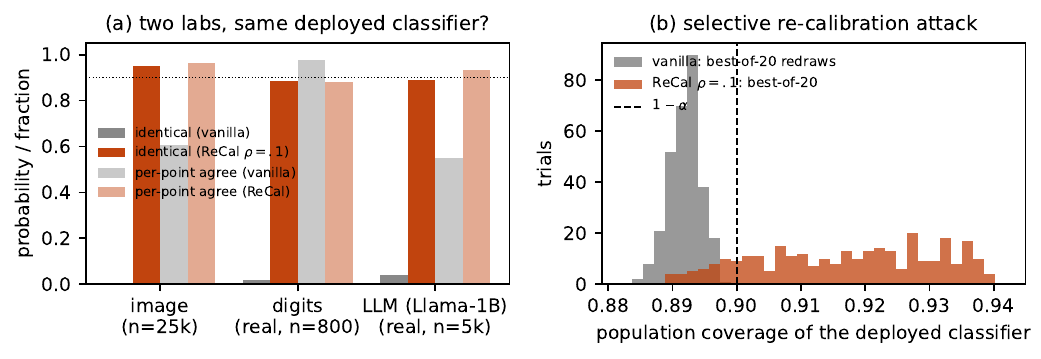}
\caption{(a) Identity (solid) and pointwise agreement (light) for two independent calibrations, derandomized standard versus \ReCal{} at $\rho{=}0.1$; baseline identity is zero for continuous scores (Proposition~\ref{prop:vanilla}), and between two honest analysts Llama-3.2-1B's answer sets change on $45\%$ of contexts ($76\%$ GPT-2; Table~\ref{tab:realllm}). (b) An adversary redraws the calibration set $M{=}20$ times and deploys the lowest-coverage draw, chosen by full-pool coverage (an oracle stress test): standard split conformal then undercovers systematically, undetectably by a support check; under \ReCal{} all $M$ draws yield one classifier except with probability at most $\min\{1,(M{-}1)\rho\}$ ($.19$ at $\rho{=}.01$; vacuous at the plotted $.1$; Corollary~\ref{cor:attack}, F5).}
\label{fig:hero}
\end{figure}

\paragraph{Questions.} First, can any procedure return one fixed classifier with high probability and remain valid for every distribution? Second, we determine which relaxed agreements are achievable, at what sample and set-size cost, and whether that cost is necessary. Third, we quantify what replicable calibration guarantees against selective recalibration.

\paragraph{Answers.} Exact agreement is impossible in a strong sense: any procedure that outputs one fixed classifier with probability $1-\delta$ and is uniformly valid to within $\varepsilon$ cannot exist when $\varepsilon+\delta<\min\{\alpha,1-\alpha\}$; with a shared seed, exact replicability forces the output law to be the same under every distribution, so the deployed sets are uninformative (Theorem~\ref{thm:impossible}). Rounding the threshold upward on a random grid whose offset is generated from one shared seed (\ReCal; Algorithm~1) makes the deployed classifier $\rho$-replicable, preserves the marginal guarantee, and keeps a two-sided training-conditional band, at sample complexity $O(\kappa^{2}\alpha(1-\alpha)/(\varepsilon^{2}\rho^{2}))$ under a local margin condition (Theorem~\ref{thm:upper}). A direct argument shows $n\ge\tfrac{9}{16384}\,\alpha(1-\alpha)/(\varepsilon^{2}\rho^{2})$ is necessary for any threshold calibrator (Theorem~\ref{thm:lower}), and the same machinery bounds the set-size cost: any replicable threshold calibrator inflates coverage by $\Omega(\sqrt{\alpha(1-\alpha)/n}/\rho)$, which \ReCal{} attains up to $\kappa^{2}$ (Corollary~\ref{cor:sizelb}); a consistent pilot removes the prior constant and the $\kappa^{2}$ asymptotically (Proposition~\ref{prop:adaptive}). Without a shared seed, single-answer replication is impossible below failure probability $1/2$; a deterministic grid places every analyst, with probability $1-\delta$, in a common two-element list of adjacent classifiers at rate $O(\kappa^{2}\log(1/\delta)/\varepsilon^{2})$, optimal within the margin class (Theorem~\ref{thm:twolist}).

\paragraph{What exact identity provides, and what it costs.} Identity is the relevant event wherever artifacts are hashed, cached, or certified, and it bounds selection: with a pre-registered seed, $M$ calibration redraws yield a second distinct classifier with probability at most $\min\{1,(M-1)\rho\}$, and expected coverage degrades by at most the same amount (Corollary~\ref{cor:attack}); continuous stability notions such as one-point stability \citep{ndiaye2022} do not provide this. Identity has a cost beyond set size: \ReCal{} concentrates disagreement into rare full-grid-cell events, so on small-sample real data pointwise agreement can decrease while identity rises from $0$ to $0.9$; we therefore report a full spectrum of replication metrics (Definition~\ref{def:spectrum}).

\paragraph{Contributions.} \textbf{(C1)} Impossibility theory: obliviousness of exact replication, necessity of shared randomness, and a seedless two-element-list frontier with matching optimality (Theorems~\ref{thm:impossible} and \ref{thm:twolist}). \textbf{(C2)} \ReCal{}, a replicable split-conformal calibrator with preserved validity and explicit constants, with corollaries for unequal calibration sizes, selective recalibration, and robustness to score and population mismatch, and a pilot procedure with confidence bands whose single prior input ($\hat\kappa\ge\kappa$) is identified, stress-tested, and asymptotically unnecessary (Theorem~\ref{thm:upper}, Corollaries~\ref{cor:protocol}, \ref{cor:twolabs}, and \ref{cor:attack}, Corollary~\ref{cor:robust} in Appendix~\ref{app:upper}, Propositions~\ref{prop:pilot} and~\ref{prop:adaptive}). \textbf{(C3)} Lower bounds with the $\alpha(1-\alpha)$ dependence, proved directly with explicit constants and attained up to the factor $\kappa^{2}$, for the sample size and for the set-size cost, with a lemma localizing where the cost surfaces (Theorem~\ref{thm:lower}, Corollary~\ref{cor:sizelb}, Lemma~\ref{lem:churn}). \textbf{(C4)} An instrumented version of the standard conformal evaluation \citep{romano2020,angelopoulos2021sets} adding replication metrics (a set-valued, calibration-resampling analogue of the churn analysis of \citet{prastalo2026}), applied to a synthetic image-scale testbed, a real small-data task, and next-token logits from four language-model families ($50$k--$152$k vocabularies), with direct measurements of the frontier and the selection attack; all synthetic results regenerate deterministically on one CPU core.

\section{Related work}\label{sec:related}

\textbf{Replicability.} \citet{ilps2022} introduced $\rho$-replicability with replicable statistical queries and a coin-problem lower bound of $\Omega(1/\rho^{2})$ at fixed tolerance; \citet{bun2023} connected replicability, differential privacy, and perfect generalization (with an inherent quadratic overhead in one direction); \citet{chase2023,chase2024} tie replicability to global stability and Borsuk--Ulam obstructions underlying list-size lower bounds; \citet{dixon2023} formalize list/certificate replicability; \citet{hopkins2024} resolve the $N$-coin problem via isoperimetry; replicable bandits, RL, and testing followed \citep{karbasi2023,eaton2026,liu2024}. Closest algorithmically is the replicable quantile of \citet[Thm.~B.6]{kalavasis2024} ($\tOh(\log^{2}R/(\alpha^{2}\rho^{2}))$ over an $R$-point grid via statistical-query (SQ) binary search); relative to it we contribute the conformal semantics (set-level identity with the finite-sample guarantee kept, two-sided \emph{coverage} bands, unequal-$n$ analysts), constants via the exact Beta law (no $\log^{2}R$; Remark~\ref{rem:constants}), the coverage-metric lower bound, and the obliviousness/list theory; their construction is our distribution-free fallback (Proposition~\ref{prop:distfree}).

\textbf{Conformal prediction.} Foundations and training-conditional behavior: \citet{vovk2005,lei2018,vovk2012,bian2023}; APS/RAPS define the scores and evaluation we instrument \citep{romano2020,angelopoulos2021sets}; nested families \citep{gupta2022} coincide with our output class; conformal risk control (CRC) and conformal-LM stopping \citep{angelopoulos2024crc,quach2024} calibrate one scalar, so \ReCal{} applies unchanged (we instantiate a risk-controlled variant in Section~\ref{sec:exp}). One-point stability \citep{ndiaye2022} and e-value derandomization \citep{bashari2023} are within-run notions, orthogonal to cross-sample identity. \citet{balinsky2025} ask when one calibration set may be reused across predictions; we ask when two different sets yield one classifier. \citet[App.~C]{scholten2025} make a single run deterministic via input-hash jitter; we adopt the device for tie-breaking (Remark~\ref{rem:jitter}) and show it leaves cross-sample identity at zero; post-selection inference conditions on the selection \citep{berk2013,fithian2014,taylor2015}, while Corollary~\ref{cor:attack} makes the selected object nearly deterministic, a complementary route (Table~\ref{tab:imagenet}). Private prediction sets \citep{angelopoulos2022private} privatize the same quantile; the generic DP-to-replicability conversion \citep{bun2023} costs quadratic overhead and a shared seed anyway. Prediction churn and multiplicity are documented deployment problems \citep{fard2016,marx2020,prastalo2026}; replicable calibration yields one verifiable classifier.

\section{Preliminaries: calibration and replication metrics}\label{sec:prelim}

\textbf{Split conformal.} $(X_i,Y_i)\overset{\text{iid}}{\sim}P$ on $\cX\times\cY$; $s:\cX\times\cY\to\R$ is a fixed measurable score (a frozen artifact); $S_i=s(X_i,Y_i)$ has continuous law $F$ (Remark~\ref{rem:jitter}); $q=F^{-1}(1-\alpha)$. With $k=\lceil(1-\alpha)(n+1)\rceil$ and $\alpha\ge1/(n+1)$, the threshold is $\ttau_n=S_{(k)}$, the deployed classifier is $\cC_\tau(\cdot)=\{y:s(\cdot,y)\le\tau\}$ at $\tau=\ttau_n$, and $\Pd(Y_{n+1}\in\cC_{\ttau_n}(X_{n+1}))\ge1-\alpha$. For continuous $F$, $F(\ttau_n)\overset{d}{=}\mathrm{Beta}(k,n{+}1{-}k)$, the exact law of realized coverage \citep{vovk2012}.

\textbf{Replicability.} A calibrator $\cA$ maps $(\cD,r)$ to an output, $r$ a shared random string. $\cA$ is \emph{$\rho$-replicable} over a class $\cP$ if $\Pd_{\cD,\cD'\sim P^{\otimes n},r}[\cA(\cD;r)\neq\cA(\cD';r)]\le\rho$ for all $P\in\cP$ \citep{ilps2022}; it is \emph{$(\ell,\delta)$-list replicable} if for every $P$ there is $\cL(P)$, $|\cL(P)|\le\ell$, with $\Pd[\cA(\cD;r)\in\cL(P)]\ge1-\delta$ \citep{dixon2023}. Both are preserved under deterministic post-processing. For threshold calibrators, identical $\tau$ gives identical set maps at \emph{every} test input.

\begin{definition}[Replication spectrum]\label{def:spectrum}
For two runs producing set maps $\cC,\cC'$ we track four criteria: \emph{(i) identity} $\Pd[\cC\equiv\cC']$; \emph{(ii) pointwise agreement} $\E_{X}\,\Pd[\cC(X)=\cC'(X)]$; \emph{(iii) churn mass} $\E_{X}\,|\cC(X)\,\triangle\,\cC'(X)|$, also reported conditionally on $\cC(X)\neq\cC'(X)$; \emph{(iv) list membership} as above.
\end{definition}
Identity implies (ii) and (iii) at their extreme values and places both outputs in a common singleton list; the criteria are not totally ordered, and, as the experiments show, maximizing (i) can worsen the conditional version of (iii). We report the full spectrum throughout.

\begin{remark}[Continuity by deterministic jitter]\label{rem:jitter}
Replace $s$ by $\widetilde s(x,y)=s(x,y)-u(x)\,g(x,y)$ with fixed measurable $u:\cX\to[0,1)$, $g>0$, used identically by all analysts (adapted from \citealp{scholten2025}): ties break identically and marginal validity is unaffected. Continuity of the induced law holds when $u(X)$ is atomless and independent of $(s(X,Y),g(X,Y))$, and more generally when its conditional law given that pair is atomless; marginal atomlessness alone is insufficient ($s=u=x$, $g\equiv1$ gives $\widetilde s\equiv0$), and a finite-range hash satisfies the hypotheses only approximately (exact conditions, counterexamples, and learned scores: Appendix~\ref{app:upper}).
\end{remark}

\begin{proposition}[Standard split conformal never replicates]\label{prop:vanilla}
If $F$ is continuous, two independent calibrations give $\Pd[\ttau_n=\ttau_n']=0$; the map-equality events (pointwise, or $P_X$-almost everywhere) have probability zero in the completed joint law; and realized coverage fluctuates by the Beta law above. (Full statement, boundary levels, and proof: Appendix~\ref{app:upper}.)
\end{proposition}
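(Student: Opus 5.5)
The plan has three parts: threshold equality, map equality, and the coverage law. All three rest on continuity of $F$ and independence of the two calibration samples $\cD=(S_1,\dots,S_n)$ and $\cD'=(S'_1,\dots,S'_n)$.

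For the threshold claim, write $\ttau_n=S_{(k)}$ and $\ttau_n'=S'_{(k)}$. Because $\ttau_n'$ is one of the values $S'_1,\dots,S'_n$, the event $\{\ttau_n=\ttau_n'\}$ lies inside the union
\[
\bigcup_{i,j\le n}\{S_{(k)}=S'_j\}\subseteq\bigcup_{i,j\le n}\{S_i=S'_j\}.
\]
Each event $\{S_i=S'_j\}$ has probability zero. To see this, condition on $S'_j$ and use independence: $\Pd[S_i=t]=F(t)-F(t^-)=0$ for every $t$ by continuity, so Fubini gives $\Pd[S_i=S'_j]=\int\Pd[S_i=t]\,dF(t)=0$. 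A finite union of null events is null. The boundary convention $\alpha\ge 1/(n+1)$ ensures $k\le n$, so $\ttau_n$ is finite. Otherwise $\ttau_n=+\infty$ (the trivial classifier), and the statement restricts to the nontrivial range.

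For the map-equality claims, I would show that $\cC_\tau$ determines $\tau$ up to a null set, outside a null event. Suppose $\tau<\tau'$ and $\Pd[\tau<S\le\tau']>0$. Then the set $\{(x,y):\tau<s(x,y)\le\tau'\}$ has positive $P$-mass, so its $x$-projection has positive $P_X$-mass. On that projection $\cC_\tau(x)\ne\cC_{\tau'}(x)$, so the two maps differ on a $P_X$-positive set, hence neither pointwise nor almost everywhere equal. So equality of the maps forces one of two things: either $\ttau_n=\ttau_n'$, which is null by the first step, or the interval between $\ttau_n$ and $\ttau_n'$ carries zero $F$-mass.

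The second alternative is the main obstacle, because $F$ may have flat stretches. The plan here is to note that both thresholds are sample points, so almost surely each lies in the support of $F$. Two distinct support points with $F$-null mass between them must be the endpoints of a gap, meaning $F(\ttau_n)=F(\ttau_n')$. Under continuity of $F$, the variables $U=F(S_{(k)})$ and $U'=F(S'_{(k)})$ are independent Beta random variables, so $\Pd[U=U']=0$. This removes the degenerate case.

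Measurability of "pointwise equality" over all $x$ is where the completed joint law enters. The event is contained in the null event above, so it is measurable in the completion with probability zero.

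Finally, the coverage statement is the classical fact that conditional on $\cD$, the realized coverage $\Pd[Y_{n+1}\in\cC_{\ttau_n}(X_{n+1})\mid\cD]=F(\ttau_n)$. This quantity is $F(S_{(k)})$, and by the probability integral transform it is the $k$-th order statistic of $n$ i.i.d.\ uniforms, hence distributed as $\mathrm{Beta}(k,n+1-k)$ \citep{vovk2012}. It has positive variance for $1\le k\le n$, so coverage genuinely fluctuates across calibrations.
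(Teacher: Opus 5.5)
Your proof is correct. For the map-equality and coverage claims it follows the paper's route. The paper shows that if $\cC_t=\cC_{t'}$ outside a $P_X$-null set $N$, then the disagreement set $\{(x,y):t\wedge t'<s(x,y)\le t\vee t'\}$ lies in $N\times\cY$. Hence $|F(t)-F(t')|=0$, which gives $E_{\mathrm{pt}}\subseteq E_{\mathrm{ae}}\subseteq\{F(\ttau_n)=F(\ttau_n')\}$. That event is null because $F(\ttau_n)$ and $F(\ttau_n')$ are independent, atomless Beta variables.

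The one genuine difference is the threshold claim. You cover $\{\ttau_n=\ttau_n'\}$ by the pairwise collisions $\{S_i=S_j'\}$ and show each is null by Fubini. This is elementary, uses only atomlessness of $F$ and independence, and works for any rule that outputs one of the sample scores. The paper instead observes $\{\ttau_n=\ttau_n'\}\subseteq\{F(\ttau_n)=F(\ttau_n')\}$ and reuses the same Beta null event. That way a single null set handles thresholds, pointwise map equality and almost-everywhere map equality together.

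Two small remarks. First, the ``main obstacle'' you flag is not one. For continuous $F$, zero $F$-mass between the two thresholds is literally $F(\ttau_n)=F(\ttau_n')$, so the support and gap discussion can be dropped. Second, the step claiming the $x$-projection of the disagreement set has positive $P_X$-mass should be phrased contrapositively, as above. In a general measurable space $(\cX,\mathcal A)$, projections of product-measurable sets need not be measurable, whereas the containment in $N\times\cY$ needs no such measurability.
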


\section{Exact replication is impossible and forces obliviousness}\label{sec:impossible}

We first show that exact replication is incompatible with nontrivial validity, which motivates the $(\varepsilon,\rho)$ relaxation and the shared seed. Let $\cY$ be finite, $|\cY|=K\ge2$, and $\cP_{\mathrm{ac}}$ the class of distributions on $\R^{d}\times\cY$ with absolutely continuous $X$-marginal (labels arbitrary). Call a set-valued procedure \emph{uniformly $\varepsilon$-valid} if $\Pd_{\cD,r,(X,Y)\sim P}[Y\in\widehat\cC(X)]\in[1-\alpha-\varepsilon,\,1-\alpha+\varepsilon]$ for every $P\in\cP_{\mathrm{ac}}$; the two-sided requirement excludes the trivial $\widehat\cC\equiv\cY$.

\begin{theorem}[Obliviousness of exact replication]\label{thm:impossible}
\emph{(a)} If $\varepsilon+\delta<\min(\alpha,1-\alpha)$, no $(1,\delta)$-list-replicable procedure is uniformly $\varepsilon$-valid. Every seedless $\rho$-replicable procedure is $(1,\rho)$-list replicable, so no seedless procedure is uniformly $\varepsilon$-valid and $\rho$-replicable when $\varepsilon+\rho<\min(\alpha,1-\alpha)$. \emph{(b)} With a shared seed $R$, any exactly ($0$-)replicable procedure equals a fixed measurable $v_\star(R)$ almost surely under every $P\in\cP_{\mathrm{ac}}$, so its output law is $P$-independent, and if it is also uniformly $\varepsilon$-valid, there is a fixed kernel $p$ with $p(x,y)\in[1-\alpha-\varepsilon,1-\alpha+\varepsilon]$ for a.e.\ $x$ and every $y$, coverage under every $P$ equals $\E_P[p(X,Y)]$, and $(1-\alpha-\varepsilon)K\le\E|\widehat\cC(x)|\le(1-\alpha+\varepsilon)K$ for a.e.\ $x$. (Proof: Appendix~\ref{app:impossible}.)
\end{theorem}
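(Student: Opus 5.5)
For part (a), I would take a $(1,\delta)$-list-replicable procedure: for each $P$ there is a single classifier $v(P)$ with $\Pd[\widehat\cC=v(P)]\ge1-\delta$. The plan is to build two distributions in $\cP_{\mathrm{ac}}$ that share the same $X$-marginal (say uniform on $[0,1]^d$) and differ only in their labels, and to connect them by a path.

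Let $P_t$ be the mixture $(1-t)P_0+tP_1$, where $P_0$ puts label $y_0$ deterministically and $P_1$ puts label $y_1\neq y_0$ deterministically. First fix $P_0$ and write $c_0=v(P_0)$. On the high-probability event the output is $c_0$, so the coverage of $c_0$ under $P_0$ is within $\varepsilon+\delta$ of $1-\alpha$. More precisely, the coverage of the procedure is $(1-\delta')\cdot\mathrm{cov}_{P_0}(c_0)+\delta'\cdot(\text{something in }[0,1])$ with $\delta'\le\delta$.

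The cleanest contradiction does not need a continuity path. Use the two label-deterministic distributions together with $X$-regions. Take $P^{(1)}$ with label $y_0$ everywhere, and $P^{(2)}$ with label $y_0$ on a region $A$ and $y_1$ off $A$. A single list element $c$ has coverage $\Pd_X[y_0\in c(X)]$ under $P^{(1)}$. Its coverage under label-flipped variants is $\Pd_X[y_1\in c(X)]$, and so on.

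The key step I would try is a mixture argument. Consider the family $P_\lambda$ in which $X$ is uniform and $Y$ equals $y_0$ with probability $\lambda$ and $y_1$ otherwise, independently of $X$. If the outputs under $P_0$ and $P_1$ differ, replicability across the mixture is stressed. It is simpler, though, to note that validity for all $P$ forces $\Pd_X[y\in c(X)]$ to lie within $\varepsilon+\delta$ of $1-\alpha$ for the list element of each deterministic-label $P$. One then chooses $P$ adversarially given $c=v(P)$ via a fixed-point/limit argument.

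Concretely, I expect the main obstacle to be this self-reference: $v$ depends on $P$. I would handle it with a localization trick. Partition $[0,1]^d$ into tiny cells. Let $P$ put, on each cell, the label that makes $c$ cover or not cover. That requires knowing $c$ first, so instead I would use a path $P_t$ (moving the boundary of $A$ continuously) together with the fact that coverage is continuous in $t$ for a fixed $c$.

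Along the path there are two possibilities. Either $v(P_t)$ is constant, in which case the endpoints (all-$y_0$ versus all-$y_1$ labels, with $c$ chosen so that its coverage differs by a lot) give a contradiction whenever $\varepsilon+\delta<\min(\alpha,1-\alpha)$. Or $v$ jumps at some $t^\ast$. At a jump point, $\Pd[\widehat\cC=v(P_{t^\ast\pm})]$ must both be $\ge1-\delta$ in the limit; this uses continuity of the output law in $t$, via total variation of $P_t^{\otimes n}$. Since $1-\delta>1/2$, this is impossible.

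The seedless consequence is then immediate. For a seedless $\rho$-replicable procedure, $\sum_c\Pd[\widehat\cC=c]^2\ge1-\rho$. This gives $\max_c\Pd[\widehat\cC=c]\ge1-\rho$, so the procedure is $(1,\rho)$-list replicable.

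For part (b), fix the seed $r$. Zero-replicability gives $\Pd[\cA(\cD;r)\ne\cA(\cD';r)]=0$ for independent $\cD,\cD'$. By Fubini, for a.e. $r$ the map $\cD\mapsto\cA(\cD;r)$ is $P^{\otimes n}$-a.s. constant; call the constant $v_P(r)$.

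To remove the dependence on $P$, I would note that any two $P,Q\in\cP_{\mathrm{ac}}$ have the mixture $\tfrac12(P+Q)\in\cP_{\mathrm{ac}}$. Its product measure dominates the all-$P$ and all-$Q$ sample laws, so $v_P=v_{(P+Q)/2}=v_Q$ a.e. This yields a single $v_\star(r)$, and its output law is $P$-independent.

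Next define $p(x,y)=\Pd_R[y\in v_\star(R)(x)]$. The coverage under $P$ is then $\E_P[p(X,Y)]$. Taking $P$ to be a point-label distribution concentrated near a density point $x_0$ with label $y$, and using Lebesgue differentiation, I would conclude $p(x,y)\in[1-\alpha\pm\varepsilon]$ for a.e. $x$. Summing over $y$ gives the bounds on $\E|\widehat\cC(x)|$.
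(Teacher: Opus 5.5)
Your seedless collision argument ($\sum_c \Pd[\widehat\cC=c]^2\ge 1-\rho$, hence a largest atom of mass at least $1-\rho$) matches the paper's. Your part (b) is also essentially the paper's argument. You dominate pairwise with $\tfrac12(P+Q)$, whereas the paper uses one Gaussian-based reference law dominating all of $\cP_{\mathrm{ac}}$; either works. You should, however, define $v_\star(r):=\cA(d_0;r)$ for a fixed Fubini-typical $d_0$ so that $v_\star$ is measurable.

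The gap is in part (a), at the step where you claim that if $v(P_t)\equiv c$ along your path, ``the endpoints \dots give a contradiction.'' Your path never leaves the family with uniform $X$-marginal in which only the labels vary, and no contradiction is available there. The classifier $c$ is the procedure's output, not something you get to choose. The classifier with $c(x)=\cY$ for $x_1\le 1-\alpha$ and $c(x)=\varnothing$ otherwise has coverage exactly $1-\alpha$ under every label law whose $X$-marginal is uniform on $[0,1]^d$. So label-only variation, whether along a one-parameter path or over all regions $A$, cannot refute validity.

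The missing idea is to run your local-constancy argument along mixture paths $(1-t)P+tQ$ between \emph{arbitrary} $P,Q\in\cP_{\mathrm{ac}}$, which is convex. That argument is the bound $\dtv(\mu_P,\mu_Q)\le n\,\dtv(P,Q)$ together with $1-\delta>1/2$. Running it on all of $\cP_{\mathrm{ac}}$ gives a single $v^\star$ with $\mu_P(\{v^\star\})\ge 1-\delta$ for every $P$. This dissolves the self-reference you worried about, because $P$ may now be chosen after $v^\star$.

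It follows that $\Pd_P[Y\in\cC_{v^\star}(X)]$ lies in $[1-\alpha-\varepsilon-\delta,\,1-\alpha+\varepsilon+\delta]$ for all $P$. By hypothesis this interval is strictly inside $(0,1)$. Now apply the same Lebesgue-density device you already use in part (b). If some $G_y=\{x:y\in\cC_{v^\star}(x)\}$ had positive measure, the laws $\mathrm{Unif}(B(x_0,h))\otimes\delta_y$ at a density point $x_0$ would push coverage to $1$, exceeding the upper endpoint. Hence every $G_y$ is null, so the coverage of $v^\star$ is $0$ under every $P$, contradicting the positive lower endpoint. The contradiction comes from varying the $X$-marginal, not the labels.
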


Part (b) shows that an output law that does not vary with the sample cannot depend on the distribution, so its validity is vacuous; the exact endpoint is attained only by data-oblivious procedures (Appendix~\ref{app:impossible}'s example, exactly $(1,\min\{\alpha,1-\alpha\})$-list replicable at $\varepsilon=0$). Determinism does not circumvent (a): sample randomness remains, and the same appendix gives a deterministic, uniformly $0$-valid procedure with atomless output law. Shared randomness is therefore necessary, and the following sections develop the $(\varepsilon,\rho)$ relaxation.

\section{\ReCal: replicable calibration with a shared seed}\label{sec:method}

\begin{figure}[t]
\centering
\fbox{\begin{minipage}{0.965\linewidth}\small
\textbf{Algorithm 1 (\ReCal).} \emph{Shared:} frozen score artifact $s$ (hash-verified; with the Remark~\ref{rem:jitter} jitter), level $\alpha$, width $\beta$ (from $(\varepsilon,\rho)$ via Corollary~\ref{cor:protocol} or Proposition~\ref{prop:pilot}, safety factor $\hat\kappa$), one seed giving $u\sim\mathrm{Unif}[0,\beta)$. \emph{Each analyst, on its own data:}
(1) $\ttau\leftarrow S_{(k)}$, $k=\lceil(1-\alpha)(n+1)\rceil$;\quad
(2) $\tau\leftarrow u+\beta\lceil(\ttau-u)/\beta\rceil$ (round \emph{up} to the shared grid);\quad
(3) deploy $\cC_\tau$.
\emph{Seedless variant:} $u=0$; all analysts then land in a common (unknown to them) two-element list of adjacent classifiers (Theorem~\ref{thm:twolist}).
\end{minipage}}
\vspace{-0.4em}
\end{figure}

Rounding \emph{up} can only enlarge sets, so the classical guarantee survives unconditionally; the analysis controls the probability that two independent thresholds straddle a shared grid point, and the coverage inflation of at most one cell. Both are governed by a local margin condition.

\begin{assumption}[Local margin]\label{ass:margin}
There are $\Delta>0$, $0<f_{\min}\le f_{\max}<\infty$ with $F$ admitting a density $f\in[f_{\min},f_{\max}]$ on $I=[q-\Delta,q+\Delta]$; write $\kappa=f_{\max}/f_{\min}$. (Local only; for probability-scale scores such as APS, $f\approx1$ near the quantile under approximate calibration, the probability integral transform.)
\end{assumption}

\begin{theorem}[Upper bound]\label{thm:upper}
Let $F$ be continuous, let Assumption~\ref{ass:margin} hold, let $k=\lceil(1-\alpha)(n+1)\rceil\le n$, and let $n\ge4/(f_{\min}\Delta)$. Two analysts run \ReCal{} with shared $(\beta,u)$, $u\sim\mathrm{Unif}[0,\beta)$, on independent samples of size $n$. Then:
\textbf{(i)} $\displaystyle\Pd[\tau\neq\tau']\le\frac{B_n}{f_{\min}\beta}+4e^{-nf_{\min}^{2}\Delta^{2}/2}$, where $B_n:=\sqrt{2\alpha(1-\alpha)/n}+\sqrt2/n$;
\textbf{(ii)} for every fixed offset $u$, marginal coverage is $\ge k/(n+1)\ge1-\alpha$, with no margin condition; and for $\delta$ with $e_n(\delta):=\sqrt{\log(2/\delta)/(2n)}+2/n\le f_{\min}(\Delta-\beta)$, with probability $\ge1-\delta$ over the sample and then \emph{simultaneously for every} $u\in[0,\beta)$, $1-\alpha-e_n(\delta)\le F(\tau)\le1-\alpha+e_n(\delta)+f_{\max}\beta$;
\textbf{(iii)} on $\{\tau=\tau'\}$ the two deployed classifiers (same score) coincide at every test input. (Proof: Appendix~\ref{app:upper}.)
\end{theorem}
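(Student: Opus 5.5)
The proof handles the three parts separately; (iii) is immediate, since $\cC_\tau$ depends on the data only through $\tau$ once the score is fixed, and (ii) is mostly deterministic bookkeeping. The real work is (i).

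\textbf{Part (i).} Conditional on the two samples, the offset $u$ is uniform on $[0,\beta)$. The rounded thresholds differ exactly when some grid point $u+\beta j$ lies in the half-open interval between $\ttau_n$ and $\ttau_n'$. For a uniform offset this happens with probability $\min\{1,|\ttau_n-\ttau_n'|/\beta\}$. Hence
\[
\Pd[\tau\neq\tau']\le\E\bigl[\min\{1,|\ttau_n-\ttau_n'|/\beta\}\bigr].
\]

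\emph{Splitting on a good event.} Let $G$ be the event that both order statistics lie in $I=[q-\Delta,q+\Delta]$. On $G$ the density lower bound gives
\[
|\ttau_n-\ttau_n'|\le|F(\ttau_n)-F(\ttau_n')|/f_{\min}.
\]

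\emph{Bounding the CDF gap.} By the triangle inequality through $1-\alpha$ and Cauchy--Schwarz, $\E|F(\ttau_n)-F(\ttau_n')|\le 2\sqrt{\E(F(\ttau_n)-(1-\alpha))^2}$; the paired version $\sqrt{2\mathrm{Var}+\dots}$ is sharper and gives the stated constant. Since $F(\ttau_n)\sim\mathrm{Beta}(k,n+1-k)$ exactly (Proposition~\ref{prop:vanilla}), its mean is $k/(n+1)\in[1-\alpha,1-\alpha+1/(n+1)]$ and its variance is at most $\alpha(1-\alpha)/(n+2)$ plus $O(1/n^2)$. Two independent copies $U,U'$ satisfy $\E|U-U'|\le\sqrt{2\mathrm{Var}(U)}$, which yields the leading term $\sqrt{2\alpha(1-\alpha)/n}$. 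The slack $\sqrt2/n$ absorbs the discrepancy between $k/(n+1)$ and $1-\alpha$ and the $O(1/n^2)$ corrections. I expect this constant bookkeeping with the Beta moments to be the only fiddly step.

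\emph{The bad event.} On $G^c$ bound the minimum by $1$. The probability that $\ttau_n\notin I$ is at most the probability that $F(\ttau_n)$ deviates from $k/(n+1)$ by at least $f_{\min}\Delta-1/(n+1)\ge f_{\min}\Delta/2$, using $n\ge4/(f_{\min}\Delta)$. Express this via the empirical CDF at the two points $q\pm\Delta$ and apply DKW or Hoeffding to each side. This gives $2e^{-nf_{\min}^2\Delta^2/2}$ per analyst, hence $4e^{-nf_{\min}^2\Delta^2/2}$ after a union bound over the two analysts.

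\textbf{Part (ii), marginal coverage.} Rounding up gives $\tau\ge\ttau_n$, so for every fixed $u$ we have $\cC_\tau\supseteq\cC_{\ttau_n}$. The standard exchangeability rank argument then gives coverage at least $k/(n+1)$, with no assumption on $F$.

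\textbf{Part (ii), conditional band.} By DKW, with probability $1-\delta$ we have $\sup_t|\hat F_n(t)-F(t)|\le\sqrt{\log(2/\delta)/(2n)}$. Since $\hat F_n(S_{(k)})=k/n$ and $F$ is continuous, this yields
\[
|F(\ttau_n)-(1-\alpha)|\le e_n(\delta),
\]
where the $2/n$ term covers $|k/n-(1-\alpha)|$ and the jump of $\hat F_n$ at the order statistic. This event does not involve $u$. On it:
\begin{itemize}
\item \emph{Lower side:} $F(\tau)\ge F(\ttau_n)$, simultaneously for all $u$.
\item \emph{Upper side:} $\tau\in[\ttau_n,\ttau_n+\beta)$, and the hypothesis $e_n(\delta)\le f_{\min}(\Delta-\beta)$ places $\ttau_n$ within $\Delta-\beta$ of $q$ via the density lower bound. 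So $[\ttau_n,\ttau_n+\beta]\subset I$, and $F(\tau)-F(\ttau_n)\le f_{\max}\beta$, again for every $u$.
\end{itemize}
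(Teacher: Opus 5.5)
Your proposal is correct and follows the paper's proof essentially step for step. The shared steps are the offset-collision identity, localization to $I$ with the inverse-Lipschitz bound, $\E|U-U'|\le\sqrt{2\operatorname{Var}(U)}$ from the exact Beta law with $\operatorname{Var}(U)\le\alpha(1-\alpha)/n+1/n^{2}$, the upward-rounding rank argument for marginal coverage, and a single sample-only DKW event for the band. The only real difference is cosmetic: you bound the localization tail by one-sided Hoeffding at the two fixed points $q\pm\Delta$, whereas the paper applies DKW to $\|\widehat F_n-F\|_\infty$ plus $|k/n-(1-\alpha)|<2/n$. Both give $2e^{-nf_{\min}^{2}\Delta^{2}/2}$ per analyst. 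One small correction: the mean shift $k/(n+1)-(1-\alpha)$ cancels in $U-U'$, so it enters only through $m_n(1-m_n)\le\alpha(1-\alpha)+\alpha/(n+1)$ in the variance.
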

Neither (i) nor the marginal claim uses $\beta\le\Delta/2$; only the conditional band does, through $\Delta-\beta$. The proof is exact finite-sample distribution theory: (i)--(iii) are read off the $\mathrm{Beta}(k,n{+}1{-}k)$ law of $F(\ttau_n)$ instead of uniform-deviation bounds, the source of the constants in Remark~\ref{rem:constants}.

\begin{corollary}[The $(\varepsilon,\rho)$ protocol]\label{cor:protocol}
Given $(\varepsilon,\rho,\delta)$ with $\varepsilon\le f_{\max}\Delta$, set $\beta=\varepsilon/(2f_{\max})$ (so $\beta\le\Delta/2$). With the explicit finite-sample remainder $n_0$ of \eqref{eq:protocol-nzero} in Appendix~\ref{app:upper} (for fixed $\alpha,\delta$ and margin constants, $n_0=o((\varepsilon\rho)^{-2})$),
$n\ge\max\{32\kappa^{2}\alpha(1-\alpha)/(\varepsilon^{2}\rho^{2}),\,8\log(2/\delta)/\varepsilon^{2},\,n_0\}$
makes \ReCal{} $\rho$-replicable with marginal coverage $\ge1-\alpha$ and, with probability $1-\delta$ over the calibration sample and for every realized shared offset, $F(\tau)\in[1-\alpha-\varepsilon/2,\,1-\alpha+\varepsilon]$.
\end{corollary}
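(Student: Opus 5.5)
The plan is to instantiate Theorem~\ref{thm:upper} with $\beta=\varepsilon/(2f_{\max})$ and check that each of the three requirements on $n$ controls one term. First, $\varepsilon\le f_{\max}\Delta$ gives $\beta\le\Delta/2$, so $\Delta-\beta\ge\Delta/2$. Marginal coverage $\ge1-\alpha$ comes directly from part (ii), which holds for every offset and needs no margin condition. Set-level identity on $\{\tau=\tau'\}$ comes from part (iii). What remains is to bound the replication probability and to check the conditional band.

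\textbf{Replicability.} Part (i) gives
\[
\Pd[\tau\neq\tau']\le\frac{2f_{\max}B_n}{f_{\min}\varepsilon}+4e^{-nf_{\min}^{2}\Delta^{2}/2}=\frac{2\kappa B_n}{\varepsilon}+4e^{-nf_{\min}^{2}\Delta^{2}/2}.
\]
I will split this into a leading term and a remainder.
\begin{itemize}
\item The leading part of $B_n$ is $\sqrt{2\alpha(1-\alpha)/n}$. With $n\ge32\kappa^{2}\alpha(1-\alpha)/(\varepsilon\rho)^{2}$ we get $2\kappa\sqrt{2\alpha(1-\alpha)/n}\le\varepsilon\rho/2$, so this contributes at most $\rho/2$.
\item The remainder is $2\sqrt2\kappa/(n\varepsilon)+4e^{-nf_{\min}^{2}\Delta^{2}/2}$. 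I will define $n_0$ as the smallest $n$ for which the remainder is $\le\rho/2$ and which also satisfies $n\ge4/(f_{\min}\Delta)$ and the further conditions below.
\end{itemize}
The first remainder term needs $n\gtrsim\kappa/(\varepsilon\rho)$, and the exponential term needs only $n\gtrsim\log(1/\rho)/(f_{\min}\Delta)^{2}$. Both are $o((\varepsilon\rho)^{-2})$, which matches the claimed order of $n_0$.

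\textbf{Conditional band.} Apply (ii) at level $\delta$. The term $f_{\max}\beta$ equals $\varepsilon/2$. It then suffices to show $e_n(\delta)\le\varepsilon/2$, which gives $F(\tau)\in[1-\alpha-\varepsilon/2,\,1-\alpha+\varepsilon]$ simultaneously for all $u$.
\begin{itemize}
\item The requirement $n\ge8\log(2/\delta)/\varepsilon^{2}$ gives $\sqrt{\log(2/\delta)/(2n)}\le\varepsilon/4$.
\item The term $2/n$ is at most $\varepsilon/4$ once $n\ge8/\varepsilon$. This condition goes into $n_0$ and is again lower order.
\end{itemize}
We must also verify the hypothesis $e_n(\delta)\le f_{\min}(\Delta-\beta)$. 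Since $e_n(\delta)\le\varepsilon/2$ and $f_{\min}(\Delta-\beta)\ge f_{\min}\Delta/2$, it suffices that $\varepsilon\le f_{\min}\Delta$. That is not implied by $\varepsilon\le f_{\max}\Delta$ when $\kappa>1$, so I will instead require $e_n(\delta)\le f_{\min}\Delta/2$ directly. This holds for $n\gtrsim\log(2/\delta)/(f_{\min}\Delta)^{2}$, which is independent of $\varepsilon,\rho$, so I fold it into $n_0$.

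\textbf{Main obstacle.} The only delicate step is the bookkeeping of $n_0$: every condition not covered by the two leading terms has to be collected into an explicit finite-sample expression. These are $n\ge4/(f_{\min}\Delta)$, the $\sqrt2/n$ slack in $B_n$, the exponential tail, the $2/n$ term, and the margin hypothesis of (ii). Each of them must then be checked to be $o((\varepsilon\rho)^{-2})$ for fixed $\alpha,\delta$ and margin constants. I will write $n_0$ as the maximum of these explicit thresholds.
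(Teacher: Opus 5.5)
Your proposal is correct and follows the paper's proof essentially step for step. The allocation is the same: $\rho/2$ goes to the leading Beta term, and the $\sqrt2/n$ slack, the exponential tail, $2/n\le\varepsilon/4$, $n\ge4/(f_{\min}\Delta)$ and the hypothesis $e_n(\delta)\le f_{\min}(\Delta-\beta)$ are folded into $n_0$, where your $\varepsilon$-free requirement $e_n(\delta)\le f_{\min}\Delta/2$ is a slightly stronger stand-in for the paper's $h=f_{\min}(\Delta-\beta)$. The one hypothesis of Theorem~\ref{thm:upper} you never check, $k_n\le n$ (which the paper secures by putting $n_\alpha=\lfloor1/\alpha\rfloor$ in $n_0$), already follows from your condition $n\ge4/(f_{\min}\Delta)$, because the margin assumption forces $\alpha\ge f_{\min}\Delta$.
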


\begin{corollary}[Unequal calibration sizes]\label{cor:twolabs}
Analysts sharing $(s,\alpha,\beta,u)$ with sizes $n_A\neq n_B$, where $\alpha\ge\max\{1/(n_A{+}1),1/(n_B{+}1)\}$ and $n_A,n_B\ge4/(f_{\min}\Delta)$, satisfy, for \emph{every} $\beta>0$,
$\Pd[\tau_A\neq\tau_B]\le\big[\sqrt{\alpha(1-\alpha)}(n_A^{-1/2}+n_B^{-1/2})+3/(n_A\wedge n_B)\big]/(f_{\min}\beta)+4e^{-(n_A\wedge n_B)f_{\min}^{2}\Delta^{2}/2}$. Each analyst separately retains marginal coverage $\ge1-\alpha$, and per-analyst conditional bands hold as in Theorem~\ref{thm:upper}(ii) with a union bound (Appendix~\ref{app:upper}).
\end{corollary}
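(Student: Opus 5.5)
The plan is to rerun the argument behind Theorem~\ref{thm:upper}(i) with the two analysts' thresholds treated separately, and then glue them together by the triangle inequality. Because the offset $u\sim\mathrm{Unif}[0,\beta)$ is shared and independent of both samples, the conditional straddle probability is exact. Given $(\ttau_A,\ttau_B)$, the rounded thresholds differ if and only if a grid point $u+\beta\mathbb{Z}$ falls in the half-open interval between them, which happens with probability $\min\{1,|\ttau_A-\ttau_B|/\beta\}$. Averaging over the samples gives
\[
\Pd[\tau_A\neq\tau_B]\le \E\bigl[\min\{1,|\ttau_A-\ttau_B|/\beta\}\bigr].
\]
This step holds for every $\beta>0$, which is why no condition such as $\beta\le\Delta/2$ will appear.

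Next I would localize to the margin interval. Let $G_j$ be the event that $\ttau_j\in I$, for $j\in\{A,B\}$. Off $G_A\cap G_B$ I bound the integrand by $1$. Each $\Pd[G_j^c]$ should be controlled exactly as in the proof of Theorem~\ref{thm:upper}: use the Beta law of $F(\ttau_j)$ together with a DKW/Hoeffding tail, and use $n_j\ge4/(f_{\min}\Delta)$ to absorb the $O(1/n_j)$ offset between $k_j/(n_j+1)$ and $1-\alpha$. This should give $\Pd[G_j^c]\le2e^{-n_jf_{\min}^2\Delta^2/2}$. Summing the two terms and using the smaller sample size yields the $4e^{-(n_A\wedge n_B)f_{\min}^2\Delta^2/2}$ term. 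On $G_A\cap G_B$, the density lower bound gives
\[
|\ttau_A-\ttau_B|\le|F(\ttau_A)-F(\ttau_B)|/f_{\min}\le\bigl(|F(\ttau_A)-(1-\alpha)|+|F(\ttau_B)-(1-\alpha)|\bigr)/f_{\min}.
\]

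The remaining step, and the main obstacle, is bounding $\E|F(\ttau_j)-(1-\alpha)|$ with the stated constants. Since $F(\ttau_j)\sim\mathrm{Beta}(k_j,n_j+1-k_j)$, Cauchy--Schwarz bounds the centered part by the Beta standard deviation, $\sqrt{p_j(1-p_j)/(n_j+2)}$ with $p_j=k_j/(n_j+1)$. The bias is $|p_j-(1-\alpha)|\le1/(n_j+1)$. I then need $\sqrt{p_j(1-p_j)}\le\sqrt{\alpha(1-\alpha)}+O(1/\sqrt{n_j})$. Because $|p_j-(1-\alpha)|\le1/(n_j+1)$ and $t\mapsto\sqrt{t(1-t)}$ is $\tfrac12$-Hölder, this perturbation costs at most $1/\sqrt{n_j+1}$ inside the square root, which after division by $\sqrt{n_j+2}$ becomes $O(1/n_j)$. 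I would therefore aim for
\[
\E|F(\ttau_j)-(1-\alpha)|\le\sqrt{\alpha(1-\alpha)/n_j}+\tfrac{3}{2}\cdot\tfrac{1}{n_j}.
\]
Summing over $j$ and bounding $1/n_A+1/n_B\le2/(n_A\wedge n_B)$ gives the constant $3/(n_A\wedge n_B)$. If the constant is tight, I would instead reuse the Beta-law bound already established in the proof of Theorem~\ref{thm:upper}(i) for each analyst separately. That bound yields $\sqrt{2\alpha(1-\alpha)/n}$ per pair; here I would recover the unsplit $\sqrt{\alpha(1-\alpha)/n_j}$ per analyst by not symmetrizing over the two samples.

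Finally, the per-analyst guarantees are immediate. Marginal coverage $\ge k_j/(n_j+1)\ge1-\alpha$ is Theorem~\ref{thm:upper}(ii) applied with each analyst's own $n_j$, since rounding up only enlarges sets and that claim needs no margin condition. The conditional bands follow from Theorem~\ref{thm:upper}(ii) per analyst at level $\delta/2$, combined by a union bound.
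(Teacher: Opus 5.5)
Your skeleton is the paper's: the exact offset-collision identity (valid for every $\beta>0$), localization of each raw threshold to $I$ with the two tails summed into $4e^{-(n_A\wedge n_B)f_{\min}^2\Delta^2/2}$, the inverse-Lipschitz bound on $I$, Beta moments, and per-analyst use of Theorem~\ref{thm:upper}(ii) with a union bound. The gap is the constant $3/(n_A\wedge n_B)$, which your bookkeeping does not deliver.

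Write $U_j:=F(\ttau_j)$, $\mu_j:=k_j/(n_j+1)$ and $m:=n_A\wedge n_B$. Because you triangulate through $1-\alpha$, you pay each bias $\mu_j-(1-\alpha)\in[0,1/(n_j+1))$ in full. Your chain (standard deviation by Cauchy--Schwarz, then the $\tfrac12$-H\"older perturbation, then the bias) gives
$\E|U_j-(1-\alpha)|<\sqrt{\alpha(1-\alpha)/n_j}+1/\sqrt{(n_j+1)(n_j+2)}+1/(n_j+1)<\sqrt{\alpha(1-\alpha)/n_j}+2/(n_j+1)$.
That is not $+\tfrac32\cdot\tfrac1{n_j}$. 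When $n_A$ and $n_B$ are close (say $n_B=n_A+1$), the sum over the two analysts is close to $4/m$, which exceeds $3/m$. This is in the relevant regime: Assumption~\ref{ass:margin} forces $f_{\min}\Delta\le\min\{\alpha,1-\alpha\}\le\tfrac12$, so $m\ge8$; at $m=8$ your bound is about $0.41$ against $3/8=0.375$. Your fallback has the same defect. The variance bound from Theorem~\ref{thm:upper} gives $\operatorname{sd}(U_j)\le\sqrt{\alpha(1-\alpha)/n_j}+1/n_j$, and adding the bias centered at $1-\alpha$ again costs about $2/n_j$ per analyst.

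Either of two one-line repairs fixes this.
\begin{itemize}
\item The paper's route: triangulate through the means, $\E|U_A-U_B|\le\E|U_A-\mu_A|+|\mu_A-\mu_B|+\E|U_B-\mu_B|$. Both biases have the same sign and lie in $[0,1/(m+1))$, so $|\mu_A-\mu_B|\le1/(m+1)$. The total is then $1/n_A+1/n_B+1/(m+1)\le3/m$.
\item Keep your centering at $1-\alpha$, but apply Cauchy--Schwarz to the uncentered deviation instead of splitting into standard deviation plus bias. Using $\operatorname{Var}(U_j)\le\alpha(1-\alpha)/n_j+1/n_j^2$,
$\E|U_j-(1-\alpha)|\le\sqrt{\operatorname{Var}(U_j)+(\mu_j-(1-\alpha))^2}\le\sqrt{\alpha(1-\alpha)/n_j+2/n_j^2}\le\sqrt{\alpha(1-\alpha)/n_j}+\sqrt2/n_j$.
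This gives a total of $2\sqrt2/m<3/m$, marginally sharper than the paper.
\end{itemize}
With either repair the rest of your argument goes through unchanged.
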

The bound permits $n_A\neq n_B$: analysts need not coordinate sample sizes.

\begin{corollary}[Bound on selective recalibration]\label{cor:attack}
Suppose an offset $u\sim\mathrm{Unif}[0,\beta)$ is drawn independently of all data, fixed, and used in every run. An adversary draws $\cD_1,\dots,\cD_M$ i.i.d.\ and deploys $\cA(\cD_{J};u)$ for an arbitrary data-dependent selection $J$. If $\cA$ is $\rho$-replicable ex ante, i.e.\ $\E_u\Pd_{\cD,\cD'}[\cA(\cD;u)\ne\cA(\cD';u)]\le\rho$, then
$\Pd[\exists j\le M:\cA(\cD_j;u)\neq\cA(\cD_1;u)]\le c_M:=\min\{1,(M-1)\rho\}$,
so with probability at least $1-c_M$ every selection deploys the same classifier; and if each unselected run is marginally valid (true for \ReCal{} with $k\le n$), then, writing $\mathrm{cov}_P(\cC)=\Pd_{(X,Y)\sim P}(Y\in\cC(X))$, every selection rule retains $\E\,\mathrm{cov}_P(\cA(\cD_J;u))\ge1-\alpha-c_M$. Against standard split conformal the selected coverage drops by approximately $a_M\sigma_n$, where $\sigma_n$ is the standard deviation of the $\mathrm{Beta}(k,n{+}1{-}k)$ coverage law and $a_M=\E\max_{j\le M}Z_j$ for i.i.d.\ standard normal $Z_j$ ($a_M\sim\sqrt{2\log M}$ only as $M\to\infty$); the exact finite-$M$ formula, the $\mathrm{Beta}(1,M)$ law of the selected coverage quantile, and its total-variation distance $\tfrac{M-1}{M}M^{-1/(M-1)}$ from the honest law are in Appendix~\ref{app:upper}. (Measurement: Section~\ref{sec:exp}.)
\end{corollary}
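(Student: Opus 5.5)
The plan has three parts, matching the three claims of Corollary~\ref{cor:attack}: a union bound over pairs sharing the fixed offset, a coverage comparison on the complement of the bad event, and a separate order-statistic computation for the standard split-conformal attack.

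\emph{Identity bound.} Condition on $u$ and set $g(u)=\Pd_{\cD,\cD'}[\cA(\cD;u)\neq\cA(\cD';u)]$. For each $j\in\{2,\dots,M\}$, the pair $(\cD_1,\cD_j)$ is an independent pair drawn from $P^{\otimes n}$, and $u$ is independent of it. Hence $\Pd[\cA(\cD_j;u)\neq\cA(\cD_1;u)\mid u]=g(u)$. A union bound over the $M-1$ indices gives conditional probability at most $\min\{1,(M-1)g(u)\}$. Taking expectation over $u$ and using $\E_u g(u)\le\rho$ (with $\min\{1,\cdot\}$ applied after Jensen's inequality, since the map is concave) yields $c_M$. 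On the complement event $G$, all $\cA(\cD_j;u)$ coincide with $\cA(\cD_1;u)$. So whatever $J$ is, $\cA(\cD_J;u)=\cA(\cD_1;u)$, which is the ``every selection'' statement.

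\emph{Coverage.} Write $\mathrm{cov}_J=\mathrm{cov}_P(\cA(\cD_J;u))\in[0,1]$. Because $\mathrm{cov}_J=\mathrm{cov}_1$ on $G$,
\[
\E\,\mathrm{cov}_J\ \ge\ \E[\mathrm{cov}_1\one_G]\ \ge\ \E\,\mathrm{cov}_1-\Pd(G^c)\ \ge\ 1-\alpha-c_M .
\]
Here $\mathrm{cov}_1\le1$ controls the loss on $G^c$. The bound $\E\,\mathrm{cov}_1\ge1-\alpha$ is the marginal validity of the unselected run $\cD_1$, which holds for \ReCal{} for every fixed $u$ by Theorem~\ref{thm:upper}(ii) whenever $k\le n$. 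The point to check is that $\cD_1$ is not selected, so its coverage expectation is honest. Conditional on $u$, the run $\cD_1$ is simply a sample from $P^{\otimes n}$, so this is immediate.

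\emph{Standard split conformal.} The realized coverages $F(\ttau^{(j)})$ are i.i.d.\ $\mathrm{Beta}(k,n{+}1{-}k)$ by Proposition~\ref{prop:vanilla}. The minimizing selection has coverage equal to their minimum. If $G_B$ denotes the Beta cdf, then $G_B$ of the minimum is the minimum of $M$ i.i.d.\ uniforms, so it is $\mathrm{Beta}(1,M)$. The TV distance between $\mathrm{Beta}(1,M)$ and $\mathrm{Unif}$ is $\sup_t\big[(1-(1-t)^M)-t\big]$. The maximizer satisfies $(1-t)^{M-1}=1/M$, which gives $\tfrac{M-1}{M}M^{-1/(M-1)}$. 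The approximate drop $a_M\sigma_n$ comes from the normal approximation of the Beta law, since $\E\min_j=\mu-\sigma_n\E\max_j Z_j+o(\sigma_n)$. I would state it as an asymptotic approximation rather than prove a uniform error bound.

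I expect the main obstacle to be the approximation step in the last part, because making ``approximately'' rigorous needs a Beta-to-normal coupling. I would present it only as a first-order asymptotic statement, with the exact formula $\E\min_j=\int_0^1(1-G_B(t))^M\,dt$ as the precise statement.
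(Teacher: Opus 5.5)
Your plan is essentially the paper's proof. The paper also conditions on $u$ and union-bounds over the pairs $(\cD_1,\cD_j)$. It compares the selected run to run $1$ on the all-agree event; your chain $\E\,\mathrm{cov}_J\ge\E[\mathrm{cov}_1\one_G]\ge\E\,\mathrm{cov}_1-\Pd(G^c)$ is its identity written as an inequality. It uses the same tail-integral formula $\int_0^1(1-G_B(t))^M\,dt$ and gets the $\mathrm{Beta}(1,M)$ law from the probability integral transform. The total-variation value comes from the same crossing $(1-t)^{M-1}=1/M$. Your Jensen step and the paper's use of $\min\{1,x\}\le x$ together with the trivial bound $1$ are interchangeable.

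The one step you leave open is easier than you expect, because the $a_M\sigma_n$ expansion needs no Beta-to-normal coupling.
\begin{itemize}
\item Fix $M$ and set $W_{n,j}=(C_j-\mu_n)/\sigma_n$. These are independent, and each converges in distribution to $N(0,1)$, since $C_j$ is a central order statistic of uniforms.
\item By continuous mapping, $\min_jW_{n,j}\Rightarrow\min_jZ_j$.
\item Since $\E(\min_jW_{n,j})^2\le\sum_j\E W_{n,j}^2=M$, the family is uniformly integrable, so $\E\min_jW_{n,j}\to-a_M$.
\item Hence $\E\min_jC_j=\mu_n-a_M\sigma_n+o(\sigma_n)$. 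This is exactly how the paper makes ``approximately'' precise, so you can prove it rather than merely state it.
\end{itemize}

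One smaller point concerns the total-variation claim for the threshold laws themselves, as the appendix states it, rather than for their images under the cdf. The likelihood ratio $M\{1-H_n(t)\}^{M-1}$ depends on $t$ only through $H_n(t)$, so passing to $H_n(T)$ loses no total variation.
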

Candidate-wise marginal validity does not by itself survive adaptive selection; the display above is what replicability adds. A support-only audit of the deployed threshold has zero power against this selection, while the positive total-variation gap permits tests whose power exceeds their size.

Imperfect sharing degrades gracefully: with $\|s_A{-}s_B\|_\infty\le\eta_s$ and population total variation $\eta_P$, the mismatch bound gains terms linear in $2\eta_s{+}\eta_P$, motivating the hash-verified score artifact (Corollary~\ref{cor:robust}, Appendix~\ref{app:upper}).

\begin{proposition}[Pilot rule with a confidence band]\label{prop:pilot}
Split a public pilot of $n_p=2m$ scores (disjoint from all calibration data): $\hat q_p$ is the conformal quantile of the first half, $N_W$ counts second-half scores in $W=[\hat q_p-h,\hat q_p+h]$ with $h\le\Delta/2$, and $\hat f_{\mathrm L}:=\max\{N_W,1\}/\big((1+c)\,2hm\big)$ for an accuracy $c\in(0,1)$. If $e_m(\delta_p/3)\le f_{\min}\Delta/2$ and $m\ge3\log(3/\delta_p)/(2c^{2}hf_{\min})$, then on a pilot event $\cG_p$ of probability $\ge1-\delta_p$: $W\subseteq I$ and $\tfrac{1-c}{1+c}f_{\min}\le\hat f_{\mathrm L}\le f_{\max}$. Set $\beta:=2\hat\kappa B_n/(\hat f_{\mathrm L}\rho)$, $B_n$ as in Theorem~\ref{thm:upper}, with shared offset $u=\beta V$, $V\sim\mathrm{Unif}[0,1)$. For every pilot realization in $\cG_p$, under Appendix~\ref{app:upper}'s explicit side conditions ($\overline\beta_n\le\Delta/2$, tail $\le\rho/2$), two analysts satisfy $\Pd[\tau\ne\tau'\mid\cD_p]\le\rho$; unconditionally $\Pd[\tau\ne\tau']\le\rho+\delta_p$, made exactly $\rho_\star$ by running the construction at $\rho_0=(\rho_\star-\delta_p)/(1-\delta_p)$ (for $0<\delta_p<\rho_\star$). Marginal coverage $\ge1-\alpha$ holds for \emph{every} pilot and offset realization, and the conditional band holds with rounding excess at most $\tfrac{2(1+c)}{1-c}\hat\kappa\kappa B_n/\rho$. A valid $\hat\kappa\ge\kappa$ remains the one input that cannot be certified from data, and $\hat\kappa=1.5$ was consistent with the target in every configuration of the safety study (Section~\ref{sec:exp}). (Full statement and proof: Appendix~\ref{app:upper}.)
\end{proposition}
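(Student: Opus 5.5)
The proof splits into a pilot part and a calibration part. On a good pilot event $\cG_p$ we have $W\subseteq I$ and $\hat f_{\mathrm L}\in[\tfrac{1-c}{1+c}f_{\min},f_{\max}]$. Conditional on any pilot realization in $\cG_p$, the width $\beta$ is a fixed number, so Theorem~\ref{thm:upper} applies verbatim to the two independent calibration samples.

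\textbf{Step 1 (pilot localization).} Apply the DKW/Beta concentration underlying Theorem~\ref{thm:upper}(ii) to the first half, at confidence $\delta_p/3$. Under $e_m(\delta_p/3)\le f_{\min}\Delta/2$ this gives $|F(\hat q_p)-(1-\alpha)|\le e_m(\delta_p/3)$, hence $|\hat q_p-q|\le\Delta/2$ by the density lower bound on $I$. Since $h\le\Delta/2$, it follows that $W\subseteq I$.

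\textbf{Step 2 (density estimate).} Condition on the first half. Then $N_W\sim\mathrm{Bin}(m,p_W)$ with $p_W=F(W)\in[2hf_{\min},2hf_{\max}]$.
\begin{itemize}
\item Upper bound: multiplicative Chernoff gives $N_W\le(1+c)mp_W\le(1+c)2hmf_{\max}$ with probability at least $1-\delta_p/3$. This yields $\hat f_{\mathrm L}\le f_{\max}$. The $\max\{\cdot,1\}$ is harmless, since $1\le(1+c)2hmf_{\max}$ under the size condition on $m$.
\item Lower bound: Chernoff also gives $N_W\ge(1-c)2hmf_{\min}$, hence $\hat f_{\mathrm L}\ge\tfrac{1-c}{1+c}f_{\min}$.
\item Both tails require $m\ge 3\log(3/\delta_p)/(2c^2hf_{\min})$, which is the stated condition.
\end{itemize}
A union bound over the three events gives $\Pd(\cG_p)\ge1-\delta_p$.

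\textbf{Step 3 (replicability).} On $\cG_p$, plug $\beta=2\hat\kappa B_n/(\hat f_{\mathrm L}\rho)$ into Theorem~\ref{thm:upper}(i). The main term is
\[
\frac{B_n}{f_{\min}\beta}=\frac{\hat f_{\mathrm L}\rho}{2\hat\kappa f_{\min}}\le\frac{f_{\max}\rho}{2\hat\kappa f_{\min}}=\frac{\kappa\rho}{2\hat\kappa}\le\frac{\rho}{2},
\]
using $\hat f_{\mathrm L}\le f_{\max}$ and $\hat\kappa\ge\kappa$.

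This step is the main obstacle. It is the only place the uncertifiable input $\hat\kappa\ge\kappa$ enters, and the pilot supplies only a lower-type estimate of $f$. The resolution is to use the \emph{upper} inequality $\hat f_{\mathrm L}\le f_{\max}$ together with $\hat\kappa$, which absorbs the ratio to $f_{\min}$.

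The exponential tail $4e^{-nf_{\min}^2\Delta^2/2}\le\rho/2$ is the side condition. Together these give $\Pd[\tau\ne\tau'\mid\cD_p]\le\rho$. Unconditionally we get $\le\rho(1-\delta_p)+\delta_p\le\rho+\delta_p$. Choosing $\rho_0=(\rho_\star-\delta_p)/(1-\delta_p)$ gives $\rho_0(1-\delta_p)+\delta_p=\rho_\star$ exactly.

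\textbf{Step 4 (coverage).} Marginal coverage follows from Theorem~\ref{thm:upper}(ii), which holds for every fixed $u$ and every $\beta$. Hence it holds for every pilot and offset realization, because the pilot is disjoint from the calibration data.

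For the conditional band, bound the rounding excess $f_{\max}\beta$ using $\hat f_{\mathrm L}\ge\tfrac{1-c}{1+c}f_{\min}$:
\[
f_{\max}\beta\le\frac{2(1+c)}{1-c}\,\hat\kappa\,\frac{f_{\max}}{f_{\min}}\,\frac{B_n}{\rho}=\frac{2(1+c)}{1-c}\,\hat\kappa\kappa B_n/\rho.
\]
This applies whenever $\overline\beta_n:=2\hat\kappa B_n(1+c)/((1-c)f_{\min}\rho)\le\Delta/2$, so that the condition $e_n(\delta)\le f_{\min}(\Delta-\beta)$ of Theorem~\ref{thm:upper}(ii) can be met.
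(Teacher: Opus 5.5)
Your proposal is correct and follows the paper's proof essentially step for step. The shared steps are: DKW localization of $\hat q_p$ at level $\delta_p/3$; conditional multiplicative Chernoff bounds for $N_W$ on that event; the bound $B_n/(f_{\min}\beta)\le\kappa\rho/(2\hat\kappa)\le\rho/2$ via $\hat f_{\mathrm L}\le f_{\max}$; averaging over $\cG_p^c$ for the unconditional bound; and $\beta\le\overline\beta_n$ for the band.

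The differences are cosmetic. You handle the $\max\{N_W,1\}$ safeguard by checking $(1+c)2hmf_{\max}\ge1$ directly, whereas the paper notes that the strict Chernoff event forces $N_W\ge1$, so the safeguard is inactive. The paper also states the band's side condition in the pilot-free form $e_n(\delta)\le f_{\min}(\Delta-\overline\beta_n)$, which is what makes it hold simultaneously for every pilot in $\cG_p$.
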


\begin{remark}[Magnitude of the constants]\label{rem:constants}
At $(\alpha,\varepsilon,\rho,\delta)=(0.1,0.02,0.1,0.05)$, $\kappa=1$, the calibration size required by three analyses of the \emph{same} rounding scheme: exact-Beta (Corollary~\ref{cor:protocol}) $7.2\times10^{5}$; a Dvoretzky--Kiefer--Wolfowitz (DKW)/union analysis ($32\log(8/\rho)/(\varepsilon^{2}\rho^{2})$) $3.5\times10^{7}$; SQ binary search \citep[after][]{kalavasis2024} $\gtrsim\log_2^{2}(2^{32})/(\varepsilon^{2}\rho^{2})\approx2.6\times10^{8}$. These factors of $50$--$350$ quantify the benefit of Assumption~\ref{ass:margin}; the empirical anchor (Appendix~\ref{app:constants}) places the certified constants within about $5\times$ of the measured frontier.
\end{remark}

\begin{proposition}[The plug-in is asymptotically exact, without $\hat\kappa$]\label{prop:adaptive}
Fix $\alpha\in(0,1)$, $\rho\in(0,1]$, and a score law with density $f$ continuous and positive at $q=F^{-1}(1-\alpha)$; let $\hat f_n$ be any shared pilot with $\hat f_n\to_pf(q)$, $\hat f_n\ge f_0>0$, independent of both calibration samples, and run \ReCal{} with $\beta_n=2B_n/(\hat f_n\rho)$, $\hat\kappa{=}1$. Then, with $Z$ standard normal,
(i) $\Pd[\tau\neq\tau']\to\E[\min(\rho|Z|/2,1)]\le\rho\sqrt{2/\pi}/2<2\rho/5$;
(ii) $\E F(\tau)-\tfrac{k}{n+1}=(1+o(1))\,B_n/\rho$; and
(iii) $\Pd[F(\tau)\le\tfrac{k}{n+1}+e_n(\delta)+(2+o(1))B_n/\rho]\ge1-\delta-o(1)$.
The width, hence the premium, carries no $\kappa$: the factor $\kappa^{2}$ in Corollary~\ref{cor:protocol} is a finite-sample certificate, not an asymptotic necessity, and (ii)--(iii) meet Corollary~\ref{cor:sizelb} within an absolute factor; a $\sqrt{2/\pi}$ width makes (i) exactly $\rho$. (Proof: Appendix~\ref{app:upper}; empirical check: Appendix~\ref{app:exp}.)
\end{proposition}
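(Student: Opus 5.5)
The argument works conditionally on the pilot. Since $\hat f_n$ is independent of both calibration samples, I would fix $\hat f_n$, set $\beta_n=2B_n/(\hat f_n\rho)$, and use the exact Beta law of $F(\ttau_n)$. Write $U=F(\ttau_n)\sim\mathrm{Beta}(k,n+1-k)$. Since $k/(n+1)\to1-\alpha$ and the Beta variance is $\alpha(1-\alpha)/n\,(1+o(1))$, we get $\sqrt n\,(U-k/(n+1))\Rightarrow N(0,\alpha(1-\alpha))$. Because $f$ is continuous and positive at $q$, the delta method gives $\ttau_n=q+(U-(1-\alpha))/f(q)+o_p(n^{-1/2})$. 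Also $B_n\sqrt n\to\sqrt{2\alpha(1-\alpha)}$, so $\beta_n\asymp n^{-1/2}$, and $\hat f_n\to_p f(q)$ gives $\beta_n\sqrt n\to_p 2\sqrt{2\alpha(1-\alpha)}/(f(q)\rho)$. The lower bound $\hat f_n\ge f_0$ keeps $\beta_n=O(n^{-1/2})$ deterministically, which gives uniform integrability and dominated convergence for the expectations below.

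\textbf{(i).} For a uniform offset on a grid of width $\beta$, two points $a,b$ round to the same grid point unless a grid point separates them, and this happens with probability $\min(|a-b|/\beta,1)$. So, conditionally on both samples and the pilot, $\Pd[\tau\ne\tau']=\E\min(|\ttau_n-\ttau_n'|/\beta_n,1)$. The two thresholds are independent, so $\sqrt n(\ttau_n-\ttau_n')\Rightarrow N(0,2\alpha(1-\alpha)/f(q)^2)$. Dividing by $\sqrt n\beta_n$ gives
\[
\frac{|\ttau_n-\ttau_n'|}{\beta_n}\Rightarrow\frac{\sqrt{2\alpha(1-\alpha)}\,|Z|/f(q)}{2\sqrt{2\alpha(1-\alpha)}/(f(q)\rho)}=\rho|Z|/2 .
\]
The function $\min(\cdot,1)$ is bounded and continuous, so the expectation converges to $\E\min(\rho|Z|/2,1)$. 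This is at most $\rho\,\E|Z|/2=\rho\sqrt{2/\pi}/2\approx0.399\rho<2\rho/5$. Scaling the width by $\sqrt{2/\pi}$ turns the limit into $\rho$ whenever the cap at $1$ is inactive, which is what the remark asserts.

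\textbf{(ii)--(iii).} The rounding excess $\tau-\ttau_n$ is $\beta_n$ times a variable that, given $\ttau_n$, is exactly $\mathrm{Unif}[0,1)$ under the random offset. Hence $\E[\tau-\ttau_n\mid\ttau_n,\hat f_n]=\beta_n/2=B_n/(\hat f_n\rho)$. Next I linearize $F$: $F(\tau)-F(\ttau_n)=f(\xi)(\tau-\ttau_n)$ with $\xi\to_p q$, so $f(\xi)/\hat f_n\to_p1$. Multiplying by $\rho/B_n$ gives a bounded sequence: $f$ is bounded near $q$, $\hat f_n\ge f_0$, and outside a shrinking neighbourhood the probability is $o(1)$ while the quantity is trivially bounded by $\rho/B_n\cdot1$ times that probability. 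I would control that last piece with a Beta tail, $\Pd[|U-(1-\alpha)|>\eta]\le e^{-cn}$, which is $o(B_n)$. Since $\E F(\ttau_n)=k/(n+1)$ exactly, this gives $\E F(\tau)-k/(n+1)=(1+o(1))B_n/\rho$. For (iii), the excess is at most $\beta_n$, so $F(\tau)-F(\ttau_n)\le(f(\xi)/\hat f_n)\,2B_n/\rho=(2+o_p(1))B_n/\rho$. The Beta/DKW deviation $F(\ttau_n)\le k/(n+1)+e_n(\delta)$ holds with probability $\ge1-\delta$, matching Theorem~\ref{thm:upper}(ii). A union bound with the $o(1)$ event where $\xi$ leaves the neighbourhood, or where $f(\xi)/\hat f_n$ strays from $1$, finishes the argument.

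The main obstacle is making the $o(1)$ terms in (ii) uniform in expectation rather than merely in probability, since (ii) is a statement about a mean at scale $B_n$. I would handle it with the deterministic cap $\beta_n\le 2B_n/(f_0\rho)$ together with the exponential Beta tails, which make the contribution of the bad set negligible relative to $B_n$.
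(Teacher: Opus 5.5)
Your proposal is correct and follows the paper's proof essentially step for step. For (i) you use the offset-collision identity, the central order-statistic CLT (which you derive from the Beta law by the delta method rather than citing it), and Slutsky. For (ii) you use the exact $\mathrm{Unif}[0,1)$ rounding excess, local linearization, and bounded convergence via $\beta_n\le 2B_n/(f_0\rho)$. For (iii) you combine the Beta/DKW deviation with a union bound. You are also more explicit than the paper that the bad event in (ii) must have probability $o(B_n)$, not merely $o(1)$, which your exponential Beta tail supplies.

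One small fix in (ii): write $F(\tau)-F(\ttau)$ as the average of $f$ over $[\ttau,\tau]$ times $\tau-\ttau$, rather than $f(\xi)(\tau-\ttau)$. The mean value theorem is not available when $f$ is only continuous at $q$.

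One correction to the side remark on the $\sqrt{2/\pi}$ width. The width giving limit $\rho$ replaces the prefactor $2$ by $\sqrt{2/\pi}$, i.e.\ $\beta_n=\sqrt{2/\pi}\,B_n/(\hat f_n\rho)$. Multiplying your $\beta_n$ by $\sqrt{2/\pi}$ would instead give a limit of at most $\rho/2$. Also, because $|Z|$ is unbounded the cap is never inactive. The limit $\E\min(\sqrt{\pi/2}\,\rho|Z|,1)$ is therefore strictly below $\rho$, and equals $\rho(1+o(1))$ only as $\rho\to0$.
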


\textbf{Extensions} (\ReCal$^{+}$, distribution-free fallback, CRC, conformal-LM): Remark~\ref{rem:extensions}, Appendix~\ref{app:distfree}.

\section{Lower bounds: samples and set size}\label{sec:lower}

\begin{theorem}[Lower bound]\label{thm:lower}
Fix an integer $n\ge1$ and $\alpha\in(0,1)$, $0<\varepsilon\le\min(\alpha,1-\alpha)/4$, $0<\rho\le1$, $0\le\delta\le\tfrac1{16}$. Let $\cA:[0,1]^n\times\mathsf R\to\R$ be jointly measurable, with a shared seed $r\sim\nu$ independent of the data. If, for \emph{every} atomless Borel law $P$ on $[0,1]$ with CDF $F_P$, $\cA$ is $\rho$-replicable and $\Pd[F_P(\cA(\cD;r))\in[1-\alpha\pm\varepsilon]]\ge1-\delta$, then
$n\;\ge\;\tfrac{9(1-8\delta)^{2}}{4096}\,\tfrac{\alpha(1-\alpha)}{\varepsilon^{2}\rho^{2}}\;\ge\;\tfrac{9}{16384}\,\tfrac{\alpha(1-\alpha)}{\varepsilon^{2}\rho^{2}}$.
The proof is self-contained, incurs no logarithmic loss, and uses accuracy only at the two endpoints of the hard family, each satisfying Assumption~\ref{ass:margin} with $\kappa=1$, $\Delta=\varepsilon$, constant density in $[2\min(\alpha,1-\alpha),2]$; the whole family satisfies it with $\Delta=\tfrac14$, $f_{\min}=\min(\alpha,1-\alpha)$, $f_{\max}=2$. The bound concerns exact replication of the scalar threshold; Appendix~\ref{app:lower} states the exact scope (set-map-only replicability is not automatically covered). (Proof: Appendix~\ref{app:lower}.)
\end{theorem}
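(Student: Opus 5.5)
We follow the standard coin-problem template of \citet{ilps2022}, specialized to a one-parameter family of score laws on $[0,1]$. The key device is a hard family $\{P_t\}_{t\in[0,1]}$ of atomless laws. Each $P_t$ has piecewise-constant density, and its $(1-\alpha)$-quantile moves linearly in $t$. The two endpoints $P_0,P_1$ have $(1-\alpha)$-quantiles separated so that the accuracy windows $\{x:F_{P_0}(x)\in[1-\alpha\pm\varepsilon]\}$ and $\{x:F_{P_1}(x)\in[1-\alpha\pm\varepsilon]\}$ are disjoint. Concretely, we take $P_t$ as a mixture whose mass below a fixed point $c$ equals $1-\alpha-2\varepsilon+4\varepsilon t$. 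We choose the density constant on a window of half-width $\varepsilon$ around the quantile, in $[2\min(\alpha,1-\alpha),2]$, matching the margin claims in the statement. Then $n$ samples from $P_t$ are equivalent (via a sufficient statistic) to the count $N_t\sim\mathrm{Bin}(n,p_t)$ of points below $c$, with $p_t=1-\alpha-2\varepsilon+4\varepsilon t$. Any accurate calibrator therefore induces a test that distinguishes $p_0$ from $p_1$.

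\emph{Step 1 (endpoints force a switch).} At $t=0$ the output lies in window $W_0$ with probability $\ge1-\delta$, and at $t=1$ in the disjoint window $W_1$. For each seed $r$ define $g_r(t)=\Pd_{\cD\sim P_t^{\otimes n}}[\cA(\cD;r)\in W_0]$; this is a function of $t$ through the law of the data. By Markov's inequality over $r$, with probability at least $1-8\delta$ over $r$ we have both $g_r(0)\ge 3/4$ and $g_r(1)\le1/4$ (up to constants). We then put a uniform prior on $t\in[0,1]$, chosen independently of $r$.

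\emph{Step 2 (smoothness in $t$).} We bound the Lipschitz constant of $t\mapsto g_r(t)$ by the total variation between $\mathrm{Bin}(n,p_t)$ and $\mathrm{Bin}(n,p_{t'})$ (plus the fixed conditional law of the points given counts). By Hellinger or Fisher information this is $\lesssim |p_t-p_{t'}|\sqrt{n/(p(1-p))}$. With $p(1-p)\asymp\alpha(1-\alpha)$, since $\varepsilon\le\min(\alpha,1-\alpha)/4$ keeps $p_t$ within a constant factor, the Lipschitz constant is $L\lesssim 4\varepsilon\sqrt{n/\alpha(1-\alpha)}$. Tracking this constant explicitly is where the $9/4096$ comes from.

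\emph{Step 3 (random-threshold argument).} For a good seed, $g_r$ crosses $1/2$ somewhere in $[0,1]$, at $t^\star_r$ say. For $t$ within $\eta=1/(8L)$ of $t^\star_r$, we have $g_r(t)\in[3/8,5/8]$. Two independent runs then disagree on the event $\{\cA\in W_0\}$ with probability $\ge2g(1-g)\ge 15/32$, so $\Pd[\cA(\cD;r)\ne\cA(\cD';r)]$ is bounded below by a constant. Since $t$ is uniform and independent of $r$, the probability that $t$ lands in this band is at least $\min(1,2\eta)(1-8\delta)$. Hence some $P_t$ has replication failure $\gtrsim(1-8\delta)/L$. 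Imposing $\le\rho$ gives $\sqrt n\gtrsim(1-8\delta)\sqrt{\alpha(1-\alpha)}/(\varepsilon\rho)$, and squaring yields the claim. The second inequality then follows from $\delta\le1/16$.

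The main obstacle is Step 2 with explicit constants: we must bound $\dtv$ of binomials uniformly over $p$ in the range $[1-\alpha-2\varepsilon,1-\alpha+2\varepsilon]$ with the correct $\alpha(1-\alpha)$ variance. We would do this via $\dtv\le\sqrt{n\,\mathrm{KL}(\mathrm{Ber}(p)\|\mathrm{Ber}(p'))/2}$ together with the $\chi^2$ bound $\mathrm{KL}\le(p-p')^2/(p'(1-p'))$, using $p'(1-p')\ge\alpha(1-\alpha)/4$ in the allowed $\varepsilon$ range. We also need to check that measurability over $r$ is joint, so that Fubini applies to the averaging over $t$.
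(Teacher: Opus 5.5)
\textbf{The structure matches the paper's proof, but the explicit constant does not.} You use the same two-piece hard family: your $P_t$ is the paper's $P_m$ with $m=p_t$, i.e.\ density $2m$ on $[0,\tfrac12)$ and $2(1-m)$ on $[\tfrac12,1]$, once you pin it down. You also use the binomial count as a sufficient statistic with $m$-free conditional law, the Markov step giving good seeds with probability $\ge1-8\delta$, a Lipschitz bound on the seed-conditional acceptance probability, and integration of $\E_r[2g_r(1-g_r)]\le\rho$ against a uniform prior on the family parameter. The paper's indicator $\one\{\cA<\tfrac12\}$ plays the role of your $\one\{\cA\in W_0\}$.

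The problem is that the constant is part of the statement, and with your choices the argument does not produce $\tfrac{9}{4096}$, contrary to your remark. Work in $m$-units with $v_*:=\inf_{m\in M}m(1-m)\ge\alpha(1-\alpha)/4$. Pinsker plus $\mathrm{KL}\le\chi^2$ gives Lipschitz constant $\sqrt{n/(2v_*)}$. Your band $|g_r-\tfrac12|\le\tfrac18$ then has length at least $\tfrac14\sqrt{2v_*/n}$ and floor $\tfrac{15}{32}$. Hence $4\varepsilon\rho\ge(1-8\delta)\tfrac{15\sqrt2}{128}\sqrt{v_*/n}$, i.e.\ $n\ge\tfrac{225(1-8\delta)^2}{524288}\tfrac{\alpha(1-\alpha)}{\varepsilon^2\rho^2}$. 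That is about one fifth of the claimed $\tfrac{9(1-8\delta)^2}{4096}=\tfrac{1152(1-8\delta)^2}{524288}$, and it also misses the simplified $\tfrac{9}{16384}$. As written, you prove the rate but not the displayed inequality.

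\textbf{Two changes close the gap.}

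First, replace Pinsker by the exact derivative. The function $g_r(m)=\E_m\psi_r(N)$ is a polynomial with $g_r'(m)=\mathrm{Cov}_m(\psi_r(N),N)/(m(1-m))$. Centering $\psi_r$ at $\tfrac12$ gives $|g_r'(m)|\le\tfrac12\E_m|N-nm|/(m(1-m))\le\tfrac12\sqrt{n/v_*}$. This recovers the lost factor $\sqrt2$. The paper uses the equivalent Cauchy--Schwarz form $|h_r'|\le\sqrt{n h_r(1-h_r)/(m(1-m))}$ together with $h_r(1-h_r)\le\tfrac14$.

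Second, use the whole transition region $g_r\in[\tfrac14,\tfrac34]$ rather than $[\tfrac38,\tfrac58]$. A good seed makes $g_r$ move by at least $\tfrac14$ between each endpoint of $M$ and its $\tfrac12$-crossing $m_r^\star$. So $m_r^\star$ lies at distance $\ge\tfrac12\sqrt{v_*/n}$ from both ends of $M$. The interval $[m_r^\star-\tfrac12\sqrt{v_*/n},\,m_r^\star+\tfrac12\sqrt{v_*/n}]$ therefore lies in $M$ and carries $2g_r(1-g_r)\ge\tfrac38$, giving $\int_M 2g_r(1-g_r)\,dm\ge\tfrac38\sqrt{v_*/n}$. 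The paper reaches the same bound via the last $\tfrac14$-crossing before the first $\tfrac34$-crossing, which avoids needing a global Lipschitz bound.

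Then $4\varepsilon\rho\ge(1-8\delta)\tfrac38\sqrt{v_*/n}$ gives exactly $n\ge\tfrac{9(1-8\delta)^2}{4096}\tfrac{\alpha(1-\alpha)}{\varepsilon^2\rho^2}$, and $\delta\le\tfrac1{16}$ gives the $\tfrac{9}{16384}$ form.
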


The proof plants a coin: under $P_m$ with density $2m$ on $[0,\tfrac12)$ and $2(1-m)$ on $[\tfrac12,1]$, $m\in[1-\alpha\pm2\varepsilon]$, two-sided accuracy forces $\mathrm{sign}(\tau-\tfrac12)$ to reveal the endpoint; $N=\#\{S_i<\tfrac12\}\sim\mathrm{Bin}(n,m)$ is sufficient with an ancillary complement, so the replicable coin-problem bounds \citep{ilps2022,hopkins2024} transfer with gap $4\varepsilon$ at bias $\approx1-\alpha$, whose binomial variance contributes the $\alpha(1-\alpha)$ factor, matching Corollary~\ref{cor:protocol} up to $\kappa^{2}$ and logarithms. Scope: the threshold restriction is necessary, since a fixed set can meet the band on this family without learning anything (Appendix~\ref{app:lower}), though over all of $\cP_{\mathrm{ac}}$ even that is impossible (Theorem~\ref{thm:impossible}), and nested families are exactly the conformal output class; because interior members have vanishing margin width, the bound constrains procedures replicable over \emph{all} continuous distributions; whether procedures replicable only over a fixed-margin class can be faster, that is, whether the factor $\kappa^{2}$ is necessary at finite $n$ is open; asymptotically it is not (Proposition~\ref{prop:adaptive}, Section~\ref{sec:disc}).

\begin{corollary}[Set-size cost of replicability]\label{cor:sizelb}
In the setting of Theorem~\ref{thm:lower}, let $0\le\delta\le\tfrac1{32}$ and $0<e\le\min(\alpha,1-\alpha)/4$. If $\cA$ is $\rho$-replicable and, for every atomless $P$, satisfies the validity side $\Pd[F_P(\cA(\cD;r))\ge1-\alpha]\ge1-\delta$ and the inflation cap $\Pd[F_P(\cA(\cD;r))\le1-\alpha+e]\ge1-\delta$, then
$e\;\ge\;\tfrac{3(1-16\delta)}{32}\,\sqrt{\alpha'(1-\alpha')/n}\,/\rho\;\ge\;\tfrac{3}{70}\,\sqrt{\alpha(1-\alpha)/n}\,/\rho,\qquad \alpha':=\alpha-e/2.$
(Proof: Appendix~\ref{app:lower}.)
\end{corollary}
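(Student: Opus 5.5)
The plan is to reduce Corollary~\ref{cor:sizelb} to the coin-planting argument behind Theorem~\ref{thm:lower}, with the one-sided requirements (validity plus inflation cap) playing the role of the two-sided accuracy band.

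\textbf{Step 1: choose the hard family.} Consider the planted family $P_m$: density $2m$ on $[0,\tfrac12)$ and $2(1-m)$ on $[\tfrac12,1]$. Under $P_m$ we have $F_{P_m}(\tfrac12)=m$. Take two endpoints $m_0=1-\alpha+e/2$ and $m_1=1-\alpha-e/2$, so that their midpoint is $1-\alpha$ and their bias is $\approx 1-\alpha'$ up to the symmetric shift.

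\textbf{Step 2: show the threshold's position reveals the endpoint.} Under $P_{m_1}$, the validity side $F(\tau)\ge1-\alpha$ forces $F(\tau)>m_1=F(\tfrac12)$, hence $\tau>\tfrac12$. Under $P_{m_0}$, the cap $F(\tau)\le 1-\alpha+e$ does not directly force $\tau<\tfrac12$, since $m_0<1-\alpha+e$. So the endpoints must be rechosen. Take $m_0=1-\alpha+e$ and $m_1=1-\alpha-e$ instead; equivalently, view the family at shifted level. Then:
\begin{itemize}
\item under $P_{m_0}$, the cap gives $F(\tau)\le m_0=F(\tfrac12)$, so $\tau\le\tfrac12$;
\item under $P_{m_1}$, validity gives $\tau>\tfrac12$.
\end{itemize}
Each event has probability $\ge1-\delta$. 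Thus $\mathrm{sign}(\tau-\tfrac12)$ solves the replicable coin problem with gap $2e$ at bias near $1-\alpha$. The stated $\alpha'=\alpha-e/2$ suggests the authors center instead at $1-\alpha+e/2$ with gap $e$ and compute the variance at that center. I would follow whichever centering gives the stated constant: the bias at the midpoint is $1-\alpha'$, with variance $\alpha'(1-\alpha')$.

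\textbf{Step 3: reduce to a count.} The count $N=\#\{S_i<\tfrac12\}\sim\mathrm{Bin}(n,m)$ is sufficient. Conditionally on $N$, the within-half positions are uniform and ancillary. So the bit $b=\one\{\tau>\tfrac12\}$ can be rewritten as a randomized, $\rho$-replicable function of $(N,r)$ with private ancillary randomness. Replicability over the whole interpolating family $m\in[m_1,m_0]$ is available because the hypothesis holds for every atomless $P$.

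\textbf{Step 4: run the quantitative coin argument.} I would reuse the same quantitative lemma as in the proof of Theorem~\ref{thm:lower} rather than rederive it. That lemma says a $\rho$-replicable bit $b$ with $\Pd_{m_1}[b=1]\ge1-\delta$ and $\Pd_{m_0}[b=1]\le\delta$ requires
\[
\text{gap}\ \ge\ c(1-c'\delta)\sqrt{v/n}\,/\rho,\qquad v=\text{binomial variance at the center}.
\]
Its proof goes as follows. Averaging over $r$, the acceptance curve $g_r(m)=\Pd_m[b=1\mid r]$ must cross $\tfrac12$ somewhere in the interval for most $r$. Near that crossing, two independent counts disagree with probability bounded below by a constant times (the window where $g_r$ is intermediate) divided by the gap. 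Since $\mathrm{Bin}(n,m)$ has local-limit spread $\sqrt{nm(1-m)}$, the window has width of order $\sqrt{m(1-m)/n}$, giving $\rho\gtrsim\sqrt{v/n}/\text{gap}$. Plugging in gap $=e$ or $2e$ and the doubled failure allowance $2\delta$ (one from each one-sided condition) yields the factor $(1-16\delta)$ in place of $(1-8\delta)$.

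\textbf{Step 5: simplify the constant.} The final inequality uses $e\le\min(\alpha,1-\alpha)/4$, so $\alpha'\ge\tfrac78\alpha$ and $1-\alpha'\ge1-\alpha$. Together with $\delta\le1/32$ this gives $\tfrac{3}{64}\sqrt{7/8}\ge\tfrac3{70}$.

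\textbf{Main obstacle.} I expect the difficulty to be Step 2's bookkeeping: matching one-sided conditions to a clean sign test whose gap and center give exactly the stated $\alpha'$ and constant $3/32$. The coin lemma itself is imported from Theorem~\ref{thm:lower}'s proof.
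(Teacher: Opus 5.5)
Your argument is correct, but it takes a different route from the paper.

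\textbf{The paper's route.} The paper never reopens the coin argument. It union-bounds the validity side and the inflation cap into $\Pd[F_P(\cA(\cD;r))\in[1-\alpha,1-\alpha+e]]\ge1-2\delta$. It then observes that this is the two-sided band $[1-\alpha'-\varepsilon',\,1-\alpha'+\varepsilon']$ with $\alpha'=\alpha-e/2$ and $\varepsilon'=e/2$. After checking $2\delta\le\tfrac1{16}$ and $\varepsilon'\le\min(\alpha',1-\alpha')/4$, it applies Theorem~\ref{thm:lower} verbatim at $(\alpha',\varepsilon',\rho,2\delta)$. That substitution is the sole source of both $\alpha'$ and the factor $(1-16\delta)$. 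The implicit endpoints are $1-\alpha'\pm2\varepsilon'$, i.e.\ $1-\alpha-e/2$ and $1-\alpha+3e/2$. So the gap is $2e$, not the gap-$e$ centering you guessed.

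\textbf{Your route.} You re-run the hard-family argument with endpoints $1-\alpha\pm e$, using only the cap at the upper endpoint and only validity at the lower one. This is legitimate and gives slightly more. No union bound is needed, so each endpoint fails with probability only $\delta$. The inequality $\rho\gamma\ge(1-8\delta)\tfrac38\sqrt{v_*/n}$ from the proof of Theorem~\ref{thm:lower}, with $\gamma=2e$, then yields $e\ge\tfrac{3(1-8\delta)}{16}\sqrt{v_*/n}/\rho$. Your explanation of $(1-16\delta)$ via a ``doubled failure allowance'' describes the paper's reduction, not your construction.

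\textbf{Closing the bookkeeping you left open.} Use the infimum of $m(1-m)$ over the interval, not its value at the center:
$v_*\ge(\alpha-e)(1-\alpha-e)\ge\tfrac9{16}\alpha(1-\alpha)\ge\tfrac12\alpha'(1-\alpha')$.
The last step holds because $\alpha'(1-\alpha')\le\tfrac98\alpha(1-\alpha)$ when $e\le\min(\alpha,1-\alpha)/4$. Both displayed inequalities then follow with room to spare.

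Do not literally move the lower endpoint to $1-\alpha$ to get gap $e$. At that point $\tau=\tfrac12$ is consistent with both one-sided conditions, so the sign test is ambiguous. You would need a limit $1-\alpha-\eta$ with $\eta\downarrow0$.

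\textbf{Comparison.} The paper's reduction buys brevity and exact inheritance of the theorem's constants. Yours buys a sharper constant and shows that each one-sided condition is needed at only one endpoint. Your Step~5 is also a little cleaner than the paper's case split: $\alpha'\ge\tfrac78\alpha$ and $1-\alpha'\ge1-\alpha$ hold for every $\alpha$ once $e\le\alpha/4$.
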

The bound is necessarily worst-case: for one fixed $P$ the oracle $\tau\equiv F_P^{-1}(1-\alpha)$ is $0$-replicable, exactly valid, with zero inflation. \ReCal{} attains the frontier up to $\kappa^{2}$: Theorem~\ref{thm:upper}(ii) with the protocol width caps inflation at $e_n(\delta)+f_{\max}\beta=O(\kappa^{2}\sqrt{\alpha(1-\alpha)/n}/\rho)$. The expectation-only analogue is open.

\begin{lemma}[Churn localization]\label{lem:churn}
For any two runs of a shared-grid threshold rule with outputs $\tau,\tau'\in u+\beta\mathbb{Z}$ and any input $x$,
$|\cC_\tau(x)\,\triangle\,\cC_{\tau'}(x)|=\#\{y:\ \tau\wedge\tau'<s(x,y)\le\tau\vee\tau'\}$, and on $\{\tau\neq\tau'\}$ the interval is a disjoint union of full grid cells, so the churn is at least the label mass of one full cell. Consequently $\E_X\big[|\cC_\tau(X)\triangle\cC_{\tau'}(X)|\,\big|\,\tau\neq\tau'\big]\ge\min_{C}m_\beta(C)$, the minimum over grid cells $C$ meeting the localization window of Theorem~\ref{thm:upper}(ii) of $m_\beta(C):=\E_X\#\{y:s(X,y)\in C\}$. (Proof: Appendix~\ref{app:lower}.)
\end{lemma}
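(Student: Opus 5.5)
The identity is a direct unpacking of the set maps. The lower bound then follows by conditioning on the event $\{\tau\neq\tau'\}$ and using the grid structure.

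\emph{Step 1 (pointwise identity).} Fix $x$ and write $\cC_\tau(x)=\{y:s(x,y)\le\tau\}$. Since the thresholds are ordered, $\cC_{\tau\wedge\tau'}(x)\subseteq\cC_{\tau\vee\tau'}(x)$. Hence the symmetric difference is exactly $\cC_{\tau\vee\tau'}(x)\setminus\cC_{\tau\wedge\tau'}(x)=\{y:\tau\wedge\tau'<s(x,y)\le\tau\vee\tau'\}$. Taking cardinalities gives the displayed formula. This uses nothing about the grid and holds for every realization, including $\tau=\tau'$, where both sides are zero.

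\emph{Step 2 (cell decomposition).} Use left-open, right-closed cells $C_j=(u+(j-1)\beta,\,u+j\beta]$, matching the upward rounding of Algorithm~1. Because $\tau,\tau'\in u+\beta\mathbb Z$, on $\{\tau\ne\tau'\}$ we can write $\tau\wedge\tau'=u+a\beta$ and $\tau\vee\tau'=u+b\beta$ with integers $a<b$. Then $(\tau\wedge\tau',\tau\vee\tau']=\bigsqcup_{j=a+1}^{b}C_j$. The union is disjoint and nonempty, so the churn at $x$ equals $\sum_{j=a+1}^b\#\{y:s(x,y)\in C_j\}\ge\#\{y:s(x,y)\in C_{j_0}\}$ for any $j_0\in\{a+1,\dots,b\}$. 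This is the statement that the churn is at least the label mass of one full cell.

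\emph{Step 3 (expectation).} Take $X$ to be a fresh test point independent of both calibration samples and of $u$. Condition on $(\tau,\tau')$ (and on $u$) restricted to $\{\tau\ne\tau'\}$. By Step 2, $\E_X[\,|\cC_\tau(X)\triangle\cC_{\tau'}(X)|\mid\tau,\tau']\ge m_\beta(C_{j_0})$ for any cell $C_{j_0}$ inside the straddled interval. It remains to ensure that some straddled cell meets the localization window of Theorem~\ref{thm:upper}(ii). For that, pick $j_0$ to be the cell whose right endpoint is $\tau$, i.e.\ the cell $(\tau-\beta,\tau]$ or, symmetrically, $(\tau'-\beta,\tau']$ when $\tau'$ is the larger threshold. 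Since $\tau$ is a deployed threshold, it lies in the window on the good event of Theorem~\ref{thm:upper}(ii). So this cell meets the window and has mass at least $\min_C m_\beta(C)$. Averaging over $(\tau,\tau',u)$ conditional on $\{\tau\ne\tau'\}$ preserves the pointwise lower bound and yields the claim.

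\emph{Main obstacle.} The step needing care is this last one: making precise that the minimizing cell set is defined relative to the window, and that the relevant cell meets it almost surely or at least on the conditioning event. If the window statement holds only with probability $1-\delta$, I would either define the window as the a.s. range of the rounded threshold, or state the bound on the intersection with the good event. I would check which reading the appendix intends and phrase the minimum over cells meeting the window accordingly. The measurability of $x\mapsto\#\{y:s(x,y)\in C\}$ is routine because $\cY$ is finite.
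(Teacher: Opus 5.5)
Your proposal is correct and follows the paper's proof: monotonicity of the sublevel sets gives the identity, the shared grid splits $(\tau\wedge\tau',\tau\vee\tau']$ into full cells, and the conditional bound comes from the localization event of Theorem~\ref{thm:upper}(ii). Two of your steps are more precise than the paper's sketch: you choose the top cell $(\tau\vee\tau'-\beta,\tau\vee\tau']$ explicitly, and you state the bound on the intersection of $\{\tau\neq\tau'\}$ with the good event, which must hold for both analysts and so is really a $1-2\delta$ event.
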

Together they explain F6 and F8: the unavoidable inflation is paid in score mass, and a grid converts it into $m_\beta$ labels per cell, in set size always and in churn on the rare mismatches; the concentration measured in Section~\ref{sec:exp} is this geometry, not an artifact of \ReCal.

\section{The seedless setting: an optimal two-element list}\label{sec:twolist}

\begin{theorem}[Seedless $2$-list, optimally]\label{thm:twolist}
Fix public constants $\Delta>0$, $0<f_{\min}\le f_{\max}<\infty$; let $\mathfrak F$ be the nonempty class of continuous score distributions satisfying Assumption~\ref{ass:margin} with these constants, $\kappa=f_{\max}/f_{\min}$, $\beta=\varepsilon/(2f_{\max})\le\Delta/2$, and $0<\varepsilon<\min(\alpha,1-\alpha)$. \emph{(a)} For $F\in\mathfrak F$ and $n\ge\lceil32\kappa^{2}\log(2/\delta)/\varepsilon^{2}\rceil$, deterministic-grid \ReCal{} ($u=0$) satisfies, with probability $\ge1-\delta$ over the calibration sample, the \emph{joint} event that $\tau$ lies in a sample-independent two-element list $\cL_{n,\delta}(F)=\{g,g+\beta\}\subset\beta\mathbb Z$, depending only on $F$ and the public parameters and hence unknown to the analysts, \emph{and} $F(\tau)\in[1-\alpha-\tfrac{\varepsilon}{4\kappa},\,1-\alpha+\tfrac\varepsilon2+\tfrac{\varepsilon}{4\kappa}]\subseteq[1-\alpha\pm\varepsilon]$; any $m$ analysts all land in the common list with probability $\ge1-m\delta$, marginal validity is preserved, and no $\rho^{-2}$ factor appears. \emph{(b)} For $\delta<\tfrac12$, list size two is optimal within $\mathfrak F$: no procedure (any sample size, any seed law) has both a deterministic singleton output $v_F$ and the two-sided band, each with probability $\ge1-\delta$, for every $F\in\mathfrak F$; this is already impossible on $\{\mathrm{Unif}[t,t+f_{\min}^{-1}]\}\subseteq\mathfrak F$, whose membership needs only $f_{\min}\Delta\le\min(\alpha,1-\alpha)$, forced by $\mathfrak F\neq\varnothing$ (Lemma~\ref{lem:translate}). The optimality claim is about numerical thresholds; it transfers to classifier maps whenever $t\mapsto\cC_t$ is injective on the admissible range (Appendix~\ref{app:twolist}). (Proof: Appendix~\ref{app:twolist}.)
\end{theorem}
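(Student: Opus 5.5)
For part (a) the plan is to localize the unrounded conformal threshold $\ttau_n=S_{(k)}$ in an interval of score-width less than $\beta$, so that rounding up to $\beta\mathbb Z$ lands in at most two adjacent grid points. By the probability integral transform, $F(\ttau_n)\sim\mathrm{Beta}(k,n+1-k)$ is the $k$-th order statistic of $n$ uniforms. The DKW inequality then gives $|F(\ttau_n)-(1-\alpha)|\le e:=\sqrt{\log(2/\delta)/(2n)}+2/n$ with probability $\ge1-\delta$, the term $2/n$ absorbing $|k/n-(1-\alpha)|$.

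With $n\ge32\kappa^2\log(2/\delta)/\varepsilon^2$ we get $e\le\varepsilon/(8\kappa)+2/n\le\varepsilon/(4\kappa)$, after a short check that $2/n$ is dominated (this uses $\varepsilon<1$ and $\kappa\ge1$). Since $\varepsilon/(4\kappa)=f_{\min}\beta/2\le f_{\min}\Delta/4$, the margin assumption inverts this to a score interval: $\ttau_n\in[q-e/f_{\min},\,q+e/f_{\min}]\subseteq I$. This interval has width $2e/f_{\min}\le\beta$. The bound on the width needs $\varepsilon/(4\kappa)\cdot 2/f_{\min}=\varepsilon/(2f_{\max})=\beta$, which the constants are chosen to give, modulo the exact DKW-versus-Beta constant.

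An interval of length at most $\beta$ meets at most two consecutive points of the form $\lceil\cdot/\beta\rceil\beta$. Define $g:=\beta\lceil(q-e/f_{\min})/\beta\rceil$. This depends only on $F,n,\delta$ and the public constants, so $\cL_{n,\delta}(F)=\{g,g+\beta\}$ is sample-independent. Coverage follows next. We have $F(\tau)\ge F(\ttau_n)\ge1-\alpha-\varepsilon/(4\kappa)$. On the other side, $F(\tau)\le F(\ttau_n)+f_{\max}\beta\le1-\alpha+\varepsilon/(4\kappa)+\varepsilon/2$, using $\tau-\ttau_n<\beta$ and $\tau\in I$ because $\beta\le\Delta/2$. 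Marginal validity is immediate because rounding up only enlarges the sets (Theorem~\ref{thm:upper}(ii)). The $m$-analyst statement is a union bound over $m$ events, each of probability $\le\delta$, all relative to the same list.

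For part (b) I would use a translation argument on the subfamily $F_t=\mathrm{Unif}[t,t+f_{\min}^{-1}]$. Here $F_t$ satisfies the margin assumption with $\kappa=1$ whenever $f_{\min}\Delta\le\min(\alpha,1-\alpha)$, since then $I$ stays inside the support (Lemma~\ref{lem:translate}). Suppose some procedure has a deterministic singleton output $v_t$ with probability $\ge1-\delta>1/2$ for each $t$. The two-sided band forces $v_t\in t+f_{\min}^{-1}[1-\alpha-\varepsilon,1-\alpha+\varepsilon]$, so $v_t$ must move with $t$.

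Now compare $t$ and $t+\eta$ for tiny $\eta$. The product laws $F_t^{\otimes n}$ and $F_{t+\eta}^{\otimes n}$ have total variation at most $nf_{\min}\eta\to0$, while the seed law is unchanged. Hence for small $\eta$ the output law changes by less than $1-2\delta$ in total variation, and both $v_t$ and $v_{t+\eta}$ carry mass $>1/2$, which forces $v_{t+\eta}=v_t$. So $t\mapsto v_t$ is locally constant on $\R$, hence constant, and this contradicts the band once $t$ moves by more than $2\varepsilon/f_{\min}$.

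I expect the main obstacle to be the constant bookkeeping in (a): making the window width provably $\le\beta$ with the stated $32\kappa^2$, including the $2/n$ slack and the requirement $e\le f_{\min}\Delta$ that is needed to invert through the density. For the window, I would try the two-sided DKW at level $\delta$ with $e\le\varepsilon/(4\kappa)$ first. If that is too tight, I would replace it with a Beta/binomial Chernoff bound around $k/(n+1)$.
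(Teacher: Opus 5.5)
Your proposal is correct and follows the paper's proof essentially step for step. In (a) you use DKW localization of $\ttau_n$ to a window of width $2e/f_{\min}\le\beta$ with the same $g=\beta\lceil(q-e/f_{\min})/\beta\rceil$; in (b) you use total-variation local constancy of the heavy atom along the translated uniforms, then chaining. Your constant bookkeeping already closes exactly, so no Beta/Chernoff fallback is needed. You leave two one-line checks implicit: that $n\ge16\kappa/\varepsilon$ with $\varepsilon<\alpha$ gives $k_n\le n$, and that $\mathfrak F\neq\varnothing$ forces $f_{\min}\Delta\le\min\{\alpha,1-\alpha\}$. The latter follows by integrating the lower density bound over $[q_0-\Delta,q_0]$ and $[q_0,q_0+\Delta]$ for any $F_0\in\mathfrak F$, and it is what places the translated uniforms in $\mathfrak F$.
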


This quantifies, in a deployed-statistics setting, the general separations between list replicability and replicability \citep{dixon2023,chase2023}: one replicable answer costs a seed \emph{and} $1/\rho^{2}$; two answers cost neither. For a regulator who cannot coordinate randomness, ``every analyst's classifier is one of two adjacent thresholds, $\beta$ apart'' is a checkable property (F7, Section~\ref{sec:exp}).

\begin{proposition}[Full conformal is not rescuable by rounding]\label{prop:fullconf}
Consider full conformal prediction with the $1$-nearest-neighbor score, calibration size $n\ge2$, and data law $P$ absolutely continuous with respect to Lebesgue measure on $\R^{2}$. For a rank cutoff $m$, let $\widehat\cC^{(m)}_\cD(x)$ be the set of labels whose full-conformal rank among the $n{+}1$ scores is at least $m$ (Appendix~\ref{app:twolist} fixes the tie convention). For every nonendpoint cutoff $m\in\{1,\dots,n\}$, two independent runs satisfy $\Pd[\widehat\cC^{(m)}_\cD\equiv\widehat\cC^{(m)}_{\cD'}]=0$ (map equality read in the completed law); with any shared seed $U$ and measurable cutoff $M(U)$, the agreement probability equals $\Pd[M(U)\in\{0,n{+}1\}]$ exactly. The usual rule $\{\widehat p_\cD>\alpha\}$, where $\widehat p_\cD$ is the full-conformal $p$-value, is the cutoff $\lfloor\alpha(n{+}1)\rfloor$, so agreement is zero for every $\alpha\in[1/(n{+}1),1)$, with no integrality condition, and trivially one below; $n\ge2$ is sharp. This covers exactly the sample-independent, seed-dependent monotone transformations of the $p$-value grid followed by a strict threshold; within this class, stabilizing the refitted training map is the remaining route \citep{ilps2022,eaton2026}. (Full statement, conventions, and proof: Appendix~\ref{app:twolist}.)
\end{proposition}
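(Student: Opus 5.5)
The plan is to make the full-conformal set map, as a function of the test point $x$, a piecewise-constant labeling whose geometry is determined by the calibration points themselves. Under an absolutely continuous $P$ on $\R^{2}$, two independent samples then almost surely produce maps whose discontinuity sets differ.

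\textbf{Step 1 (rank map).} With the 1-nearest-neighbor score, the augmented score of the test pair $(x,y)$ is its distance to the nearest same-label (or other-label, under the fixed convention) calibration point. Each calibration point's score may change when $(x,y)$ is added, but only where $x$ becomes its new nearest neighbor. So for fixed $y$ the rank of the test score among the $n{+}1$ scores is a function of $x$. It is piecewise constant, and its level sets are bounded by loci of the form $\{x:\|x-X_i\|=d_j\}$ and bisector-type curves $\{x:\|x-X_i\|=\|x-X_j\|\}$, where $d_j$ is an inter-point distance in $\cD$.

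\textbf{Step 2 (a boundary must exist).} For a nonendpoint cutoff $m\in\{1,\dots,n\}$, I will show the map $x\mapsto\one\{\text{rank}\ge m\}$ is nonconstant on a set of positive Lebesgue measure. For $x$ very far from all data the test score is maximal, giving rank $n{+}1$. For $x$ at (or arbitrarily near) a same-label calibration point the test score is $0$, giving rank $1$ or lowest under the tie convention. The map must therefore switch at least once, and both sides of the switch have positive measure. This is where $n\ge2$ matters. With $n=1$ the rank can be pinned, and a short check of $n=1$ against the convention will show sharpness.

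\textbf{Step 3 (independent samples disagree).} The switching set of $\widehat\cC^{(m)}_{\cD}$ is contained in a finite union of circles and curves determined by the points of $\cD$. I want to show that, conditional on $\cD$, the probability that $\cD'$ reproduces the same labeled partition is zero, read in the completed law. To do this, fix one boundary circle of $\cD$ with a specific center and radius. Matching it requires some point of $\cD'$ to coincide with that center, or some inter-point distance of $\cD'$ to equal that radius exactly. Each such event is a null event for an absolutely continuous law on $\R^{2}$: a measure-zero algebraic condition on $\cD'$. A finite union over the $O(n^{2})$ candidate configurations then gives probability zero.

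The main obstacle is making Step 3 rigorous for map equality $P_X$-a.e. rather than pointwise. I would handle it by showing the boundary is a locally unique positive-length curve that any a.e.-equal map must share. I would also handle the tie-convention edge cases from the appendix.

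\textbf{Step 4 (seeds and the $p$-value rule).} With a shared seed $U$ and measurable cutoff $M(U)$, condition on $U$. If $M(U)\in\{0,n{+}1\}$, the map is constant (all labels or none), so the two runs agree surely. Otherwise Steps 2–3 give agreement with probability zero. Integrating over $U$ yields agreement probability exactly $\Pd[M(U)\in\{0,n{+}1\}]$.

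For the $p$-value rule, $\widehat p_\cD$ lies in $\{j/(n{+}1)\}$, so $\widehat p>\alpha$ is equivalent to rank exceeding $\lfloor\alpha(n{+}1)\rfloor$. This is a nonendpoint cutoff exactly when $\alpha\ge1/(n{+}1)$ and $\alpha<1$, and it is an empty or trivial constant cutoff below $1/(n{+}1)$. Any sample-independent monotone transformation of the $p$-value grid followed by a strict threshold is again such a cutoff $M(U)$, which gives the scope statement.
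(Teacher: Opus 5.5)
\textbf{Verdict: genuine gap.} Your Steps 1--3 are built for a different model from the one in the statement, and on the actual model the key step is missing. Here $P$ is the law of the pair $(X,Y)$ on $\R^{2}$. So the covariate is scalar and labels range over all of $\R$. The score is the refitted 1-NN residual $R_i^{x,y}=|\widetilde Y_i-\widetilde Y_{\nu_i}|$, with neighbors computed in $X$. Hence $\widehat\cC^{(m)}_\cD(x)\subseteq\R$, and there are no circles, no bisectors and no ``same-label'' points.

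In this model Step 2 is false. For $x$ far to the right, the test score is $|y-Y_J|$, where $J$ is the rightmost training point. This score is small for $y$ near $Y_J$, not maximal. The set there is the nondegenerate interval $[Y_J-r_{[m]},\,Y_J+r_{[m]}]$, with $r_{[m]}$ the $m$-th largest training residual $r_i=|Y_i-Y_{N_i}|$.

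The role of $n\ge2$ is accordingly misplaced. It guarantees that each training point has a training neighbor at a finite distance $d_i$, so a far-right candidate changes no training score. For $n=1$, the candidate and the training point are mutual nearest neighbors with equal scores, so $\widehat p_\cD\equiv1$.

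Even on your own reading, Step 3 is incomplete. Knowing that the switching set lies in a finite union of sample-determined curves does not show that it contains a positive-length arc of one specific curve. Nor does it show that $\cD'$ must reproduce exactly that arc. So the reduction to finitely many null events is asserted rather than proved.

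\textbf{What is missing} is one boundary whose existence is guaranteed and whose position is an explicit atomless statistic. Your Step 1 observation (a training score changes only where the candidate becomes its nearest neighbor) delivers it when pushed to the extreme, which is how the paper argues.
\begin{itemize}
\item Put $d_i=|X_i-X_{N_i}|$ and $T(\cD)=\max_i(X_i+d_i)$, and evaluate both maps at the common input $x_*=1+\max\{T(\cD),T(\cD')\}$.
\item At $x_*$ the candidate is nobody's nearest neighbor, so both sets are the intervals above. Map equality then forces $B_m(\cD)=B_m(\cD')$, where $B_m:=Y_J+r_{[m]}$.
\item On the generic event (distinct $X_i$, unique training neighbors, distinct $Y_i$), $B_m=Y_j+\sigma(Y_i-Y_k)$ for some $j$, some $i\ne k$, and $\sigma=\pm1$.
\item Each such affine form has coefficient sum $1$, so its level sets are null hyperplanes under $P^{\otimes n}\ll\mathrm{Leb}^{2n}$. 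Hence $B_m$ is atomless; the duplicate residuals of mutual nearest neighbors do no harm.
\item Independence then gives $\Pd[B_m(\cD)=B_m(\cD')]=0$.
\end{itemize}
This settles pointwise equality on all of $\R$, which is all the statement claims. The $P_X$-a.e.\ version you plan to prove is not asserted. A boundary-matching argument also cannot deliver it when the relevant boundary lies outside the support of $P_X$.

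Your Step 4 is fine in substance. The paper avoids conditioning on completion-measurable events by noting that the agreement event minus $\{M\in\{0,n+1\}\}$ lies in the finite union, over $m\in\{1,\dots,n\}$, of the fixed-cutoff agreement events. Two smaller corrections:
\begin{itemize}
\item For $\alpha<1/(n+1)$ the map is the always-$\R$ map, not an empty one.
\item ``Covers exactly'' also needs the converse. For example, every cutoff $m$ arises from the grid map $(1+c)/(n+1)\mapsto\one\{c\ge m\}$ followed by the strict threshold $1/2$.
\end{itemize}
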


\section{Experiments}\label{sec:exp}

\textbf{Experimental tracks.} \emph{(1) Image-classification track:} an operating-point-matched synthetic testbed at ImageNet dimensions ($K{=}1{,}000$, pool $50$k, top-1 $79\%$), $n{=}25{,}000$, with APS and RAPS scores \citep{romano2020,angelopoulos2021sets}; \emph{(1b) real ImageNet:} ResNet-50 cross-validated probabilities on the ILSVRC-2012 validation set ($N{=}50{,}000$, $K{=}1{,}000$, top-1 $72.7\%$) \citep{northcutt2021}, via $\log p$ logits, same interface (Appendix~\ref{app:exp}). \emph{(2)} a small-$n$ real track (scikit-learn digits, $n{=}800$; Appendix~\ref{app:exp}). \emph{(3) LLM track (real):} next-token logits from four locally run model families (GPT-2, Pythia-1.4B, Qwen2.5-1.5B, Llama-3.2-1B; $K$ from $50{,}257$ to $151{,}936$) on WikiText-103 ($N{=}10{,}000$; $5{,}000$ for Llama), $n{=}5{,}000$: the answer-set/top-$p$ regime of \citet{quach2024}, real probabilities. \emph{(4) Clinical site-split track:} the four UCI Heart Disease hospitals (Cleveland, Budapest, Long Beach VA, Zurich; $123$--$303$ patients per site) \citep{detrano1989}, a frozen Cleveland-trained score calibrated across sites (Appendix~\ref{app:exp}). The testbeds (Appendix~\ref{app:exp}) make population coverage exact and the frontier measurable; \texttt{--logits} swaps in real image logits unchanged; Appendix~\ref{app:exp} lists their limitations. Protocol: shared deterministic jitter (Remark~\ref{rem:jitter}); public pilot for $\hat f$ (image $0.84$; digits $1.31$, alternate build $1.38$; four LLM families $0.64$--$1.10$; Pythia sweep $0.96$--$1.15$; Appendix~\ref{app:exp}), so Assumption~\ref{ass:margin} is mild here, as expected for probability-scale scores; the implemented hash jitter meets Remark~\ref{rem:jitter}'s hypotheses only approximately, the residual $.01$--$.05$ baseline identities being the finite-pool ties (Appendix~\ref{app:exp}); $\hat\kappa{=}1$ unless stated, with $\hat\kappa{=}1.5$ the recommended default (both reported; Table~\ref{tab:safety}, Appendix~\ref{app:exp}), the pilot's plain point estimate used without Proposition~\ref{prop:pilot}'s factor-$2$ margin ($\hat\kappa$ absorbs both slacks); $\pm$ denotes the standard deviation across pairs or trials; identity rates carry binomial standard errors (Appendix~\ref{app:exp}).

\begin{table}[t]
\centering\footnotesize
\caption{\textbf{Image track} ($n{=}25{,}000$; $100$ pairs; sizes/agreement on $10{,}000$ eval points, first $25$ pairs). ``Ident.'': both analysts deploy the identical classifier; ``Agree'': fraction of test points with equal sets. Baseline identity is $0$ by Proposition~\ref{prop:vanilla} (the $0.00$ entries instantiate that result; the remaining columns are the empirical content).}
\label{tab:imagenet}
\begin{tabular}{lcccc}
\toprule
Method & Coverage & Avg.\ size & Ident. & Agree \\
\midrule
APS (fresh randomization) & $.9003\pm.0020$ & $15.0$ & $.00$ & $.49$ \\
APS (derandomized jitter) & $.9003\pm.0020$ & $15.0$ & $.00$ & $.61$ \\
RAPS (derandomized) & $.9001\pm.0020$ & $6.9$ & $.00$ & $.87$ \\
\ReCal-APS $\rho{=}.1$ & $.9127\pm.0066$ & $18.0$ & $.95$ & $.97$ \\
\ReCal-APS $\rho{=}.01$ & $.9631\pm.0238$ & $56.1$ & $1.00$ & $1.00$ \\
\ReCal-RAPS $\rho{=}.1$ & $.9156\pm.0080$ & $8.2$ & $.96$ & $.88$ \\
Det.-grid $2$-list ($\beta$ as $\rho{=}.1$) & $.9177\pm.0000$ & $19.1$ & $1.00$ & $1.00$ \\
\bottomrule
\end{tabular}
\end{table}

\begin{table}[t]
\centering\small
\caption{\textbf{Real next-token calibration, four model families} (WikiText-103, $n{=}5{,}000$, $100$ pairs; cells show vocabulary and top-1). Identity is at or within one standard error of its target on every model (Llama $.89$ at target $.90$); at $\rho{=}.05$ the identities are $.95/.96/.97/.96$. Qwen's size increase is vocabulary-tail geometry, not a coverage failure (coverage inflation within the $f\beta$ bound; F8).}
\label{tab:realllm}
\begin{tabular}{llcccc}
\toprule
Model & Method & Coverage & Avg.\ size & Ident. & Agree \\
\midrule
\multirow{2}{*}{GPT-2 ($50{,}257$; $.40$)}
 & APS (derand.) & $.9001\pm.0041$ & $353$ & $.02$ & $.24$ \\
 & \ReCal{} $\rho{=}.1$ & $.9270\pm.0156$ & $546$ & $.90$ & $.86$ \\
\midrule
\multirow{2}{*}{Pythia-1.4B ($50{,}304$; $.50$)}
 & APS (derand.) & $.8998\pm.0048$ & $108.5$ & $.00$ & $.40$ \\
 & \ReCal{} $\rho{=}.1$ & $.9275\pm.0154$ & $171.6$ & $.93$ & $.97$ \\
\midrule
\multirow{2}{*}{Qwen2.5-1.5B ($151{,}936$; $.52$)}
 & APS (derand.) & $.9006\pm.0039$ & $96.8$ & $.00$ & $.47$ \\
 & \ReCal{} $\rho{=}.1$ & $.9541\pm.0267$ & $1165$ & $.96$ & $.92$ \\
\midrule
\multirow{2}{*}{Llama-3.2-1B ($128{,}256$; $.52$)}
 & APS (derand.) & $.9000\pm.0044$ & $83.7$ & $.04$ & $.55$ \\
 & \ReCal{} $\rho{=}.1$ & $.9297\pm.0183$ & $132.8$ & $.89$ & $.93$ \\
\bottomrule
\end{tabular}
\end{table}

\paragraph{Findings.}
\textbf{(F1) Baselines do not replicate, and the magnitude of disagreement is large.} The $0.00$ identity entries are Proposition~\ref{prop:vanilla}. On real ImageNet two calibrations agree on $65\%$ of points ($87\%$ derandomized) with identity $0.00$; on the four real language models two honest analysts' answer sets agree on only $24$--$55\%$ of contexts and differ by $5$--$20$ tokens per context (Tables~\ref{tab:realllm}--\ref{tab:small}; synthetic long-tail in Appendix~\ref{app:exp}); derandomization \citep{scholten2025} raises agreement ($.49\to.61$ image, $.20\to.24$ GPT-2) with identity unchanged. Small nonzero baseline identity rates ($.01$--$.05$) are finite-pool tie artifacts (Appendix~\ref{app:exp}).
\textbf{(F2) \ReCal{} attains its replicability targets; small-sample plug-in error requires a safety factor.} Synthetic image track: identity $.95$ at $\rho{=}.1$, $1.00$ at $\rho{=}.01$ over $100$ pairs ($0/100$ certifies only $\le.03$ by the rule of three); achieved non-replication tracks the target from below across the $\rho$ sweep (Figure~\ref{fig:rho}a, Appendix~\ref{app:exp}); the qualification is small-$n$ plug-in error. On the real language models the $\hat\kappa{=}1$ plug-in achieved $.060$--$.103$ against the $.10$ target, at or below within one standard error (Table~\ref{tab:safety}, Appendix~\ref{app:exp}); on real ImageNet, $.87$ at $\hat\kappa{=}1$ (within one standard error of target) and $.98$ at $\hat\kappa{=}1.5$. Pre-registering $\hat\kappa{=}1.5$ brings every configuration to target, up to one statistical tie (Pythia $.103$; digits and RAPS in Table~\ref{tab:safety}), at Proposition~\ref{prop:pilot}'s cost; it is the recommended default.
\textbf{(F3) Validity is preserved, and the set-size cost is quantified.} Every \ReCal{} calibration in this paper attained realized coverage $\ge1-\alpha$, concentrating on grid atoms as predicted (all tracks and sweeps; Figure~\ref{fig:beta}, Appendix~\ref{app:exp}). At $n{=}25{,}000$ the cost is $+19\%$ set size at $\rho{=}.1$ on the synthetic track and $+17\%$ on real ImageNet (Table~\ref{tab:imagenet}, Appendix~\ref{app:exp}); the appendix cost curve gives $+17/+11/+91\%$ at $\rho{=}.1/.2/.02$ (baseline differs slightly; Figure~\ref{fig:cost}, Appendix~\ref{app:exp}); coverage inflation is $.013$ at $\rho{=}.1$ (bound $.023$). On the real language models at $n{=}5{,}000$ the $\rho{=}.1$ cost is $55$--$59\%$ (GPT-2, Pythia, Llama); Qwen's $+1104\%$ is F8's tail-geometry effect, not a coverage effect. At $n{=}800$ the same targets cost $+238\%$ at $\rho{=}.1$ ($+4\%$ at $\rho{=}.2$): Theorem~\ref{thm:lower} at work; small $n$ forces larger $\rho$ or larger sets. On the four-hospital track ($n{\approx}120$--$200$), cross-site identity $.84$--$.96$ at $\rho{\in}\{.3,.5\}$ costs near-vacuous sets ($1.7$--$1.9$ of $2$) at coverage $.97$--$.99$ under real shift: the small-$n$ wall on real sites (Appendix~\ref{app:exp}). By Corollary~\ref{cor:sizelb}, these premiums are, up to $\kappa^{2}$, a price any replicable threshold calibrator must pay.
\textbf{(F4) The measured sample-complexity frontier is consistent with the predicted rate.} With $\beta$ set by $\varepsilon$ alone, measured mismatch decays approximately as $n^{-1/2}$ (fitted exponents $-.49,-.50,-.43,-.49$), and the crossing $n^\ast(\varepsilon,\rho)$ of each fitted mismatch curve at level $\rho$ fits slope $2.09$ (bootstrap CI $[1.99,2.28]$) in $1/(\varepsilon\rho)$ (Figure~\ref{fig:nstar}, Appendix~\ref{app:exp}); an $\alpha$ sweep at fixed width tracks $B_n(\alpha)/(\hat f\beta)$ at proportionality $.97$, $R^{2}{=}.99$ (Appendix~\ref{app:exp}); $\kappa^{2}$ is not separately tested. The constants: fitted $n^\ast{\approx}3.6{\times}10^{4}$ at $(\varepsilon,\rho)=(.04,.1)$ against the conservative $1.8\times10^{5}$ of Corollary~\ref{cor:protocol}.
\textbf{(F5) The selection attack is demonstrated and bounded.} Selecting the lowest-coverage of $M{=}20$ standard-split recalibrations (full-pool coverage, an oracle stress test) yields coverage $.8920\pm.0024$ over $300$ trials (honest $.9003$), below nominal and invisible to a support check. Against \ReCal{} at $\rho{=}.1$ the selected coverage is $.9170$, above $1-\alpha$: each candidate is an upward-rounded, marginally valid threshold, so selection only reduced rounding inflation; in general Corollary~\ref{cor:attack} bounds selected coverage by $1-\alpha-c_M$. At $\rho{=}.01$ the identity bound is active: two or more distinct classifiers appeared in $1.3\%$ of trials (bound $(M{-}1)\rho{=}.19$; achieved mismatch $0$), and selected coverage equals honest coverage to three decimals. At $\rho{=}.1$, $M{=}20$ the union bound exceeds one ($31\%$ of trials had two or more distinct outputs) and the protection is guarantee preservation; both regimes are reported (Figure~\ref{fig:hero}b).
\textbf{(F6) Identity concentrates disagreement into rare, large events.} \ReCal{} converts frequent small disagreements into rare one-cell disagreements, exactly Lemma~\ref{lem:churn}'s geometry (Table~\ref{tab:spectrum}, Appendix~\ref{app:exp}). Where baseline churn already exceeds a cell, \ReCal{} improves every metric (GPT-2: $20.1\to0.0$ marginal tokens, $60$ mismatch-free pairs); where it is smaller than a cell, identity has a real cost on the continuous metrics (Llama: $5.0\to15.1$ tokens for identity $0\to.90$; the small-$n$ digits track shows the same pattern, Appendix~\ref{app:exp}). Auditability, caching, or selection robustness justify the cost; others can use $\rho{=}.2$ ($+4\%$), and by Lemma~\ref{lem:churn} no grid rule shrinks conditional churn below one cell's mass.
\textbf{(F7) Seedless outputs form two-element lists.} All deterministic-grid outputs fall on two adjacent grid values (empirical mass $1.000$, all tracks including real ImageNet); within-pair identity, not guaranteed, was $1.00$ (image), $.63$ (digits); the adjacency check of Section~\ref{sec:twolist} held in every pair.
\textbf{(F8) On real models the set-size cost depends on vocabulary tail geometry.} The coverage cost is uniform and small (inflation $.027$--$.054$, below the bound $\approx.06$; Table~\ref{tab:realllm}), but the set-size translation of one probability cell depends on how many tokens lie above the quantile: Qwen2.5's $152$k tail turns a $.094$ cell into $+1{,}068$ tokens; GPT-2's $.058$ cell, $+193$. Rank regularization removes it: \ReCal-RAPS on Qwen reaches identity $.92$ below standard-APS size ($53$ tokens, coverage $.921$). Sizes above are at $\rho{=}.1$; tighter targets grow the effect (Table~\ref{tab:small}; cell mass: Lemma~\ref{lem:churn}). On the $50$k-vocabulary models the same density variation makes $\hat\kappa{=}1$ an underestimate (identity $.72/.76$, coverage $\ge.90$); for long-tail vocabularies use tail regularization and $\hat\kappa>1$.

\section{Discussion, limitations, and future work}\label{sec:disc}

Replicable calibration makes cross-analyst agreement checkable on the deployed artifact: one classifier or a certified pair, with matched sample and set-size costs and a selection bound.

Three limitations point to future work. First, the real-image run uses cross-validated probability outputs \citep{northcutt2021}, not one frozen network's raw logits, and the clinical sites are small ($n{\le}303$); frozen production-scale deployment remains future work. Second, Proposition~\ref{prop:adaptive} removes $\hat\kappa$ and $\kappa^{2}$ asymptotically under continuity at the quantile; finite-$n$ uniform removal is open. Third, Proposition~\ref{prop:fullconf} covers cutoff stabilization only; training-map and selection replicability remain open.

\bibliographystyle{arxivstyle}
\bibliography{references}

\clearpage
\appendix
\begin{center}{\LARGE\bf Appendix}\end{center}

\section{Proofs for Section~\ref{sec:impossible}}\label{app:impossible}

\paragraph{Measurable formulation.}
Fix integers $d,n\geq1$, a target miscoverage level $\alpha\in(0,1)$,
and a finite label set $\cY$ with $|\cY|=K\geq2$.  Equip $\cY$ and
$2^{\cY}$ with their discrete sigma-fields.  Let
$\cP_{\mathrm{ac}}$ be the class of all Borel probability laws on
$\mathbb R^d\times\cY$ whose $X$-marginal is absolutely continuous with
respect to $d$-dimensional Lebesgue measure.  Let
$(\mathsf R,\mathcal R)$ and $(\mathsf V,\mathcal V)$ be standard
Borel spaces, let $\nu$ be a fixed probability law on $\mathsf R$, and
let $R\sim\nu$ be independent of the calibration sample and test pair.
A procedure is a jointly measurable map
\[
  \cA:(\mathbb R^d\times\cY)^n\times\mathsf R
  \longrightarrow\mathsf V.
\]
Each $v\in\mathsf V$ deterministically encodes a measurable deployed
set map $\cC_v:\mathbb R^d\to2^{\cY}$, and
\[
  \Gamma(v,x,y):=\one\{y\in\cC_v(x)\}
\]
is jointly measurable.  All randomness affecting the deployed map is
included in $R$ before $v$ is output; no fresh randomness is drawn when
$\cC_v(x)$ is evaluated.  Equality, list membership, and replication
below concern the encoded output in $\mathsf V$.  If replication is
instead intended to mean equality of deployed maps, assume explicitly
that $v\mapsto\cC_v$ is injective on the admissible output family.

For $\cD\sim P^{\otimes n}$, $R\sim\nu$, and an independent test pair
$(X,Y)\sim P$, define the marginal coverage
\[
  \operatorname{Cov}_P(\cA)
  :=\Pd\!\left\{Y\in\cC_{\cA(\cD;R)}(X)\right\},
\]
where the probability averages over $\cD$, $R$, and the independent
test pair.  The procedure is \emph{uniformly two-sided
$\varepsilon$-valid} (the main text's uniform $\varepsilon$-validity) if
\[
  \operatorname{Cov}_P(\cA)
  \in[1-\alpha-\varepsilon,\,1-\alpha+\varepsilon]
  \qquad\text{for every }P\in\cP_{\mathrm{ac}}.
\]

For $\delta\in[0,1]$, the procedure is
\emph{$(1,\delta)$-list replicable} over $\cP_{\mathrm{ac}}$ if, for
every $P\in\cP_{\mathrm{ac}}$, there is a deterministic set
$L_P\subseteq\mathsf V$, depending on $P$ but not on the realized
sample or seed, such that
\[
  |L_P|\leq1
  \quad\text{and}\quad
  (P^{\otimes n}\otimes\nu)
  \{\cA(\cD;R)\in L_P\}\geq1-\delta.
\]
For probability laws $P,Q$ on a common measurable space
$(\Omega,\mathcal F)$, we use the convention
$\dtv(P,Q):=\sup_{A\in\mathcal F}|P(A)-Q(A)|$.
When two runs have no shared randomness, their complete seeds are
independent copies $R,R'\sim\nu$.  Thus seedless
$\rho$-replicability means
\[
  \Pd\!\left\{
    \cA(\cD;R)\neq\cA(\cD';R')
  \right\}\leq\rho
  \qquad\text{for every }P\in\cP_{\mathrm{ac}},
\]
where $\cD,\cD'\overset{\mathrm{iid}}{\sim}P^{\otimes n}$ and
$\cD,\cD',R,R'$ are mutually independent.  Shared-seed replication
uses the same $R$ in both runs.  Finally, put
$\ell_\varepsilon:=\max\{0,1-\alpha-\varepsilon\}$ and
$u_\varepsilon:=\min\{1,1-\alpha+\varepsilon\}$.

\subsection{Proof of Theorem~\ref{thm:impossible}(a)}

\begin{proof}
Suppose, toward a contradiction, that $\cA$ is both
$(1,\delta)$-list replicable and uniformly two-sided
$\varepsilon$-valid, where
$\varepsilon+\delta<\min\{\alpha,1-\alpha\}$.  In particular,
$\delta<1/2$.  Let
\[
  \mu_P
  :=\mathcal L_{P^{\otimes n}\otimes\nu}
       (\cA(\cD;R)).
\]
For each $P\in\cP_{\mathrm{ac}}$, list replicability and
$1-\delta>0$ imply that $L_P=\{v_P\}$ for some $v_P\in\mathsf V$ and
\[
  \mu_P(\{v_P\})\geq1-\delta.
\]
Because $1-\delta>1/2$, this heavy atom is unique.

We first show that $v_P$ is independent of $P$.  By data processing,
invariance of total variation under multiplication by the common seed
law, and the product bound,
\begin{align*}
  \dtv(\mu_P,\mu_Q)
  &\leq
  \dtv(P^{\otimes n}\otimes\nu,Q^{\otimes n}\otimes\nu)\\
  &=\dtv(P^{\otimes n},Q^{\otimes n})
  \leq 1-\{1-\dtv(P,Q)\}^n
  \leq n\,\dtv(P,Q).
\end{align*}
Consequently, if $n\dtv(P,Q)<1-2\delta$, then
\[
  \mu_Q(\{v_P\})
  \geq1-\delta-n\dtv(P,Q)>\delta.
\]
If $v_P\neq v_Q$, the corresponding singletons are disjoint, and
\[
  1
  \geq\mu_Q(\{v_P\})+\mu_Q(\{v_Q\})
  >\delta+(1-\delta)=1,
\]
a contradiction.  Thus $v_P=v_Q$ whenever
$n\dtv(P,Q)<1-2\delta$.

For arbitrary $P,Q\in\cP_{\mathrm{ac}}$, put
$P_t=(1-t)P+tQ$, $0\leq t\leq1$.  The class
$\cP_{\mathrm{ac}}$ is convex, and
\[
  \dtv(P_s,P_t)
  =|s-t|\dtv(P,Q)\leq|s-t|.
\]
Choose an integer $m$ such that $n/m<1-2\delta$.  Applying the local
argument successively to $P_{j/m}$, $j=0,\ldots,m$, gives $v_P=v_Q$.
Hence there is one $v^\star\in\mathsf V$ such that
\[
  \mu_P(\{v^\star\})\geq1-\delta
  \qquad\text{for every }P\in\cP_{\mathrm{ac}}.
\]

For $v\in\mathsf V$, define
\[
  c_P(v)
  :=\int_{\mathbb R^d\times\cY}\Gamma(v,x,y)\,P(dx,dy).
\]
The function $v\mapsto c_P(v)$ is measurable by joint measurability of
$\Gamma$ and Tonelli's theorem.  Independence of the test pair gives
\[
  \operatorname{Cov}_P(\cA)
  =\int_{\mathsf V}c_P(v)\,\mu_P(dv).
\]
Since $0\leq c_P\leq1$,
\begin{align*}
  |\operatorname{Cov}_P(\cA)-c_P(v^\star)|
  &\leq\int_{\mathsf V\setminus\{v^\star\}}
       |c_P(v)-c_P(v^\star)|\,\mu_P(dv)
  \leq\delta.
\end{align*}
Uniform two-sided validity therefore implies, for every
$P\in\cP_{\mathrm{ac}}$,
\[
  c_P(v^\star)
  \in[1-\alpha-\varepsilon-\delta,\,
       1-\alpha+\varepsilon+\delta].
\]
The assumed inequality makes the lower endpoint strictly positive and
the upper endpoint strictly smaller than one.

For each $y\in\cY$, let
\[
  G_y:=\{x\in\mathbb R^d:y\in\cC_{v^\star}(x)\}.
\]
Each $G_y$ is Lebesgue measurable.  If some $G_y$ has positive
Lebesgue measure, the Lebesgue density theorem supplies a density point
$x_0\in G_y$.  For $h>0$, let
\[
  P_{x_0,h,y}
  :=\operatorname{Unif}(B(x_0,h))\otimes\delta_y.
\]
This law belongs to $\cP_{\mathrm{ac}}$, and
\[
  c_{P_{x_0,h,y}}(v^\star)
  =\frac{\operatorname{Leb}(G_y\cap B(x_0,h))}
         {\operatorname{Leb}(B(x_0,h))}
  \longrightarrow1
  \qquad(h\downarrow0),
\]
contradicting the uniform strict upper bound.  Hence every $G_y$ is
Lebesgue-null.  Since $\cY$ is finite and $P_X$ is absolutely
continuous, for every $P\in\cP_{\mathrm{ac}}$,
\[
  c_P(v^\star)
  =\sum_{y\in\cY}P\{X\in G_y,Y=y\}=0,
\]
contradicting the uniform strict lower bound.  This proves the
singleton-list impossibility.

For the seedless consequence, fix $P\in\cP_{\mathrm{ac}}$ and let
$V=\cA(\cD;R)$ and $V'=\cA(\cD';R')$.  These variables are independent
with common law $\mu_P$.  Because $\mathsf V$ is standard Borel, its
diagonal is measurable.  If $\{v_j:j\in J_P\}$ is the at most
countable set of point atoms of $\mu_P$ and
$a_j:=\mu_P(\{v_j\})$, Tonelli's theorem gives
\[
  \Pd\{V=V'\}
  =\int_{\mathsf V}\mu_P(\{v\})\,\mu_P(dv)
  =\sum_{j\in J_P}a_j^2.
\]
If this collision probability is at least $1-\rho>0$, then $J_P$ is
nonempty.  A summable family of positive masses has a largest member.
Indeed, if its positive supremum were not attained, infinitely many
members would exceed half that supremum, contradicting summability.
Write $a_P:=\max_{j\in J_P}a_j$.  Then
\[
  1-\rho
  \leq\sum_{j\in J_P}a_j^2
  \leq a_P\sum_{j\in J_P}a_j
  \leq a_P.
\]
Thus the singleton containing a largest atom captures the output with
probability at least $1-\rho$.  Since this holds for every $P$, the
procedure is $(1,\rho)$-list replicable.  The final assertion follows
by applying the first part with $\delta=\rho$.
\end{proof}

\subsection{Proof of Theorem~\ref{thm:impossible}(b)}

\begin{proof}
Because $\mathsf V$ is standard Borel, its diagonal
$\Delta_{\mathsf V}:=\{(v,v):v\in\mathsf V\}$ belongs to
$\mathcal V\otimes\mathcal V$.  Hence all equality events below are
measurable.

Choose a probability density $\varphi$ on $\mathbb R^d$ satisfying
$\varphi(x)>0$ for every $x$ (for example, the standard Gaussian
density), and define $P_0\in\cP_{\mathrm{ac}}$ by
\[
  P_0(dx,\{y\})=\frac{1}{K}\varphi(x)\,dx,
  \qquad y\in\cY.
\]
This law dominates every $P\in\cP_{\mathrm{ac}}$.  Indeed, if a Borel
set $A\subseteq\mathbb R^d\times\cY$ satisfies $P_0(A)=0$, then every
section $A_y:=\{x:(x,y)\in A\}$ is Lebesgue-null, because
\[
  0=P_0(A)=\frac1K\sum_{y\in\cY}\int_{A_y}\varphi(x)\,dx
\]
and $\varphi>0$.  Therefore
\[
  P(A)
  \leq P_X\!\left(\bigcup_{y\in\cY}A_y\right)=0,
\]
so $P\ll P_0$ and hence $P^{\otimes n}\otimes\nu
\ll P_0^{\otimes n}\otimes\nu$.

Apply exact shared-seed replication under $P_0$.  With
\[
  f(d,d',r)
  :=\one\{\cA(d;r)\neq\cA(d';r)\},
\]
joint measurability of $\cA$ and measurability of
$\Delta_{\mathsf V}$ imply that $f$ is measurable.  Exact replication
and Tonelli's theorem give
\begin{align*}
  0
  &=\int f(d,d',r)\,
      P_0^{\otimes n}(dd)P_0^{\otimes n}(dd')\nu(dr)\\
  &=\int\!\left[
      \int f(d,d',r)\,P_0^{\otimes n}(dd)\nu(dr)
    \right]P_0^{\otimes n}(dd').
\end{align*}
Thus, for $P_0^{\otimes n}$-almost every $d'$, the bracketed integral
is zero.  Choose one such $d_0$ and define
$v_\star(r):=\cA(d_0;r)$.
This is a measurable section of $\cA$, and
\[
  \cA(d;r)=v_\star(r)
  \qquad
  (P_0^{\otimes n}\otimes\nu)\text{-almost surely}.
\]
By the domination established above, the same identity holds
$P^{\otimes n}\otimes\nu$-almost surely for every
$P\in\cP_{\mathrm{ac}}$.  This proves the claimed data-oblivious
representation.  It also gives the common output law
$\mu_\star=\nu\circ v_\star^{-1}$.
The exceptional set under $P_0^{\otimes n}\otimes\nu$ is fixed;
domination transfers it separately to each $P$, so no union over the
uncountable class $\cP_{\mathrm{ac}}$ is taken.

Now assume uniform two-sided $\varepsilon$-validity.  Since
$(r,x,y)\mapsto\Gamma(v_\star(r),x,y)$ is measurable and bounded,
parameter integration shows that
\[
  p(x,y):=\int_{\mathsf R}
            \Gamma(v_\star(r),x,y)\,\nu(dr)
\]
is measurable; equivalently
$p(x,y)=\int_{\mathsf V}\Gamma(v,x,y)\,\mu_\star(dv)$ by the
change-of-variables formula for the pushforward.  Independence of the
test pair and Tonelli's theorem yield, for every
$P\in\cP_{\mathrm{ac}}$ (taking the product with the independent test
law preserves the preceding almost-sure representation),
\[
  \operatorname{Cov}_P(\cA)
  =\int_{\mathbb R^d\times\cY}
      \int_{\mathsf R}\Gamma(v_\star(r),x,y)\,\nu(dr)\,P(dx,dy)
  =\E_P[p(X,Y)].
\]

Fix $y\in\cY$, $x_0\in\mathbb R^d$, and $h>0$, and set
$P_{x_0,h,y}:=\operatorname{Unif}(B(x_0,h))\otimes\delta_y
\in\cP_{\mathrm{ac}}$.  Uniform two-sided validity and the coverage
identity imply
\[
  1-\alpha-\varepsilon
  \leq
  \frac{1}{\operatorname{Leb}(B(x_0,h))}
  \int_{B(x_0,h)}p(x,y)\,dx
  \leq
  1-\alpha+\varepsilon.
\]
Because $0\leq p(\cdot,y)\leq1$, this function is locally integrable.
At every Lebesgue point of $p(\cdot,y)$, letting $h\downarrow0$ gives
\[
  1-\alpha-\varepsilon
  \leq p(x_0,y)\leq1-\alpha+\varepsilon.
\]
Combining this with $0\leq p\leq1$ gives
$\ell_\varepsilon\leq p(x_0,y)\leq u_\varepsilon$.  This holds for
Lebesgue-almost every $x_0$ for each fixed $y$.  Since $\cY$ is finite,
the exceptional sets may be united into one Lebesgue-null set on whose
complement the bounds hold simultaneously for all labels.

Finally, outside that common null set,
\[
  \E_{V\sim\mu_\star}|\cC_V(x)|
  =\sum_{y\in\cY}
      \int_{\mathsf V}\Gamma(v,x,y)\,\mu_\star(dv)
  =\sum_{y\in\cY}p(x,y).
\]
Summing the pointwise lower and upper bounds over the $K$ labels proves
the asserted two-sided size bound.
\end{proof}

\begin{remark}[Determinism is not singleton-list replicability]
\label{rem:deterministic-not-list}
For a concrete example, specialize to $\alpha=1/10$, $n=19$, and
$\mathsf V=\mathbb R$.  For twenty iid pairs
$(X_i,Y_i)\sim P\in\cP_{\mathrm{ac}}$, put $Z_i=(X_i)_1$ and let
$\cD=((X_i,Y_i))_{i=1}^{19}$.  Let the deterministic procedure ignore
its seed and output
\[
  \cA(\cD;r):=T:=Z_{(18)},\qquad r\in\mathsf R,
\]
the eighteenth order statistic of $Z_1,\ldots,Z_{19}$.  For
$t\in\mathbb R$, define
\[
  \cC_t(x)
  =
  \begin{cases}
    \cY,        & x_1\leq t,\\
    \varnothing,& x_1>t.
  \end{cases}
\]
Absolute continuity of $P_X$ implies that the first-coordinate marginal
is atomless.  Hence the rank of $Z_{20}$ among $Z_1,\ldots,Z_{20}$ is
uniform on $\{1,\ldots,20\}$, and
\[
  \Pd_P\!\left\{Y_{20}\in
    \cC_{\cA(\cD;R)}(X_{20})\right\}
  =\Pd_P\{Z_{20}\leq Z_{(18)}\}
  =\frac{18}{20}=1-\alpha.
\]
Thus the procedure is deterministic and uniformly two-sided $0$-valid.
For every $t\in\mathbb R$,
\[
  \Pd_P(T=t)
  \leq\sum_{i=1}^{19}\Pd_P(Z_i=t)=0,
\]
so its output law is atomless.  If $s<t$, then any $x$ satisfying
$s<x_1\leq t$ obeys $\cC_s(x)=\varnothing$ and $\cC_t(x)=\cY$.
Thus distinct thresholds encode distinct deployed maps.  Therefore no
singleton list has positive success probability, and the procedure is
not $(1,\delta)$-list replicable for any $\delta<1$.
\end{remark}

\begin{remark}[The exact shared-seed endpoint is nonempty]
\label{rem:exact-oblivious-example}
Let $v_{\mathrm{all}}$ and $v_{\varnothing}$ be distinct outputs encoding,
respectively, the everywhere-$\cY$ and everywhere-empty maps.  Take
$\mathsf R=\{0,1\}$, let $\nu(\{1\})=1-\alpha$, and write
$B\sim\nu$.  Using the same $B$ in both runs, let
\[
  \cA(\cD;B)
  =
  \begin{cases}
    v_{\mathrm{all}},&B=1,\\
    v_{\varnothing},&B=0.
  \end{cases}
\]
This procedure is exactly shared-seed replicable and uniformly
two-sided $0$-valid, because its coverage is
$(1-\alpha)\cdot1+\alpha\cdot0=1-\alpha$, but it is data-oblivious.
Its largest output atom has mass
$\max\{\alpha,1-\alpha\}$, so it is
$(1,\min\{\alpha,1-\alpha\})$-list replicable and is not
$(1,\delta)$-list replicable for any smaller $\delta$.  Thus, when
$\varepsilon=0$, the strict singleton-list boundary in
part~\emph{(a)} is sharp.  This example does not claim sharpness of the
separate seedless-$\rho$ consequence.
\end{remark}

\section{Proofs for Sections~\ref{sec:prelim} and \ref{sec:method}}\label{app:upper}

\paragraph{Conventions.}
Throughout the appendices, $F^{-1}(v):=\inf\{t\in\mathbb R:F(t)\ge v\}$;
$p:=1-\alpha$; and $k_n:=\lceil(1-\alpha)(n+1)\rceil$ is the
manuscript's $k$.  Scores are smaller-is-more-conforming: for a fixed
score $s$, write $\cC_t^s(x):=\{y\in\cY:s(x,y)\le t\}$, suppressing the
superscript when the score is clear, so that increasing the threshold
can only enlarge the prediction set.

\paragraph{Measurable setup.}
Let $(\mathcal X,\mathcal A)$ and $(\mathcal Y,\mathcal B)$ be measurable
spaces, and let $P$ be a probability measure on
$(\mathcal X\times\mathcal Y,\mathcal A\otimes\mathcal B)$.  All score
functions below are measurable with respect to the indicated sigma-fields.
Write $P_X$ for the $\mathcal X$-marginal of $P$.

\begin{remark}[Deterministic perturbation: reproducibility and continuity]
\label{rem:jitter-full}
Let
\[
s:\mathcal X\times\mathcal Y\to\mathbb R,\qquad
g:\mathcal X\times\mathcal Y\to(0,\infty),\qquad
u:\mathcal X\to[0,1)
\]
be fixed measurable functions, used identically by all analysts, and
define
\[
\widetilde s(x,y):=s(x,y)-u(x)g(x,y).
\]
Alternatively, let $\mathsf T$ be common proper-training information,
including any common algorithmic seed, taking values in a standard Borel
space and independent of all calibration and test observations.  The
learned functions are assumed to be jointly measurable in $\mathsf T$ and
their displayed arguments.  Conditional on $\mathsf T$, the realized
functions are then treated as fixed, and every atomlessness or continuity
hypothesis below must hold for almost every realization of $\mathsf T$.
Separate proper-training samples may preserve validity within each
analyst, but they do not imply that the analysts evaluate the same
realized score map.

Under the usual split-conformal exchangeability assumptions, replacing $s$
by $\widetilde s$ in both calibration-score and test-candidate evaluation
preserves the standard finite-sample marginal coverage guarantee.  It need
not preserve the prediction sets, their sizes, their conditional coverage,
or the ordering of unequal scores.  Thus this construction is a deterministic
perturbation, not necessarily a pure tie-breaker.  Two analysts using the
same realized functions compute the same value $\widetilde s(x,y)$ at the
same input $(x,y)$, but thresholds computed from independent calibration
samples need not coincide.

For $(X,Y)\sim P$, set
\[
A:=s(X,Y),\qquad G:=g(X,Y),\qquad U:=u(X),
\]
and let $K((a,\gamma),\cdot)$ be a Borel probability kernel on $\mathbb R$
representing the conditional law of $U$ given $(A,G)=(a,\gamma)$.  Such a
kernel exists because $(A,G)$ and $U$ take values in standard Borel spaces.
Since $U\in[0,1)$ almost surely, the kernel may be chosen to satisfy
$K((a,\gamma),[0,1))=1$ for $P_{A,G}$-almost every $(a,\gamma)$.  Suppose
that there is a Borel set $N\subseteq\mathbb R\times(0,\infty)$ with
$P_{A,G}(N)=0$ such that
\[
K((a,\gamma),\{v\})=0
\qquad
\text{for every }(a,\gamma)\notin N
\text{ and every }v\in\mathbb R.
\]
Equivalently, the conditional law is atomless at every $(a,\gamma)$ outside
one $P_{A,G}$-null set.  Then, for every $t\in\mathbb R$, disintegration
gives
\[
\begin{aligned}
\Pr\{\widetilde s(X,Y)=t\}
&=\Pr\{A-GU=t\}\\
&=\int_{\mathbb R\times(0,\infty)}
  \int_{\mathbb R}
  \mathbf 1\{a-\gamma v=t\}\,
  K((a,\gamma),dv)\,P_{A,G}(da,d\gamma)\\
&=\int_{\mathbb R\times(0,\infty)}
  K\!\left((a,\gamma),
  \left\{\frac{a-t}{\gamma}\right\}\right)
  P_{A,G}(da,d\gamma)
=0.
\end{aligned}
\]
Thus $\widetilde s(X,Y)$ has an atomless distribution, equivalently, a
continuous distribution function.  Consequently, if
$((X_i,Y_i))_{i=1}^m$ is an i.i.d.\ sample from $P$, then
\[
\Pr\!\left\{
\widetilde s(X_i,Y_i)=\widetilde s(X_j,Y_j)
\text{ for some }1\leq i<j\leq m
\right\}=0.
\]
This is a distributional statement about independent observations.  It does
not assert pointwise separation of every deterministic pair, or of several
candidate labels evaluated at one common covariate value.

A simpler, stronger sufficient condition is that $U$ be atomless and
independent of $(A,G)$.  For example, suppose that $X=(X_0,X_1)$,
$X_0$ is independent of $(X_1,Y)$, and
\[
\begin{aligned}
u(x_0,x_1)&=u_0(x_0),\\
s((x_0,x_1),y)&=s_1(x_1,y),\\
g((x_0,x_1),y)&=g_1(x_1,y)>0,
\end{aligned}
\]
where $u_0$ takes values in $[0,1)$ and $u_0(X_0)$ is atomless.  Then $U$ is
atomless and independent of $(A,G)$.

Marginal atomlessness of $U$ alone is insufficient.  Indeed, if
\[
\mathcal X=(0,1),\qquad X\sim\operatorname{Unif}(0,1),\qquad
s(x,y)=x,\qquad u(x)=x,\qquad g(x,y)=1,
\]
then $U=u(X)$ is atomless, whereas
$\widetilde s(X,Y)=X-X=0$
almost surely.  A finite-range computer hash is discrete and therefore
cannot itself satisfy the conditional-atomlessness condition above.  This
does not imply that the final transformed score must have atoms; continuity
of that score would instead require a separate argument.

For clarity, when Proposition~\ref{prop:vanilla} is applied to a score
$s_{\mathsf T}$ learned from common training information, use the value of
$k$ defined there and set
\[
\mathcal C_{t,\mathsf T}(x)
:=\{y\in\mathcal Y:s_{\mathsf T}(x,y)\leq t\},
\qquad
S_{i,\mathsf T}:=s_{\mathsf T}(X_i,Y_i),
\qquad
S_{i,\mathsf T}':=s_{\mathsf T}(X_i',Y_i'),
\]
and let $\tau_{n,\mathsf T}$ and $\tau_{n,\mathsf T}'$ be the respective
$k$th order statistics.  Let $F_{\mathsf T}(\cdot)$ denote a version of the
regular conditional distribution function of $s_{\mathsf T}(X,Y)$ given
$\mathsf T$; thus
$F_{\mathsf T}(t):=\Pr\{s_{\mathsf T}(X,Y)\leq t\mid\mathsf T\}$.
Suppose, for this version, that $t\mapsto F_{\mathsf T}(t)$ is continuous for
almost every realization of $\mathsf T$.  Then, for almost every realized
training value, the proposition applies to the fixed score $s_{\mathsf T}$,
with every conditional probability conditioned additionally on $\mathsf T$.
In particular, conditionally on $\mathsf T$,
$F_{\mathsf T}(\tau_{n,\mathsf T})$ and
$F_{\mathsf T}(\tau_{n,\mathsf T}')$ are independent
$\operatorname{Beta}(k,n+1-k)$ random variables,
\[
\begin{aligned}
&\Pr\!\left\{
\mathbf 1\{Y\in\mathcal C_{\tau_{n,\mathsf T},\mathsf T}(X)\}
\neq
\mathbf 1\{Y\in\mathcal C_{\tau_{n,\mathsf T}',\mathsf T}(X)\}
\,\middle|\,
\mathsf T,\mathcal D_n,\mathcal D_n'
\right\}\\
&\qquad
=\left|
F_{\mathsf T}(\tau_{n,\mathsf T})
-F_{\mathsf T}(\tau_{n,\mathsf T}')
\right|>0
\quad\text{almost surely},
\end{aligned}
\]
and the relevant conditional coverage variable is
\[
\Pr\!\left\{
Y_{n+1}\in
\mathcal C_{\tau_{n,\mathsf T},\mathsf T}(X_{n+1})
\,\middle|\,
\mathsf T,\mathcal D_n
\right\}
=F_{\mathsf T}(\tau_{n,\mathsf T}).
\]
Unconditional continuity alone is insufficient.  For example, if
$\mathsf T\sim\operatorname{Unif}(0,1)$ and
$s_{\mathsf T}(x,y)=\mathsf T$, then the unconditional score law is uniform,
but, conditionally on $\mathsf T$, all scores and both analysts'
thresholds coincide.
\end{remark}

\paragraph{Full statement of Proposition~\ref{prop:vanilla}.}
Let $n\geq1$ and
\[
\alpha\in\left[\frac{1}{n+1},1\right),\qquad
k:=\left\lceil(n+1)(1-\alpha)\right\rceil
\in\{1,\ldots,n\}.
\]
Let $s:\mathcal X\times\mathcal Y\to\mathbb R$ be a fixed measurable
nonconformity score.  Put
$\overline{\mathbb R}:=\mathbb R\cup\{-\infty,+\infty\}$,
and, for $t\in\overline{\mathbb R}$, define
$\mathcal C_t(x):=\{y\in\mathcal Y:s(x,y)\leq t\}$.
Thus $\mathcal C_{+\infty}(x)=\mathcal Y$ and
$\mathcal C_{-\infty}(x)=\varnothing$ for every $x$.  Suppose that
$F(t):=\Pr\{s(X,Y)\leq t\}$, $(X,Y)\sim P$,
is continuous.  Let
$\mathcal D_n:=((X_i,Y_i))_{i=1}^n$ and
$\mathcal D_n':=((X_i',Y_i'))_{i=1}^n$
be independent i.i.d.\ calibration samples from $P$.  Write
$S_i:=s(X_i,Y_i)$, $S_i':=s(X_i',Y_i')$, and let
$\tau_n:=S_{(k)}$, $\tau_n':=S_{(k)}'$
be the corresponding $k$th order-statistic thresholds.  Then
\[
F(\tau_n),\,F(\tau_n')
\stackrel{\mathrm{ind}}{\sim}
\operatorname{Beta}(k,n+1-k),
\qquad
\Pr\{\tau_n=\tau_n'\}=0.
\]
Let $(X,Y)\sim P$ be independent of both calibration samples and define the
conditional operational disagreement probability
\[
\Delta_n
:=
\Pr\!\left\{
\mathbf 1\{Y\in\mathcal C_{\tau_n}(X)\}
\neq
\mathbf 1\{Y\in\mathcal C_{\tau_n'}(X)\}
\,\middle|\,
\mathcal D_n,\mathcal D_n'
\right\}.
\]
Then $\Delta_n=|F(\tau_n)-F(\tau_n')|>0$ almost surely.
Define the following possibly nonmeasurable subsets of the joint
calibration-sample space:
\[
\begin{aligned}
E_{\mathrm{ae}}
&:=\left\{
\begin{array}{l}
\text{there exists }N\in\mathcal A
\text{ with }P_X(N)=0\text{ such that}\\[-1mm]
\mathcal C_{\tau_n}(x)=\mathcal C_{\tau_n'}(x)
\text{ for every }x\notin N
\end{array}
\right\},\\
E_{\mathrm{pt}}
&:=\{\mathcal C_{\tau_n}(x)=\mathcal C_{\tau_n'}(x)
      \text{ for every }x\in\mathcal X\}.
\end{aligned}
\]
If $\Pr^*$ denotes outer probability under the joint law of the two
calibration samples, then
$\Pr^*(E_{\mathrm{ae}})=\Pr^*(E_{\mathrm{pt}})=0$.
Consequently, both subsets are measurable in the completion, where each has
probability zero.  Thus the prediction-set maps fail to replicate exactly
almost surely.
Finally, if $(X_{n+1},Y_{n+1})\sim P$ is independent of $\mathcal D_n$, then
the calibration-conditional population coverage
\[
\Gamma_n
:=
\Pr\!\left\{
Y_{n+1}\in\mathcal C_{\tau_n}(X_{n+1})
\,\middle|\,
\mathcal D_n
\right\}
\]
satisfies $\Gamma_n=F(\tau_n)\sim\operatorname{Beta}(k,n+1-k)$, and
consequently
$\Pr\{Y_{n+1}\in\mathcal C_{\tau_n}(X_{n+1})\}=k/(n+1)\geq1-\alpha$.

\begin{proof}
Continuity of $F$ and the probability integral transform give
\[
V_i:=F(S_i)\stackrel{\mathrm{i.i.d.}}{\sim}\operatorname{Unif}(0,1),
\qquad
V_i':=F(S_i')\stackrel{\mathrm{i.i.d.}}{\sim}\operatorname{Unif}(0,1).
\]
The two collections are independent.  Since $F$ is nondecreasing, applying
$F$ commutes with taking order statistics, even if $F$ has flat intervals:
\[
F(\tau_n)=F(S_{(k)})=V_{(k)},
\qquad
F(\tau_n')=F(S_{(k)}')=V_{(k)}'
\]
almost surely.
The $k$th order statistic of $n$ independent
$\operatorname{Unif}(0,1)$ variables has density
\[
f_k(v)
=
\frac{n!}{(k-1)!(n-k)!}
v^{k-1}(1-v)^{n-k},
\qquad 0<v<1.
\]
Therefore,
$F(\tau_n),F(\tau_n')
\stackrel{\mathrm{ind}}{\sim}\operatorname{Beta}(k,n+1-k)$.
This Beta distribution is atomless, and hence
$\Pr\{F(\tau_n)=F(\tau_n')\}=0$.
Since
$\{\tau_n=\tau_n'\}\subseteq\{F(\tau_n)=F(\tau_n')\}$,
it follows that $\Pr\{\tau_n=\tau_n'\}=0$.

For deterministic $t,t'\in\mathbb R$, the two acceptance events are nested,
so
\[
\Pr\!\left\{
\mathbf 1\{Y\in\mathcal C_t(X)\}
\neq
\mathbf 1\{Y\in\mathcal C_{t'}(X)\}
\right\}
=\Pr\{\min(t,t')<s(X,Y)\leq\max(t,t')\}
=|F(t)-F(t')|.
\]
Substituting the calibration-measurable thresholds and conditioning on the
two independent calibration samples therefore yields
$\Delta_n=|F(\tau_n)-F(\tau_n')|>0$
almost surely.

For any realized $t,t'$, if
$\mathcal C_t(x)=\mathcal C_{t'}(x)$ outside some $N\in\mathcal A$ with
$P_X(N)=0$, then the measurable disagreement set is contained in
$N\times\mathcal Y$.  Hence
\[
\mathbf 1\{Y\in\mathcal C_t(X)\}
=
\mathbf 1\{Y\in\mathcal C_{t'}(X)\}
\qquad P\text{-almost surely}.
\]
The preceding disagreement identity then gives $F(t)=F(t')$.  Consequently,
\[
E_{\mathrm{pt}}
\subseteq E_{\mathrm{ae}}
\subseteq
\{F(\tau_n)=F(\tau_n')\}.
\]
The event on the right is measurable and has probability zero.  Both subsets
on the left therefore have outer probability zero and belong to the
completion with completed probability zero.

For the independent test observation $(X_{n+1},Y_{n+1})$,
$\Gamma_n=\Pr\{s(X_{n+1},Y_{n+1})\leq\tau_n\mid\mathcal D_n\}=F(\tau_n)$
almost surely.  The asserted Beta law follows from the order-statistic
calculation above.  Taking expectations and using the mean of a
$\operatorname{Beta}(k,n+1-k)$ random variable gives
$\Pr\{Y_{n+1}\in\mathcal C_{\tau_n}(X_{n+1})\}
=\mathbb E[\Gamma_n]=k/(n+1)$.
Finally,
$k=\lceil(n+1)(1-\alpha)\rceil\geq(n+1)(1-\alpha)$,
so $k/(n+1)\geq1-\alpha$.
\end{proof}

\begin{remark}[Boundary values of the miscoverage level]
\label{rem:vanilla-boundary}
At the included lower endpoint $\alpha=1/(n+1)$, one has $k=n$; thus
$\tau_n=S_{(n)}$ and $\Gamma_n\sim\operatorname{Beta}(n,1)$, so the
proposition remains valid.
If $0\leq\alpha<1/(n+1)$, then
$\lceil(n+1)(1-\alpha)\rceil=n+1$.
Under the usual finite-sample-valid convention of adjoining $+\infty$ as the
$(n+1)$st order statistic, $\tau_n=+\infty$.  Since $s$ is real-valued,
$\mathcal C_{\tau_n}(x)=\mathcal Y$ for every $x$,
and two analysts replicate trivially with probability one.
At the other end, every $0\leq\alpha<1$ gives $k\geq1$.  In particular, if
$n/(n+1)\leq\alpha<1$,
then $k=1$ and $\Gamma_n\sim\operatorname{Beta}(1,n)$.  The endpoint
$\alpha=1$ is excluded because the formula gives $k=0$, not a calibration
order statistic.  If one extends the convention by setting $\tau_n=-\infty$
at $\alpha=1$, both prediction sets are empty and exact replication again
has probability one.
\end{remark}

\paragraph{Order-statistic facts.}
For continuous $F$ and $k_n=\lceil p(n+1)\rceil\le n$,
\begin{equation}\label{eq:revised-beta-facts}
U:=F(\ttau)=F(S_{(k_n)})
\sim\operatorname{Beta}(k_n,n+1-k_n),
\qquad
m_n:=\E U=\frac{k_n}{n+1},
\end{equation}
with
\begin{equation}\label{eq:revised-rank-facts}
p\le m_n<p+\frac1{n+1},
\qquad
0\le\frac{k_n}{n}-p<\frac2n,
\qquad
\operatorname{Var}(U)=\frac{m_n(1-m_n)}{n+2}.
\end{equation}
Continuity implies that the sample scores have no ties almost surely, so
$\widehat F_n(\ttau)=k_n/n$.

\begin{lemma}[Offset collision]\label{lem:collision}
For fixed $a,b\in\mathbb R$ and
$u\sim\operatorname{Unif}[0,\beta)$,
\begin{equation}\label{eq:collision}
\Pd_u\{\operatorname{rd}_u(a)\ne\operatorname{rd}_u(b)\}
=\frac{\min\{|a-b|,\beta\}}{\beta}.
\end{equation}
\end{lemma}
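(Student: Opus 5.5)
The plan is to reduce the event to a statement about the position of the offset modulo $\beta$. Write $\operatorname{rd}_u(a)=u+\beta\lceil(a-u)/\beta\rceil$, which is the smallest grid point of $u+\beta\mathbb Z$ that is at least $a$. Assume without loss of generality that $a\le b$. The two roundings differ exactly when some grid point lies in the half-open interval $[a,b)$, that is, when $(u+\beta\mathbb Z)\cap[a,b)\neq\varnothing$.

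To justify this, observe that $\operatorname{rd}_u(a)$ is the first grid point in $[a,\infty)$. If no grid point lies in $[a,b)$, then the first grid point at or after $a$ is also the first at or after $b$, so the roundings agree. If some grid point $g$ lies in $[a,b)$, then $\operatorname{rd}_u(a)\le g<b\le\operatorname{rd}_u(b)$, so the roundings differ.

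The probability now follows from the uniform offset.
\begin{itemize}
\item If $b-a\ge\beta$, the interval $[a,b)$ has length at least the grid spacing and therefore always contains a grid point. The probability is $1=\min\{|a-b|,\beta\}/\beta$.
\item If $b-a<\beta$, the interval contains at most one grid point. The event is $\{(u-a)\bmod\beta\in[0,b-a)\}$.
\end{itemize}
Because $u\sim\mathrm{Unif}[0,\beta)$, the reduced quantity $(u-a)\bmod\beta$ is again uniform on $[0,\beta)$, since translation modulo $\beta$ preserves Lebesgue measure on the circle $\R/\beta\mathbb Z$. The probability is therefore $(b-a)/\beta$. When $a=b$, both sides are zero.

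The only delicate point is the half-open endpoint convention that comes with the ceiling. A grid point equal to $b$ does not separate the two roundings, whereas a grid point equal to $a$ does. Either way, such boundary cases have probability zero under the uniform offset, so they do not affect the formula. I expect no real obstacle beyond stating the characterization of $\operatorname{rd}_u$ carefully.
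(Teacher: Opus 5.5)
Your proposal is correct and follows the paper's own proof. Both characterize disagreement as a point of $u+\beta\mathbb Z$ lying in the half-open interval between $a$ and $b$ (correctly $[a,b)$ for round-up), note that this always happens when $|a-b|\ge\beta$, and otherwise reduce the offset modulo $\beta$ to obtain an arc of length $|a-b|$. You additionally spell out the characterization of $\operatorname{rd}_u$ and the translation-invariance step, which the paper leaves implicit.
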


\begin{proof}
The two round-ups differ exactly when a point of $u+\beta\mathbb Z$ lies
in the half-open interval between $a$ and $b$.  If
$d:=|a-b|\ge\beta$, every grid phase has such a point.  If $d<\beta$,
reducing the phase modulo $\beta$ shows that the set of offsets producing
such a point is an arc of length $d$ in $[0,\beta)$.  Dividing its
Lebesgue measure by $\beta$ proves \eqref{eq:collision}.
\end{proof}

\begin{lemma}[Localization]\label{lem:local}
Let $F$ be continuous, let $S_1,\ldots,S_n$ be i.i.d.\ with cdf $F$,
let $k_n=\lceil(1-\alpha)(n+1)\rceil\le n$, and put
$\ttau=S_{(k_n)}$.  Under Assumption~\ref{ass:margin},
$|F(\ttau)-p|<f_{\min}\Delta$ implies $\ttau\in I$, and for every
$a,b\in I$,
\begin{equation}\label{eq:inverse-lipschitz}
|a-b|\le\frac{|F(a)-F(b)|}{f_{\min}}.
\end{equation}
Moreover, if $n\ge4/(f_{\min}\Delta)$, then
\begin{equation}\label{eq:localization-tail}
\Pd\{|F(\ttau)-p|\ge f_{\min}\Delta\}
\le2\exp\{-2n(f_{\min}\Delta-2/n)^2\}
\le2\exp\!\left(-\frac{n f_{\min}^{2}\Delta^{2}}2\right).
\end{equation}
\end{lemma}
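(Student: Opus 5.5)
The plan has three parts: the localization implication, the inverse-Lipschitz bound on $I$, and then the tail bound. The tail bound is where the work lies.

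\textbf{Localization implication.} Suppose $|F(\ttau)-p|<f_{\min}\Delta$. Since $f\ge f_{\min}$ on $I=[q-\Delta,q+\Delta]$ and $F(q)=p$ by continuity, we get $F(q+\Delta)\ge p+f_{\min}\Delta$ and $F(q-\Delta)\le p-f_{\min}\Delta$. Now argue by contradiction using monotonicity of $F$. If $\ttau>q+\Delta$, then $F(\ttau)\ge p+f_{\min}\Delta$. If $\ttau<q-\Delta$, then $F(\ttau)\le p-f_{\min}\Delta$. Either case contradicts the hypothesis, so $\ttau\in I$.

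\textbf{Inverse-Lipschitz bound.} For $a<b$ in $I$ we have $F(b)-F(a)=\int_a^b f\ge f_{\min}(b-a)$. This is \eqref{eq:inverse-lipschitz}.

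\textbf{Tail bound.} I would work with the empirical CDF of the uniforms rather than with the Beta law directly. Set $V_i=F(S_i)$, which are i.i.d.\ $\mathrm{Unif}(0,1)$, and note $F(\ttau)=V_{(k_n)}$. Let $G_n$ be the empirical CDF of the $V_i$ and write $\eta:=f_{\min}\Delta$.

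\emph{Upper tail.} The event $V_{(k_n)}\ge p+\eta$ means that fewer than $k_n$ of the $V_i$ are below $p+\eta$. In terms of $G_n$, this gives $G_n(p+\eta)<k_n/n<p+2/n$, using \eqref{eq:revised-rank-facts}. Hence $G_n(p+\eta)-(p+\eta)<-(\eta-2/n)$.

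\emph{Lower tail.} The event $V_{(k_n)}\le p-\eta$ gives $G_n(p-\eta)\ge k_n/n\ge p$. Hence $G_n(p-\eta)-(p-\eta)\ge\eta\ge\eta-2/n$.

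\emph{Hoeffding step.} Each tail is now a one-point deviation of a binomial proportion. Apply Hoeffding separately to each, with deviation $\eta-2/n>0$; this is positive because $n\ge 4/\eta$ implies $2/n\le\eta/2$. Each tail is at most $\exp\{-2n(\eta-2/n)^2\}$. Summing the two tails gives the first inequality in \eqref{eq:localization-tail}.

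\emph{Final simplification.} From $\eta-2/n\ge\eta/2$ we get $2n(\eta-2/n)^2\ge n\eta^2/2$, which yields the second inequality.

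\textbf{Main obstacle.} The delicate step is the translation between the order statistic and the empirical CDF. It needs the correct strict versus non-strict inequalities together with the bounds $0\le k_n/n-p<2/n$. It also needs care when $p\pm\eta$ falls outside $[0,1]$; in that case the corresponding tail is empty and its bound holds trivially.
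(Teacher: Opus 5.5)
Your proof is correct. The localization and inverse-Lipschitz parts are exactly the paper's argument; for the tail bound you take a genuinely different and more elementary route.

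The paper bounds the deviation at the \emph{random} point $\ttau$. Since $\widehat F_n(\ttau)=k_n/n$, it writes $|F(\ttau)-p|\le\|\widehat F_n-F\|_\infty+|k_n/n-p|<\|\widehat F_n-F\|_\infty+2/n$. It then needs a uniform bound, the Dvoretzky--Kiefer--Wolfowitz inequality with Massart's constant $2$, to reach $2\exp\{-2n(f_{\min}\Delta-2/n)^2\}$. You instead use the order-statistic/empirical-CDF duality to move each event to one of the \emph{deterministic} points $p\pm\eta$. There a one-sided Hoeffding bound for a single binomial proportion suffices, and summing the two tails recovers the same constant with no empirical-process input.

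Your route also shows that the $2/n$ loss is one-sided. Because $k_n/n\ge p$, the lower tail obeys the sharper $\exp(-2n\eta^2)$, and only the upper tail pays for rank rounding. The paper's route, in exchange, is a one-line consequence of the inequality $|F(\ttau)-p|\le\|\widehat F_n-F\|_\infty+2/n$. It reuses that inequality with DKW in Theorem~\ref{thm:upper}(ii) and Theorem~\ref{thm:twolist}, so a single tool covers all its concentration steps.

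Two small remarks. First, passing from ``fewer than $k_n$ of the $V_i$ lie strictly below $p+\eta$'' to $G_n(p+\eta)<k_n/n$ holds only off the null event that some $V_i$ equals $p+\eta$. Discarding that null set, or using the left limit $G_n((p+\eta)^-)$, which is also a binomial proportion, settles it. Second, the boundary case you flag never arises: integrating $f\ge f_{\min}$ over $[q-\Delta,q]$ and $[q,q+\Delta]$ forces $f_{\min}\Delta\le\min\{\alpha,1-\alpha\}$, so $p\pm\eta\in[0,1]$.
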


\begin{proof}
Continuity gives $F(q)=p$.  Absolute continuity and the lower density
bound give
\[
F(q+\Delta)-p\ge f_{\min}\Delta,
\qquad
p-F(q-\Delta)\ge f_{\min}\Delta.
\]
Monotonicity therefore proves the localization claim.  If $a<b$ are in
$I$, then
\[
F(b)-F(a)=\int_a^b f(t)\,dt\ge f_{\min}(b-a),
\]
which proves \eqref{eq:inverse-lipschitz} without requiring pointwise
differentiability of $F$.
By \eqref{eq:revised-rank-facts},
$|F(\ttau)-p|\le\|\widehat F_n-F\|_{\infty}+2/n$.
The Dvoretzky--Kiefer--Wolfowitz inequality therefore gives the first
bound in \eqref{eq:localization-tail}.  Since
$n\ge4/(f_{\min}\Delta)$ implies
$f_{\min}\Delta-2/n\ge f_{\min}\Delta/2$, the second bound follows.
\end{proof}

\subsection*{Proof of Theorem~\ref{thm:upper}}
Let $E$ be the event that both unrounded thresholds satisfy
$|F(\ttau_j)-p|<f_{\min}\Delta$.  Lemma~\ref{lem:local} and a union
bound give
\begin{equation}\label{eq:upper-local-two}
\Pd(E^c)\le4\exp\!\left(-\frac{n f_{\min}^{2}\Delta^{2}}2\right).
\end{equation}
Conditional on the two unrounded thresholds, Lemma~\ref{lem:collision},
followed on $E$ by \eqref{eq:inverse-lipschitz}, gives
\begin{align}
\Pd(\tau_A\ne\tau_B)
&\le
\E\!\left[
\frac{\min\{|\ttau_A-\ttau_B|,\beta\}}{\beta}\one_E
\right]+\Pd(E^c)\notag\\
&\le
\frac{\E|F(\ttau_A)-F(\ttau_B)|}{f_{\min}\beta}
+4\exp\!\left(-\frac{n f_{\min}^{2}\Delta^{2}}2\right).
\label{eq:upper-before-beta}
\end{align}
The transformed thresholds are independent copies $U,U'$ of the beta
variable in \eqref{eq:revised-beta-facts}, and hence
\begin{equation}\label{eq:beta-difference}
\E|U-U'|
\le\sqrt{\E(U-U')^2}
=\sqrt{2\operatorname{Var}(U)}.
\end{equation}
Writing $m_n=p+r_n$ with $0\le r_n<1/(n+1)$,
\[
m_n(1-m_n)
=\alpha(1-\alpha)+r_n(2\alpha-1)-r_n^2
\le\alpha(1-\alpha)+\frac{\alpha}{n+1}.
\]
Consequently,
\begin{equation}\label{eq:beta-variance-bound}
\operatorname{Var}(U)
\le\frac{\alpha(1-\alpha)}n+\frac1{n^2},
\end{equation}
and
\[
\E|U-U'|
\le
\sqrt{\frac{2\alpha(1-\alpha)}n+\frac2{n^2}}
\le
\sqrt{\frac{2\alpha(1-\alpha)}n}+\frac{\sqrt2}{n}.
\]
Substitution into \eqref{eq:upper-before-beta} proves part~(i).

For marginal validity, upward rounding gives
$\ttau\le\tau<\ttau+\beta$ for every offset, so
$\cC_{\ttau}(x)\subseteq\cC_{\tau}(x)$ for every $x$.  Among $n+1$
i.i.d.\ continuous scores, the test score has a uniform rank.  Therefore
\[
\Pd\{Y_{n+1}\in\cC_\tau(X_{n+1})\}
\ge\Pd\{S_{n+1}\le S_{(k_n)}\}
=\frac{k_n}{n+1}\ge1-\alpha,
\]
proving the marginal claim of part~(ii).  This argument holds for every
fixed offset and does not use the margin condition.

For the training-conditional statement, DKW implies that, with
probability at least $1-\delta$,
\begin{equation}\label{eq:upper-dkw}
|F(\ttau)-p|
\le\|\widehat F_n-F\|_{\infty}
   +\left|\frac{k_n}{n}-p\right|
\le e_n(\delta).
\end{equation}
Under the condition $e_n(\delta)\le f_{\min}(\Delta-\beta)$, this first
localizes $\ttau$ to $I$ and then, by \eqref{eq:inverse-lipschitz}, gives
\[
|\ttau-q|\le\frac{e_n(\delta)}{f_{\min}}\le\Delta-\beta.
\]
Thus $[\ttau,\ttau+\beta]\subseteq I$.  Monotonicity and upward
rounding give
$F(\tau)\ge F(\ttau)\ge p-e_n(\delta)$,
while absolute continuity and the upper density bound give
\[
F(\tau)
\le F(\ttau)+f_{\max}(\tau-\ttau)
\le p+e_n(\delta)+f_{\max}\beta.
\]
The DKW event depends only on the sample, so these inequalities hold
simultaneously for every offset.  This proves part~(ii).

Finally, if $\tau_A=\tau_B=\tau$ and the score map is shared, then both
analysts deploy
$\cC^{s}_{\tau}(x)=\{y:s(x,y)\le\tau\}$
for every $x$, which proves part~(iii). \qed

\paragraph{Full statement of Corollary~\ref{cor:protocol}.}
Let $s$ be a fixed measurable score map whose score cdf $F$ under $P$ is
continuous and satisfies Assumption~\ref{ass:margin}.  The two
analysts use independent i.i.d.\ calibration samples from
$P^{\otimes n}$.  Fix
$\alpha,\varepsilon,\rho,\delta\in(0,1)$ and assume
\begin{equation}\label{eq:protocol-width}
\varepsilon\le f_{\max}\Delta.
\end{equation}
Set
\[
\beta:=\frac{\varepsilon}{2f_{\max}},
\qquad
h:=f_{\min}(\Delta-\beta)>0,
\qquad
n_{\alpha}:=\left\lfloor\frac1\alpha\right\rfloor,
\]
and define the explicit finite-sample remainder
\begin{equation}\label{eq:protocol-nzero}
n_0:=
\left\lceil
\max\left\{
n_{\alpha},
\frac8\varepsilon,
\frac{8\sqrt2\,\kappa}{\varepsilon\rho},
\frac{2\log(16/\rho)}{f_{\min}^{2}\Delta^{2}},
\frac{2\log(2/\delta)}{h^{2}},
\frac4h
\right\}
\right\rceil.
\end{equation}
If
\begin{equation}\label{eq:protocol-n}
n\ge
\max\left\{
\frac{32\kappa^{2}\alpha(1-\alpha)}
     {\varepsilon^{2}\rho^{2}},
\frac{8\log(2/\delta)}{\varepsilon^{2}},
n_0
\right\},
\end{equation}
then two analysts running \ReCal{} on independent samples of size
$n$, with the same score and the independent shared offset
$u\sim\operatorname{Unif}[0,\beta)$, satisfy
$\Pd_{\cD,\cD',u}(\tau\ne\tau')\le\rho$.
Consequently, their deployed classifiers are $\rho$-replicable.  Each
analyst has marginal coverage at least $1-\alpha$ and, with probability
at least $1-\delta$ over its calibration sample,
\begin{equation}\label{eq:protocol-band}
F(\tau)\in
\left[1-\alpha-\frac\varepsilon2,\;1-\alpha+\varepsilon\right].
\end{equation}
The event giving \eqref{eq:protocol-band} works for every realized shared
offset $u$.

\begin{remark}[Meaning of ``lower order'']\label{rem:nzero}
For fixed $\alpha,\delta,f_{\min},f_{\max},\Delta$, the explicit $n_0$
in \eqref{eq:protocol-nzero} is
$o((\varepsilon\rho)^{-2})$ as $\varepsilon\rho\to0$ along parameter
sequences satisfying \eqref{eq:protocol-width}.  This assertion is not
uniform if $\alpha,\delta$ or the margin constants vary.  A guarantee
uniform over a distribution class requires common certified values of
$f_{\min},f_{\max},\Delta$ throughout that class.
\end{remark}

\subsection*{Proof of Corollary~\ref{cor:protocol}}
Condition \eqref{eq:protocol-width} gives $\beta\le\Delta/2$.  Also,
$n\ge n_\alpha$ implies $\alpha>1/(n+1)$ and hence $k_n\le n$, while
$n\ge4/h\ge4/(f_{\min}\Delta)$ gives the localization sample-size
condition of Theorem~\ref{thm:upper}.
Using $\beta=\varepsilon/(2f_{\max})$ in
Theorem~\ref{thm:upper}(i) gives
\begin{equation}\label{eq:protocol-three-terms}
\Pd(\tau\ne\tau')
\le
\frac{2\kappa}{\varepsilon}
\sqrt{\frac{2\alpha(1-\alpha)}n}
+\frac{2\sqrt2\,\kappa}{\varepsilon n}
+4\exp\!\left(-\frac{n f_{\min}^{2}\Delta^{2}}2\right).
\end{equation}
The first lower bound on $n$ in \eqref{eq:protocol-n} makes the first
term at most $\rho/2$.  The conditions
\[
n\ge\frac{8\sqrt2\,\kappa}{\varepsilon\rho},
\qquad
n\ge\frac{2\log(16/\rho)}{f_{\min}^{2}\Delta^{2}}
\]
make the second and third terms at most $\rho/4$ each.  Thus
$\Pd(\tau\ne\tau')\le\rho$, and equality of the deployed maps on the
complement follows from the shared score.
For the conditional coverage band,
\[
n\ge\frac{8\log(2/\delta)}{\varepsilon^2},
\qquad n\ge\frac8\varepsilon
\]
imply, respectively,
$\sqrt{\log(2/\delta)/(2n)}\le\varepsilon/4$ and $2/n\le\varepsilon/4$,
so $e_n(\delta)\le\varepsilon/2$.  Likewise,
\[
n\ge\frac{2\log(2/\delta)}{h^2},
\qquad n\ge\frac4h
\]
give $e_n(\delta)\le h=f_{\min}(\Delta-\beta)$.
Theorem~\ref{thm:upper}(ii) and $f_{\max}\beta=\varepsilon/2$ now yield
\[
1-\alpha-\frac\varepsilon2
\le F(\tau)
\le1-\alpha+\frac\varepsilon2+f_{\max}\beta
=1-\alpha+\varepsilon.
\]
Marginal coverage follows from Theorem~\ref{thm:upper}(ii). \qed

\paragraph{Full statement of Corollary~\ref{cor:twolabs}.}
Suppose both analysts use the same score $s$, whose continuous score
distribution $F$ satisfies Assumption~\ref{ass:margin}.  Let their
calibration samples be independent i.i.d.\ samples from the same
population $P$, with sizes $n_A,n_B$, and assume
\[
\alpha\ge
\max\left\{\frac1{n_A+1},\frac1{n_B+1}\right\},
\qquad
n_A,n_B\ge\frac4{f_{\min}\Delta}.
\]
For any $\beta>0$, let the analysts use the same independent shared
offset $u\sim\operatorname{Unif}[0,\beta)$ and their respective conformal
ranks
$k_j:=\lceil(1-\alpha)(n_j+1)\rceil$, $j\in\{A,B\}$.
Writing $m:=n_A\wedge n_B$, their rounded thresholds satisfy
\begin{equation}\label{eq:unequal-mismatch}
\Pd(\tau_A\ne\tau_B)
\le
\frac{
\sqrt{\alpha(1-\alpha)}
 (n_A^{-1/2}+n_B^{-1/2})+3/m
}{f_{\min}\beta}
+4\exp\!\left(-\frac{m f_{\min}^{2}\Delta^{2}}2\right).
\end{equation}
Thus the analysts need not coordinate sample sizes.  On
$\{\tau_A=\tau_B\}$ their deployed set maps coincide everywhere, and
each analyst separately retains marginal coverage at least
$1-\alpha$.
No restriction $\beta\le\Delta/2$ is needed for
\eqref{eq:unequal-mismatch}.  If, in addition,
$\delta_A,\delta_B\in(0,1)$, $\delta_A+\delta_B<1$, and
$e_{n_j}(\delta_j)\le f_{\min}(\Delta-\beta)$ for
$j\in\{A,B\}$, then with probability at least
$1-\delta_A-\delta_B$ the following inequalities hold simultaneously
for $j\in\{A,B\}$:
\[
1-\alpha-e_{n_j}(\delta_j)
\le F(\tau_j)
\le1-\alpha+e_{n_j}(\delta_j)+f_{\max}\beta.
\]

\subsection*{Proof of Corollary~\ref{cor:twolabs}}
For $j\in\{A,B\}$, let
\[
U_j:=F(\ttau_j)
\sim\operatorname{Beta}(k_j,n_j+1-k_j),
\qquad
\mu_j:=\E U_j=\frac{k_j}{n_j+1}.
\]
The variables $U_A,U_B$ are independent.  The ceiling definition gives
\[
0\le\mu_j-p<\frac1{n_j+1},
\qquad
|\mu_A-\mu_B|\le\frac1{m+1}.
\]
The variance calculation in \eqref{eq:beta-variance-bound}, with $n_j$
in place of $n$, gives
$\operatorname{sd}(U_j)\le\sqrt{\alpha(1-\alpha)/n_j}+1/n_j$.
Therefore
\begin{align*}
\E|U_A-U_B|
&\le\E|U_A-\mu_A|+|\mu_A-\mu_B|+\E|U_B-\mu_B|\\
&\le
\sqrt{\alpha(1-\alpha)}(n_A^{-1/2}+n_B^{-1/2})
+\frac1{n_A}+\frac1{n_B}+\frac1{m+1}\\
&\le
\sqrt{\alpha(1-\alpha)}(n_A^{-1/2}+n_B^{-1/2})+\frac3m.
\end{align*}
Let $E$ be the event that both raw thresholds localize to
$I=[q-\Delta,q+\Delta]$.  Applying Lemma~\ref{lem:local} separately at
the two sample sizes gives
\[
\Pd(E^c)
\le2e^{-n_Af_{\min}^2\Delta^2/2}
  +2e^{-n_Bf_{\min}^2\Delta^2/2}
\le4e^{-mf_{\min}^2\Delta^2/2}.
\]
On $E$,
$|\ttau_A-\ttau_B|\le|U_A-U_B|/f_{\min}$.  Conditioning on the raw
thresholds and applying Lemma~\ref{lem:collision} now gives
\[
\Pd(\tau_A\ne\tau_B)
\le\frac{\E|U_A-U_B|}{f_{\min}\beta}
+4e^{-mf_{\min}^2\Delta^2/2},
\]
which is \eqref{eq:unequal-mismatch}.  The shared score gives classifier
identity whenever the thresholds agree.  Apply
Theorem~\ref{thm:upper}(ii) separately to the two analysts for the
marginal and conditional coverage claims, and use a union bound for the
simultaneous claim. \qed

\paragraph{Full statement of Corollary~\ref{cor:attack}.}
Let $M\ge2$.  Draw one offset
$u\sim\operatorname{Unif}[0,\beta)$ independently of
$\cD_1,\ldots,\cD_M\stackrel{\mathrm{iid}}{\sim}P^{\otimes n}$, commit
to it before drawing the samples, and use that same offset in all $M$
runs.  Suppose the calibrator $\cA$ is $\rho$-replicable in the ex-ante
sense
$\Pd_{u,\cD,\cD'}\{\cA(\cD;u)\ne\cA(\cD';u)\}\le\rho$.
Then
\begin{equation}\label{eq:shopping-cap}
\Pd_{u,\cD_{1:M}}
\{\exists j\le M:\cA(\cD_j;u)\ne\cA(\cD_1;u)\}
\le c_M:=\min\{1,(M-1)\rho\}.
\end{equation}
Consequently, on an event of probability at least $1-c_M$, every
measurable selection rule
$J=J(u,\cD_1,\ldots,\cD_M)\in\{1,\ldots,M\}$ deploys the same
classifier.
For a classifier $C$, write
$\operatorname{cov}_{P}(C):=\Pd_{(X,Y)\sim P}\{Y\in C(X)\}$.
If an unselected run is marginally valid, so that
$\E_{u,\cD}\operatorname{cov}_{P}\{\cA(\cD;u)\}\ge1-\alpha$,
then every data-dependent selection rule satisfies
\begin{equation}\label{eq:shopping-coverage}
\E_{u,\cD_{1:M}}
\operatorname{cov}_{P}\{\cA(\cD_J;u)\}
\ge \max\{0,1-\alpha-c_M\}.
\end{equation}
For \ReCal{}, the premise holds under exchangeability and the valid-rank
condition $k_n\le n$, because rounding is upward and the threshold sets
are nested.  In general,
one may not replace the right-hand side of
\eqref{eq:shopping-coverage} by $1-\alpha$: marginal validity of each
candidate does not survive arbitrary adaptive selection.

For comparison, let $\ttau_1,\ldots,\ttau_M$ be independent standard
split-conformal thresholds based on samples of size $n$, put
$C_j:=F(\ttau_j)$,
and suppose $F$ is continuous and $k_n\le n$.  Then
$C_j\stackrel{\mathrm{iid}}{\sim}
\operatorname{Beta}(k_n,n+1-k_n)$.  If $B_{a,b}$ denotes the cdf of a
$\operatorname{Beta}(a,b)$ variable, choosing the smallest threshold has
the exact expected population coverage
\begin{equation}\label{eq:shopping-beta-exact}
\E\min_{j\le M}C_j
=\int_0^1\{1-B_{k_n,n+1-k_n}(t)\}^{M}\,dt.
\end{equation}
For fixed $M$ and fixed $\alpha\in(0,1)$, as $n\to\infty$,
\begin{equation}\label{eq:shopping-fixed-M}
\E\min_{j\le M}C_j
=\mu_n-a_M\sigma_n+o(\sigma_n),
\end{equation}
where
\[
\mu_n=\frac{k_n}{n+1},
\qquad
\sigma_n^2=\frac{\mu_n(1-\mu_n)}{n+2},
\qquad
a_M=\E\max_{j\le M}Z_j,
\quad Z_j\stackrel{\mathrm{iid}}{\sim}N(0,1).
\]
Thus $\mu_n=1-\alpha+O(n^{-1})$,
$\sigma_n=\sqrt{\alpha(1-\alpha)/n}\{1+o(1)\}$, and
$a_M\sim\sqrt{2\log M}$ only as $M\to\infty$.  In particular, the
$\sqrt{2\log M}$ expression is a large-$M$ approximation, not an exact
finite-$M$ formula; \eqref{eq:shopping-fixed-M} makes no claim for a
joint regime $M=M_n\to\infty$.

Finally, equal support does not make the selected artifact statistically
indistinguishable from an honest artifact.  If $H_n$ is the continuous cdf
of one honest standard threshold, $T_{\mathrm{hon}}\sim H_n$, and
$T_{\min}:=\min_{j\le M}\ttau_j$, then
\begin{equation}\label{eq:shopping-transform}
H_n(T_{\mathrm{hon}})\sim\operatorname{Unif}(0,1),
\qquad
H_n(T_{\min})\sim\operatorname{Beta}(1,M),
\end{equation}
and
\begin{equation}\label{eq:shopping-tv}
\dtv\!\left(\mathcal L(T_{\min}),\mathcal L(T_{\mathrm{hon}})\right)
=\frac{M-1}{M}\,M^{-1/(M-1)}.
\end{equation}
Here $\dtv$ denotes the usual supremum, equivalently one-half
$L^1$, total-variation distance.
For finite $M$ the two laws are nevertheless mutually absolutely
continuous.  Hence a support-only check cannot certify selection with zero
false-positive probability and positive power, although a distributional
audit with nonzero error can distinguish the laws.

\subsection*{Proof of Corollary~\ref{cor:attack}}
For a fixed offset $u$, define
$\rho(u):=\Pd_{\cD,\cD'}\{\cA(\cD;u)\ne\cA(\cD';u)\}$.
The ex-ante replicability assumption is $\E_u\rho(u)\le\rho$.
Conditional on $u$, each pair $(\cD_1,\cD_j)$, $j\ge2$, is an
independent pair from $P^{\otimes n}$.  Hence
\[
\Pd\{\exists j\ge2:\cA(\cD_j;u)\ne\cA(\cD_1;u)\mid u\}
\le\min\{1,(M-1)\rho(u)\}.
\]
Averaging over $u$ and using
$\min\{1,x\}\le x$ proves \eqref{eq:shopping-cap}.  This is a joint
statement over the random offset and samples; ex-ante replicability does
not imply the same bound for every fixed realized offset.

Let $E$ denote the event that all $M$ outputs equal the first output.  On
$E$, any selection rule deploys $\cA(\cD_1;u)$.  Since population
coverage belongs to $[0,1]$,
\begin{align*}
&\E\operatorname{cov}_{P}\{\cA(\cD_J;u)\}
-\E\operatorname{cov}_{P}\{\cA(\cD_1;u)\}\\
&\quad=
\E\!\left[
\left(
\operatorname{cov}_{P}\{\cA(\cD_J;u)\}
-\operatorname{cov}_{P}\{\cA(\cD_1;u)\}
\right)\one_{E^c}
\right]
\ge-\Pd(E^c).
\end{align*}
Combining this inequality with marginal validity,
\eqref{eq:shopping-cap}, and nonnegativity of coverage proves
\eqref{eq:shopping-coverage}.

The beta law follows from the probability integral transform and
\eqref{eq:revised-beta-facts}.  Since the $C_j$ are nonnegative and
independent,
\[
\E\min_j C_j
=\int_0^1\Pd(\min_j C_j>t)\,dt
=\int_0^1\{1-B_{k_n,n+1-k_n}(t)\}^{M}\,dt,
\]
proving \eqref{eq:shopping-beta-exact}.  For fixed $M$, the standardized
beta variables $(C_j-\mu_n)/\sigma_n$ converge jointly to independent
standard normals.  Their second moments equal one, so the minimum of a
fixed number of them is uniformly integrable.  Consequently,
\[
\E\min_{j\le M}\frac{C_j-\mu_n}{\sigma_n}
\longrightarrow
\E\min_{j\le M}Z_j=-a_M,
\]
which proves \eqref{eq:shopping-fixed-M}.  The standard normal
extreme-value relation $a_M\sim\sqrt{2\log M}$ gives the stated large-$M$
interpretation.

For the final claim,
\[
\Pd\{H_n(T_{\min})\le v\}=1-(1-v)^M,
\qquad 0\le v\le1,
\]
which proves \eqref{eq:shopping-transform}.  Relative to the honest
threshold law, the selected law has Radon--Nikodym derivative
$M\{1-H_n(t)\}^{M-1}$.
This derivative is positive almost surely under both laws, proving mutual
absolute continuity.  Its crossing with $1$ occurs at
$v_*=1-M^{-1/(M-1)}$, and hence
\begin{align*}
\dtv\!\left(\mathcal L(T_{\min}),\mathcal L(T_{\mathrm{hon}})\right)
&=\int_0^{v_*}\{M(1-v)^{M-1}-1\}\,dv
=\frac{M-1}{M}\,M^{-1/(M-1)},
\end{align*}
proving \eqref{eq:shopping-tv}. \qed

\begin{corollary}[Robustness to imperfect sharing]\label{cor:robust}
Let analysts $A,B$ use fixed measurable scores $s_A,s_B$ on the same domain with $\|s_A-s_B\|_\infty\le\eta_s$, and independent size-$n$ calibration samples from $P_A,P_B$. Let $F,G$ be the CDFs of the \emph{common} score $s_A(X,Y)$ under $P_A$ and $P_B$ (then $\dtv(P_A,P_B)\le\eta_P$ implies $\|F-G\|_\infty\le\eta_P$, which fails if $G$ is instead defined through $s_B$), both continuous, with Assumption~\ref{ass:margin} for $F$, effective margin $\lambda:=f_{\min}\Delta-\eta_P>0$, $k\le n$, and $n\ge4/\lambda$. With shared $(\beta,u)$,
$\Pd(\tau_A\ne\tau_B)\le\big[\,b_n/(f_{\min}\beta)+(\eta_s+\eta_P/f_{\min})/\beta+4e^{-n\lambda^{2}/2}\,\big]\wedge1$, where $b_n:=\sqrt{2\alpha(1-\alpha)/n+2/n^{2}}\le B_n$: population mismatch shrinks the localization margin from $f_{\min}\Delta$ to $\lambda$. Each analyst separately retains its own marginal coverage $\ge1-\alpha$. Equal \emph{numerical} thresholds under different scores need not give equal maps: under the window condition $e_n(\delta)/f_{\min}+\beta+\eta_s\le\Delta$, on a $1-\delta$ event, whenever $\tau_A=\tau_B=\tau$ the realized test pair is classified differently with probability $\le\min\{1,2f_{\max}\eta_s\}$ (full-map version in the appendix); $\eta_s=0$ is sufficient, but not necessary, for map equality; this motivates distributing the score function as a hash-verified artifact. (Full statement and proof: Appendix~\ref{app:upper}.)
\end{corollary}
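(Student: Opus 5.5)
The argument mirrors the proof of Theorem~\ref{thm:upper}(i), with the common score $s_A$ serving as the reference frame. Write $\ttau_A$ for analyst $A$'s raw threshold, the $k$th order statistic of $s_A$-scores under $P_A$. Write $\ttau_B$ for $B$'s raw threshold, computed from $s_B$ on data from $P_B$. Introduce the auxiliary threshold $\ttau_B^{A}$, the $k$th order statistic of $s_A(X_i',Y_i')$ on $B$'s own sample. Since $\|s_A-s_B\|_\infty\le\eta_s$, order statistics move by at most $\eta_s$ under sup-norm perturbation, so $|\ttau_B-\ttau_B^{A}|\le\eta_s$ pointwise.

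Conditional on the raw thresholds, Lemma~\ref{lem:collision} gives mismatch probability at most $\min\{|\ttau_A-\ttau_B|,\beta\}/\beta$. Bounding this by
\[
\bigl(|\ttau_A-\ttau_B^{A}|+\eta_s\bigr)/\beta
\]
produces the $\eta_s/\beta$ term.

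It remains to control $\E|\ttau_A-\ttau_B^{A}|$. By the probability integral transform, $U_A:=F(\ttau_A)\sim\mathrm{Beta}(k,n+1-k)$ and $V_B:=G(\ttau_B^{A})\sim\mathrm{Beta}(k,n+1-k)$, and the two are independent. Since $\|F-G\|_\infty\le\eta_P$ (which holds because both CDFs are of the same measurable function $s_A(X,Y)$, and event probabilities differ by at most $\dtv$), we have
\[
|F(\ttau_B^{A})-V_B|\le\eta_P .
\]
Hence $|F(\ttau_A)-F(\ttau_B^{A})|\le|U_A-V_B|+\eta_P$. Equation~\eqref{eq:beta-difference} and the variance bound~\eqref{eq:beta-variance-bound} give $\E|U_A-V_B|\le b_n$. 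On the event that both $\ttau_A$ and $\ttau_B^{A}$ lie in $I$, the inverse-Lipschitz bound~\eqref{eq:inverse-lipschitz} converts this into
\[
|\ttau_A-\ttau_B^{A}|\le\bigl(|U_A-V_B|+\eta_P\bigr)/f_{\min},
\]
which yields the terms $b_n/(f_{\min}\beta)$ and $\eta_P/(f_{\min}\beta)$. Finally, take the minimum with $1$.

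The main obstacle is localization for $B$, since Assumption~\ref{ass:margin} concerns $F$, not $G$. I would run Lemma~\ref{lem:local}'s argument with DKW applied to $G$ and the empirical CDF of $s_A$ on $B$'s sample. This gives $|G(\ttau_B^{A})-p|\le\|\widehat G_n-G\|_\infty+2/n$. Therefore $|F(\ttau_B^{A})-p|<f_{\min}\Delta$ whenever $\|\widehat G_n-G\|_\infty+2/n<f_{\min}\Delta-\eta_P=\lambda$. The condition $n\ge4/\lambda$ gives $\lambda-2/n\ge\lambda/2$, so the tail is at most $2e^{-n\lambda^2/2}$. For $A$, the same bound holds because $f_{\min}\Delta\ge\lambda$. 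A union bound gives $4e^{-n\lambda^2/2}$.

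Marginal coverage for each analyst follows from Theorem~\ref{thm:upper}(ii), applied to that analyst's own score and population, since upward rounding preserves it for any continuous score law.

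For the map-equality claim under $\tau_A=\tau_B=\tau$, the test pair is classified differently exactly when $s_A$ and $s_B$ fall on opposite sides of $\tau$. This forces $s_A(X,Y)\in(\tau-\eta_s,\tau+\eta_s]$. On the $1-\delta$ DKW event, the window condition $e_n(\delta)/f_{\min}+\beta+\eta_s\le\Delta$ places this interval inside $I$. So its $F$-mass is at most $2f_{\max}\eta_s$, capped at $1$.
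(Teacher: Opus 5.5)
Your argument follows the paper's proof essentially step for step. The matching steps are the auxiliary threshold $\ttau_B^{A}$ with the $1$-Lipschitz order-statistic bound, the probability integral transform under $G$, and the transfer $\|F-G\|_\infty\le\eta_P$ that localizes $\ttau_B^{A}$ with effective margin $\lambda$. The inverse-Lipschitz conversion, the collision lemma with the Beta variance bound $\E|U_A-V_B|\le b_n$, and the window argument for the test-pair claim also match.

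One step does not go through as justified: analyst $B$'s marginal coverage.
\begin{itemize}
\item You invoke Theorem~\ref{thm:upper}(ii) ``for any continuous score law.'' But the corollary assumes continuity only for the laws of the common score $s_A(X,Y)$ under $P_A$ and $P_B$.
\item Analyst $B$ calibrates and deploys with $s_B$, and the law of $s_B(X,Y)$ under $P_B$ may have atoms. For example, $s_B:=\eta_s\lfloor s_A/\eta_s\rfloor$ satisfies $\|s_A-s_B\|_\infty\le\eta_s$ and is discrete.
\item Theorem~\ref{thm:upper} is stated for continuous $F$, and its marginal argument uses the uniform rank of continuous scores. So it does not apply to $B$ as cited.
\end{itemize}

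The conclusion is still true, and the paper supplies the missing step with a tie-robust rank argument:
\begin{itemize}
\item Append independent continuous tie-breakers to the $n+1$ exchangeable $s_B$-scores and rank the pairs lexicographically. The test pair's tie-broken rank is uniform on $\{1,\dots,n+1\}$.
\item If that rank is at most $k$, then at least $n-k+1$ calibration scores are $\ge S_{n+1}$, so $S_{n+1}\le S_{(k)}$.
\item Hence $\Pd\{S_{n+1}\le S_{(k)}\}\ge k/(n+1)\ge1-\alpha$. Upward rounding only enlarges the set, which gives $B$'s marginal coverage.
\end{itemize}
With this fix in place of the appeal to Theorem~\ref{thm:upper}(ii), your proof is complete.
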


\paragraph{Full statement of Corollary~\ref{cor:robust}.}
Let analysts $A$ and $B$ use independent calibration samples of size
$n$ from $P_A$ and $P_B$, respectively.  Let their score functions
$s_A,s_B:\mathcal X\times\mathcal Y\to\mathbb R$ be fixed measurable
maps on the same domain.  Let $\eta_s,\eta_P\ge0$, and assume
\begin{equation}\label{eq:robust-score}
\|s_A-s_B\|_{\infty}\le\eta_s.
\end{equation}
Define the common-score cdfs
\begin{equation}\label{eq:robust-cdfs}
F(t):=P_A\{s_A(X,Y)\le t\},
\qquad
G(t):=P_B\{s_A(X,Y)\le t\}.
\end{equation}
Assume $F$ and $G$ are continuous and
\begin{equation}\label{eq:robust-cdf-gap}
\|F-G\|_{\infty}\le\eta_P.
\end{equation}
Condition \eqref{eq:robust-cdf-gap} follows, for example, from
$\dtv(P_A,P_B)\le\eta_P$, because the same fixed score $s_A$ appears in
both cdfs.  It would not, in general, follow from total variation if $G$
were instead defined using $s_B$.
Let $p:=1-\alpha$, $q:=F^{-1}(p)$, and suppose $F$ is absolutely
continuous on $I=[q-\Delta,q+\Delta]$, with density satisfying
$f_{\min}\le f\le f_{\max}$ almost everywhere on $I$.  Define
\begin{equation}\label{eq:robust-effective-margin}
\lambda:=f_{\min}\Delta-\eta_P>0,
\qquad
b_n:=\sqrt{\frac{2\alpha(1-\alpha)}n+\frac2{n^2}},
\end{equation}
and assume
$k_n=\lceil p(n+1)\rceil\le n$ and $n\ge4/\lambda$.
If both analysts use the same grid width $\beta>0$ and the same
independent offset $u\sim\operatorname{Unif}[0,\beta)$, then their
numerical rounded thresholds satisfy
\begin{equation}\label{eq:robust-mismatch}
\Pd(\tau_A\ne\tau_B)
\le
\left[
\frac{b_n}{f_{\min}\beta}
+\frac{\eta_s+\eta_P/f_{\min}}{\beta}
+4\exp\!\left\{-\frac n2
 (f_{\min}\Delta-\eta_P)^2\right\}
\right]\wedge1.
\end{equation}
Thus population mismatch reduces the effective localization margin from
$f_{\min}\Delta$ to $f_{\min}\Delta-\eta_P$.  Each analyst separately
retains split-conformal marginal coverage at least $1-\alpha$ for a fresh
test observation from its own population.  Bound
\eqref{eq:robust-mismatch} concerns equality of the numerical thresholds;
when $s_A\ne s_B$, equal thresholds need not define equal set maps.

Now let a fresh test pair $(X,Y)\sim P_A$ be independent of the entire
calibration experiment.  If, for some $\delta\in(0,1)$,
\begin{equation}\label{eq:robust-agreement-condition}
\frac{e_n(\delta)}{f_{\min}}+\beta+\eta_s\le\Delta,
\end{equation}
then, with probability at least $1-\delta$ over the calibration
experiment, every realization for which $\tau_A=\tau_B=: \tau$ satisfies
\begin{equation}\label{eq:robust-pair-agreement}
\Pd_{(X,Y)\sim P_A}\!\left(
\one\{s_A(X,Y)\le\tau\}\ne\one\{s_B(X,Y)\le\tau\}
\,\middle|\,\cD_A,\cD_B,u
\right)
\le \min\{1,2f_{\max}\eta_s\}.
\end{equation}
Consequently,
\begin{equation}\label{eq:robust-pair-unconditional}
\Pd\!\left(
\tau_A=\tau_B,\;
\one\{s_A(X,Y)\le\tau_A\}\ne
\one\{s_B(X,Y)\le\tau_B\}
\right)
\le \min\{1,\delta+2f_{\max}\eta_s\}.
\end{equation}
If $\mathcal Y$ is finite and, for every $y\in\mathcal Y$, the cdf of
$s_A(X,y)$ under $X\sim P_{A,X}$ is absolutely continuous on $I$ with
density bounded above by $L_y$ almost everywhere, then on the same
high-probability event and whenever
$\tau_A=\tau_B=\tau$,
\begin{equation}\label{eq:robust-full-set}
\Pd_{X\sim P_{A,X}}\{\cC^{s_A}_{\tau}(X)\ne
\cC^{s_B}_{\tau}(X)\}
\le \min\left\{1,2\eta_s\sum_{y\in\mathcal Y}L_y\right\}.
\end{equation}
The condition $\eta_s=0$ is sufficient, but not necessary, for exact
equality of the two set maps at a common threshold.  The exact condition
is
$\one\{s_A(x,y)\le\tau\}=\one\{s_B(x,y)\le\tau\}$
for every $(x,y)$.

\subsection*{Proof of Corollary~\ref{cor:robust}}
Let $\ttau_A$ be the $k_n$th order statistic of the $s_A$-scores in
analyst $A$.  On analyst $B$'s sample, let $\ttau_B$ be the
$k_n$th order statistic of the $s_B$-scores and let $\ttau_B^{A}$ be the
$k_n$th order statistic obtained by applying $s_A$ to that same sample.
An order statistic is $1$-Lipschitz in the $\ell_\infty$ norm of the
sample vector, so \eqref{eq:robust-score} gives
\begin{equation}\label{eq:robust-order-lipschitz}
|\ttau_B-\ttau_B^{A}|\le\eta_s.
\end{equation}
By continuity and the probability integral transform,
$U:=F(\ttau_A)$ and $V:=G(\ttau_B^{A})$
are independent copies of
$\operatorname{Beta}(k_n,n+1-k_n)$.  Define
$E:=\{|U-p|<\lambda,\ |V-p|<\lambda\}$.
For either transformed order statistic, DKW and
$|k_n/n-p|<2/n$ give
\[
\Pd(|U-p|\ge\lambda)
\le2\exp\{-2n(\lambda-2/n)^2\}
\le2\exp(-n\lambda^2/2),
\]
where the last inequality uses $n\ge4/\lambda$; the same bound holds for
$V$.  Thus
\begin{equation}\label{eq:robust-localization}
\Pd(E^c)\le4\exp(-n\lambda^2/2).
\end{equation}
On $E$,
$|F(\ttau_A)-p|<\lambda\le f_{\min}\Delta$, so
$\ttau_A\in I$.  Also,
\[
|F(\ttau_B^{A})-p|
\le|G(\ttau_B^{A})-p|+\|F-G\|_\infty
<\lambda+\eta_P=f_{\min}\Delta,
\]
so $\ttau_B^{A}\in I$.  The inverse-Lipschitz inequality for $F$ gives
\begin{align}
|\ttau_A-\ttau_B^{A}|
&\le
\frac{|F(\ttau_A)-F(\ttau_B^{A})|}{f_{\min}}
\le\frac{|U-V|+\eta_P}{f_{\min}}.
\label{eq:robust-raw-distance}
\end{align}
Combining \eqref{eq:robust-order-lipschitz} and
\eqref{eq:robust-raw-distance} yields, on $E$,
\[
|\ttau_A-\ttau_B|
\le\frac{|U-V|}{f_{\min}}+\frac{\eta_P}{f_{\min}}+\eta_s.
\]
Conditioning on the raw thresholds and applying the offset-collision
identity therefore gives
\[
\Pd(\tau_A\ne\tau_B)
\le
\frac{\E|U-V|}{f_{\min}\beta}
+\frac{\eta_s+\eta_P/f_{\min}}{\beta}
+4e^{-n\lambda^2/2}.
\]
Finally,
$\E|U-V|\le\sqrt{2\operatorname{Var}(U)}\le b_n$
by \eqref{eq:beta-variance-bound}, proving
\eqref{eq:robust-mismatch}.  For completeness, continuity of analyst
$B$'s own $s_B$-score cdf is not needed for marginal validity.  Append
independent continuous tie breakers to the $n+1$ exchangeable scores and
rank the resulting pairs lexicographically.  The test pair has a uniform
rank.  If its score is strictly larger than the $k_n$th calibration
score, then its tie-broken rank is larger than $k_n$.  Hence
$\Pd\{S_{n+1}\le S_{(k_n)}\}\ge k_n/(n+1)\ge1-\alpha$.
Upward rounding can only enlarge the prediction set.  Applying this
argument to each analyst's own score and population proves both
marginal-coverage claims.

On the analyst-$A$ DKW event of probability at least $1-\delta$,
\[
|F(\ttau_A)-p|\le e_n(\delta),
\qquad
|\ttau_A-q|\le\frac{e_n(\delta)}{f_{\min}}.
\]
Since $\ttau_A\le\tau_A<\ttau_A+\beta$,
condition \eqref{eq:robust-agreement-condition} implies
\begin{equation}\label{eq:robust-random-interval}
[\tau_A-\eta_s,\tau_A+\eta_s]\subseteq I.
\end{equation}
If $\tau_A=\tau_B=\tau$ and the two membership indicators differ, then
$s_A(X,Y)$ and $s_B(X,Y)$ straddle $\tau$.  By
\eqref{eq:robust-score}, this forces
$|s_A(X,Y)-\tau|\le\eta_s$.  The upper density bound on the random
interval in \eqref{eq:robust-random-interval} therefore gives
\[
P_A\{|s_A(X,Y)-\tau|\le\eta_s\}
\le2f_{\max}\eta_s,
\]
which proves \eqref{eq:robust-pair-agreement}.  Integrating over the
exceptional DKW event proves
\eqref{eq:robust-pair-unconditional}.  Finally, a union bound over labels,
followed by the corresponding per-label density bound, proves
\eqref{eq:robust-full-set}. \qed

\paragraph{Full statement of Proposition~\ref{prop:pilot}.}
Suppose the standing continuous-score setup and
Assumption~\ref{ass:margin} hold.  Let the public pilot
$\mathcal D_p$ consist of $n_p=2m$ i.i.d.\ scores with distribution
$F$, independent of all calibration samples, test observations, and
shared random seeds.  Fix
\[
0<h\le\frac{\Delta}{2},
\qquad
c,\delta_p,\rho\in(0,1),
\qquad
k_p:=\left\lceil(1-\alpha)(m+1)\right\rceil\le m .
\]
Split $\mathcal D_p$ into two samples of size $m$.  Let
$\widehat q_p$ be the $k_p$-th order statistic of the first half and
define
\[
W:=[\widehat q_p-h,\widehat q_p+h],
\qquad
N_W:=\#\{S_i\text{ in the second pilot half}:S_i\in W\},
\]
and the safeguarded estimator
\[
\widehat f_{\mathrm L}
:=
\frac{\max\{N_W,1\}}{(1+c)\,2hm}.
\]
For $j\ge1$ and $\eta\in(0,1)$ write
$e_j(\eta):=\sqrt{\log(2/\eta)/(2j)}+2/j$.
Assume
\begin{equation}\label{eq:pilot-size}
e_m(\delta_p/3)
\le \frac{f_{\min}\Delta}{2},
\qquad
m\ge
\frac{3\log(3/\delta_p)}{2c^2hf_{\min}}.
\end{equation}
Then there is an event $\mathcal G_p\in\sigma(\mathcal D_p)$ with
$\Pd(\mathcal G_p)\ge1-\delta_p$ such that, on $\mathcal G_p$,
\begin{equation}\label{eq:pilot-density-band}
W\subseteq I,
\qquad
N_W\ge1,
\qquad
\frac{1-c}{1+c}f_{\min}
\le \widehat f_{\mathrm L}
\le f_{\max}.
\end{equation}
Let $\widehat\kappa\ge\kappa=f_{\max}/f_{\min}$ be a
pre-specified deterministic protocol input and set
\[
B_n
:=
\sqrt{\frac{2\alpha(1-\alpha)}{n}}+\frac{\sqrt2}{n},
\qquad
\beta
:=
\frac{2\widehat\kappa B_n}{\widehat f_{\mathrm L}\rho}.
\]
Let $V\sim\operatorname{Unif}[0,1)$ be shared by the analysts and
independent of the pilot, calibration samples, and test observations,
and set $u=\beta V$.  On $\mathcal G_p$,
\begin{equation}\label{eq:pilot-beta-band}
\frac{2B_n}{f_{\min}\rho}
\le \beta
\le \overline\beta_n
:=
\frac{2(1+c)}{1-c}\,
\frac{\widehat\kappa B_n}{f_{\min}\rho}.
\end{equation}
Let $k_n=\lceil(1-\alpha)(n+1)\rceil$ and suppose additionally that
\begin{equation}\label{eq:pilot-replication-conditions}
k_n\le n,
\qquad
n\ge\frac{4}{f_{\min}\Delta},
\qquad
\overline\beta_n\le\frac{\Delta}{2},
\qquad
4\exp\!\left(-\frac{nf_{\min}^2\Delta^2}{2}\right)
\le\frac{\rho}{2}.
\end{equation}
For every pilot realization in $\mathcal G_p$, two analysts using
that same public pilot and the same shared $V$, but independent
calibration samples $\mathcal D,\mathcal D'$ of size $n$, satisfy
\begin{equation}\label{eq:pilot-conditional-replication}
\Pd_{\mathcal D,\mathcal D',V}
\!\left[
\tau(\mathcal D)\ne\tau(\mathcal D')
\,\middle|\,\mathcal D_p
\right]
\le\rho.
\end{equation}
The probability in \eqref{eq:pilot-conditional-replication} averages
over the shared random offset $V$; no bound conditional on a fixed
realization of $V$ is asserted.  By Theorem~\ref{thm:upper}(iii), the
same upper bound applies to nonidentity of the two deployed classifier
maps.  If the random pilot is also included in the probability space,
then
\begin{equation}\label{eq:pilot-unconditional-replication}
\Pd(\tau\ne\tau')
\le
\rho+(1-\rho)\delta_p
\le\rho+\delta_p.
\end{equation}
In particular, to guarantee an unconditional target
$\rho_\star\in(0,1)$, choose $0<\delta_p<\rho_\star$ and replace
$\rho$ throughout the construction and conditions above by
$\rho_0:=(\rho_\star-\delta_p)/(1-\delta_p)$;
then the right-hand side of
\eqref{eq:pilot-unconditional-replication} equals $\rho_\star$.
For every realization of $\mathcal D_p$ and $V$, including pilot
realizations outside $\mathcal G_p$, upward rounding retains the
conditional-on-protocol marginal conformal guarantee
\begin{equation}\label{eq:pilot-marginal-coverage}
\Pd_{\mathcal D,(X_{n+1},Y_{n+1})}
\!\left[
Y_{n+1}\in\cC_\tau(X_{n+1})
\,\middle|\,\mathcal D_p,V
\right]
\ge1-\alpha.
\end{equation}
Moreover, for any $\delta\in(0,1)$ satisfying
\begin{equation}\label{eq:pilot-coverage-condition}
e_n(\delta)
\le f_{\min}\bigl(\Delta-\overline\beta_n\bigr),
\end{equation}
for every pilot realization in $\mathcal G_p$ and every realization
of $V$, with probability at least $1-\delta$ over the calibration
sample,
\begin{equation}\label{eq:pilot-coverage-band}
1-\alpha-e_n(\delta)
\le F(\tau)
\le
1-\alpha+e_n(\delta)
+
\frac{2(1+c)}{1-c}\,
\widehat\kappa\kappa\,\frac{B_n}{\rho}.
\end{equation}
Thus the final term in \eqref{eq:pilot-coverage-band} is the
rounding-induced contribution to the upper endpoint; the total upper
excess also contains $e_n(\delta)$.  Jointly over the pilot, shared
offset, and calibration sample, \eqref{eq:pilot-coverage-band} holds
with probability at least $1-\delta_p-\delta$.

\subsection*{Proof of Proposition~\ref{prop:pilot}}
Put $r:=1-\alpha$ and let $\widehat F_1$ be the empirical
distribution function of the first pilot half.  Continuity of $F$
implies that the first-half observations are distinct almost surely,
and hence
$\widehat F_1(\widehat q_p)=k_p/m$.
Since $k_p=\lceil r(m+1)\rceil$ and $r\in(0,1)$,
\begin{equation}\label{eq:pilot-rank-error}
0<\frac{k_p}{m}-r<\frac{2}{m}.
\end{equation}
Let $t_m:=\sqrt{\log(6/\delta_p)/(2m)}$
and define
$E_0:=\{\|\widehat F_1-F\|_\infty\le t_m\}$.
The Dvoretzky--Kiefer--Wolfowitz inequality gives
$\Pd(E_0)\ge1-\delta_p/3$.  On $E_0$,
\begin{align}
\left|F(\widehat q_p)-r\right|
&\le
\|\widehat F_1-F\|_\infty
+\left|\frac{k_p}{m}-r\right| \notag\\
&<
t_m+\frac{2}{m}
=e_m(\delta_p/3)
\le\frac{f_{\min}\Delta}{2}.
\label{eq:pilot-localization}
\end{align}
We claim that \eqref{eq:pilot-localization} implies
$|\widehat q_p-q|\le\Delta/2$.  Indeed, if
$\widehat q_p>q+\Delta/2$, then monotonicity and the lower density
bound on $I$ give
\[
F(\widehat q_p)
\ge F(q+\Delta/2)
=F(q)+\int_q^{q+\Delta/2}f(t)\,dt
\ge r+\frac{f_{\min}\Delta}{2},
\]
contradicting \eqref{eq:pilot-localization}.  The case
$\widehat q_p<q-\Delta/2$ is symmetric.  Because
$h\le\Delta/2$, it follows that, on $E_0$,
$W=[\widehat q_p-h,\widehat q_p+h]\subseteq[q-\Delta,q+\Delta]=I$.
Therefore, conditionally on the first pilot half,
\[
p_W
:=\Pd(S\in W\mid\widehat q_p)
=F(\widehat q_p+h)-F(\widehat q_p-h)
=\int_{\widehat q_p-h}^{\widehat q_p+h}f(t)\,dt
\]
satisfies
\begin{equation}\label{eq:pilot-window-mass}
2hf_{\min}\le p_W\le2hf_{\max}.
\end{equation}
The two pilot halves are independent, so, conditionally on the first
half, $N_W\sim\operatorname{Bin}(m,p_W)$.
For $c\in(0,1)$, the multiplicative Chernoff bounds yield
\begin{align*}
\Pd\!\left[
N_W\ge(1+c)mp_W
\,\middle|\,\widehat q_p
\right]
&\le
\exp\!\left(-\frac{c^2mp_W}{3}\right),\\
\Pd\!\left[
N_W\le(1-c)mp_W
\,\middle|\,\widehat q_p
\right]
&\le
\exp\!\left(-\frac{c^2mp_W}{2}\right).
\end{align*}
On $E_0$, \eqref{eq:pilot-window-mass} and
\eqref{eq:pilot-size} imply
$mp_W\ge2mhf_{\min}\ge3\log(3/\delta_p)/c^2$.
Consequently, after conditioning on the first half, each of the two
tail probabilities on $E_0$ is at most $\delta_p/3$.  Thus the event
\[
\mathcal G_p
:=
E_0
\cap
\{(1-c)mp_W<N_W<(1+c)mp_W\}
\]
has probability at least $1-\delta_p$.  On $\mathcal G_p$,
$N_W>(1-c)mp_W>0$; since $N_W$ is integer, $N_W\ge1$, and the
safeguard $\max\{N_W,1\}$ is inactive.  Dividing the preceding strict
inequalities by $(1+c)2hm$ and applying
\eqref{eq:pilot-window-mass} gives
\[
\frac{1-c}{1+c}f_{\min}
\le
\widehat f_{\mathrm L}
\le
f_{\max}.
\]
This proves \eqref{eq:pilot-density-band}.  The safeguard merely makes
the protocol well defined on $\mathcal G_p^c$.

Fix a pilot realization in $\mathcal G_p$.  Since
$\widehat\kappa\ge\kappa=f_{\max}/f_{\min}$,
\[
\beta
=\frac{2\widehat\kappa B_n}{\widehat f_{\mathrm L}\rho}
\ge
\frac{2\widehat\kappa B_n}{f_{\max}\rho}
\ge
\frac{2B_n}{f_{\min}\rho}.
\]
The lower bound on $\widehat f_{\mathrm L}$ similarly gives
$\beta\le\tfrac{2(1+c)}{1-c}\widehat\kappa B_n/(f_{\min}\rho)=\overline\beta_n$.
This proves \eqref{eq:pilot-beta-band}.

Conditional on the public pilot, $\beta$ is fixed, while
$u=\beta V$ is uniform on $[0,\beta)$ and independent of the two
calibration samples.  Conditions
\eqref{eq:pilot-replication-conditions} therefore permit application
of Theorem~\ref{thm:upper}(i).  The probability in the following
display is over the two calibration samples and the shared $V$:
\begin{align*}
\Pd_{\mathcal D,\mathcal D',V}
\!\left[
\tau(\mathcal D)\ne\tau(\mathcal D')
\,\middle|\,\mathcal D_p
\right]
&\le
\frac{B_n}{f_{\min}\beta}
+4\exp\!\left(-\frac{nf_{\min}^2\Delta^2}{2}\right)
\le
\frac{\rho}{2}+\frac{\rho}{2}
=\rho.
\end{align*}
Theorem~\ref{thm:upper}(iii) then gives the corresponding classifier-map
claim.  On $\mathcal G_p^c$ the mismatch probability is at most one,
so averaging over the pilot gives
\begin{align*}
\Pd(\tau\ne\tau')
&\le
\Pd(\mathcal G_p)\rho+\Pd(\mathcal G_p^c)
=
\rho+(1-\rho)\Pd(\mathcal G_p^c)
\le
\rho+(1-\rho)\delta_p.
\end{align*}
If $\rho$ in the construction is replaced by
$\rho_0=(\rho_\star-\delta_p)/(1-\delta_p)$, the last expression is
exactly $\rho_\star$.

For marginal validity, fix arbitrary realizations of
$\mathcal D_p$ and $V$.  Then $(\beta,u)$ is fixed independently of
the calibration sample and test observation, and upward rounding gives
$\tau\ge\ttau_n$ deterministically.  By exchangeability and
continuity of the $n+1$ scores,
\begin{align*}
\Pd_{\mathcal D,(X_{n+1},Y_{n+1})}
\!\left[
Y_{n+1}\in\cC_\tau(X_{n+1})
\,\middle|\,\mathcal D_p,V
\right]
&\ge
\Pd[S_{n+1}\le S_{(k_n)}]
=\frac{k_n}{n+1}
\ge1-\alpha.
\end{align*}
This proves \eqref{eq:pilot-marginal-coverage} for every pilot and
shared-offset realization, including those outside $\mathcal G_p$.

Finally, fix a pilot realization in $\mathcal G_p$ and a realization
of $V$.  By \eqref{eq:pilot-beta-band}, $\beta\le\overline\beta_n$;
hence \eqref{eq:pilot-coverage-condition} implies
\[
e_n(\delta)
\le
f_{\min}(\Delta-\overline\beta_n)
\le
f_{\min}(\Delta-\beta).
\]
Theorem~\ref{thm:upper}(ii) therefore gives, with probability at least
$1-\delta$ over the calibration sample,
\[
1-\alpha-e_n(\delta)
\le F(\tau)
\le
1-\alpha+e_n(\delta)+f_{\max}\beta.
\]
Using $\beta\le\overline\beta_n$ and
$f_{\max}/f_{\min}=\kappa$,
\[
f_{\max}\beta
\le
f_{\max}\overline\beta_n
=
\frac{2(1+c)}{1-c}\,
\widehat\kappa\kappa\,\frac{B_n}{\rho}.
\]
This proves \eqref{eq:pilot-coverage-band}.  A union bound over pilot
and calibration failure gives the asserted joint probability
$1-\delta_p-\delta$.
\qed

\subsection*{Proof of Proposition~\ref{prop:adaptive} (asymptotically exact plug-in)}
Write $\ttau,\ttau'$ for the two order statistics, $\Delta=\ttau-\ttau'$, and $k=\lceil(1-\alpha)(n+1)\rceil$. Since $f$ is continuous and positive at $q$, the central order statistic satisfies $\sqrt n(\ttau-q)\Rightarrow\mathcal N(0,\alpha(1-\alpha)/f(q)^{2})$ \citep[Cor.~21.5]{vaart1998}; by independence, $\sqrt n\,\Delta\Rightarrow\mathcal N(0,2\alpha(1-\alpha)/f(q)^{2})$.

\emph{(i)} Conditionally on $(\ttau,\ttau',\beta_n)$, the shared offset $u\sim\mathrm{Unif}[0,\beta_n)$ places a grid point in $(\ttau\wedge\ttau',\ttau\vee\ttau']$ with probability exactly $\min(|\Delta|/\beta_n,1)$, and the outputs differ iff this happens. Now
$|\Delta|/\beta_n=\big(\sqrt n f(q)|\Delta|/\sqrt{2\alpha(1-\alpha)}\big)\cdot\sqrt{2\alpha(1-\alpha)}\big/\big(\sqrt n f(q)\beta_n\big)$,
the first factor converges to $|Z|$, and $\sqrt n f(q)\beta_n=2\sqrt nB_nf(q)/(\hat f_n\rho)\to_p2\sqrt{2\alpha(1-\alpha)}/\rho$ since $\sqrt nB_n\to\sqrt{2\alpha(1-\alpha)}$ and $\hat f_n\to_pf(q)$. By Slutsky, $|\Delta|/\beta_n\Rightarrow\rho|Z|/2$; since $\min(\cdot,1)$ is bounded and continuous, $\Pd[\tau\neq\tau']=\E\min(|\Delta|/\beta_n,1)\to\E\min(\rho|Z|/2,1)\le(\rho/2)\E|Z|=\rho\sqrt{2/\pi}/2$. The variant width $\beta_n=\sqrt{2/\pi}B_n/(\hat f_n\rho)$ rescales the limit to $\E\min\!\big(\sqrt{\pi/2}\,\rho|Z|,1\big)=\rho(1+o(1))$ as $\rho\to0$.

\emph{(ii)} Given $(\ttau,\beta_n)$, the up-rounding excess $\tau-\ttau=\beta_n\lceil(\ttau-u)/\beta_n\rceil+u-\ttau$ is exactly $\mathrm{Unif}[0,\beta_n)$ in $u$, independent of $(\ttau,\beta_n)$; write it $U\beta_n$ with $U\sim\mathrm{Unif}[0,1)$. Fix $\varepsilon'>0$ and $\eta>0$ with $|f(t)-f(q)|\le\varepsilon'$ on $[q-\eta,q+\eta]$. On $A_n=\{|\ttau-q|\le\eta/2,\ \beta_n\le\eta/2\}$, $|F(\ttau+U\beta_n)-F(\ttau)-f(q)U\beta_n|\le\varepsilon'U\beta_n$; moreover $\Pd(A_n^{c})\to0$, $F$-increments are at most $1$, and $\beta_n\le2B_n/(f_0\rho)$ deterministically, while $\E[1/\hat f_n]\to1/f(q)$ by bounded convergence ($1/\hat f_n\le1/f_0$). Hence $\E[F(\tau)-F(\ttau)]=f(q)\E[U]\,\E[\beta_n](1+O(\varepsilon'))+o(B_n)=(1+o(1))B_n/\rho$ after letting $\varepsilon'\downarrow0$, and $\E F(\ttau)=k/(n+1)$ exactly.

\emph{(iii)} On $A_n\cap\{\hat f_n\ge f(q)-\varepsilon'\}$, whose probability tends to one, $F(\tau)-F(\ttau)\le(f(q)+\varepsilon')\beta_n\le\tfrac{f(q)+\varepsilon'}{f(q)-\varepsilon'}\cdot\tfrac{2B_n}{\rho}=(2+O(\varepsilon'))B_n/\rho$, and $\Pd[F(\ttau)\le k/(n+1)+e_n(\delta)]\ge1-\delta$ by the exact Beta law as in Theorem~\ref{thm:upper}(ii). A union bound gives (iii). \hfill$\square$

\section{Proof of Theorem~\ref{thm:lower}}\label{app:lower}

\paragraph{Scope of the scalar formulation.}
The theorem concerns exact replication of the scalar output index. It
applies to a fixed, sample-independent sublevel family
$\cC_t(x)=\{y:s(x,y)\le t\}$
when the calibrator outputs the canonical index \(t\), replicability is
measured by equality of that index, and the admissible induced score
laws include the hard family used below. For \(S=s(X,Y)\),
\[
\Pd\{Y\in\cC_t(X)\}
=
\Pd\{S\le t\}
=
F_P(t).
\]
Equal indices imply equal set maps, but the converse need not hold.
Thus the theorem does not, without an additional injectivity or
canonicalization assumption, give a lower bound when replicability is
defined only by equality of set maps. Nor is an arbitrary
sample-dependent or refitted nested family automatically covered.

\paragraph{Proof mechanism.}
The proof embeds a binomial statistic into the calibration problem.
It directly bounds the derivative of the seed-conditional acceptance
probability and integrates the replicability constraint across the
hard family. No external replicable-coin theorem is needed, and the
result has no logarithmic loss.

\begin{proof}
Put
\[
p:=1-\alpha,
\qquad
m_-:=p-2\varepsilon,
\qquad
m_+:=p+2\varepsilon,
\qquad
M:=[m_-,m_+],
\]
and $\gamma:=m_+-m_-=4\varepsilon$.
Since $2\varepsilon\le\frac12\min\{\alpha,p\}$,
we have
$m_-\ge p/2>0$ and $m_+\le1-\alpha/2<1$,
so \(M\subset(0,1)\).

\paragraph{C.1: The hard family and its margin properties.}
For \(m\in M\), let \(P_m\) have the following version of its
Lebesgue density on \([0,1]\):
\[
f_m(s)
=
2m\,\one\!\left\{0\le s<\frac12\right\}
+
2(1-m)\,\one\!\left\{\frac12\le s\le1\right\}.
\]
Its CDF is
\[
F_m(t)
=
\begin{cases}
0,
& t<0,\\[1mm]
2mt,
& 0\le t\le\tfrac12,\\[1mm]
m+2(1-m)(t-\tfrac12),
& \tfrac12\le t\le1,\\[1mm]
1,
& t>1.
\end{cases}
\]
Thus \(F_m\) is continuous and strictly increasing on \([0,1]\).
In particular, \(P_m\) is atomless even though its density may jump
at \(1/2\).
The \(p\)-quantile is
\[
q_m
=
\begin{cases}
\dfrac{p}{2m},
& m\ge p,\\[3mm]
\dfrac12+\dfrac{p-m}{2(1-m)},
& m\le p.
\end{cases}
\]
If \(m\ge p\), then
\(m\le p+2\varepsilon\le\frac32p\),
so \(q_m\ge1/3\). If \(m\le p\), write \(d=p-m\). Since
\(0\le d\le2\varepsilon\le\alpha/2\),
we have
\[
q_m
=
\frac12+\frac{d}{2(\alpha+d)}
\le
\frac23.
\]
The opposite bounds are immediate from the two cases. Hence
\(q_m\in[\frac13,\frac23]\)
for every \(m\in M\),
and therefore
\([q_m-\tfrac14,q_m+\tfrac14]\subset[0,1]\).

Moreover,
\(2m\ge2m_-\ge p\) and \(2(1-m)\ge2(1-m_+)\ge\alpha\),
and both possible density values are at most \(2\). Consequently, the
displayed version of \(f_m\) lies in
\([\min\{\alpha,p\},2]\)
throughout
\([q_m-\tfrac14,q_m+\tfrac14]\).
This proves the common margin claim for the entire family.

At the upper endpoint,
\[
q_{m_+}
=
\frac{p}{2m_+}
<
\frac12,
\qquad
\frac12-q_{m_+}
=
\frac{\varepsilon}{m_+}
>
\varepsilon.
\]
At the lower endpoint,
\[
q_{m_-}
=
\frac12+\frac{\varepsilon}{1-m_-},
\qquad
q_{m_-}-\frac12
=
\frac{\varepsilon}{1-m_-}
>
\varepsilon.
\]
Because \(\varepsilon\le1/8\) and both endpoint quantiles belong to
\([1/3,2/3]\),
\[
[q_{m_+}-\varepsilon,q_{m_+}+\varepsilon]
\subset(0,\tfrac12),
\qquad
[q_{m_-}-\varepsilon,q_{m_-}+\varepsilon]
\subset(\tfrac12,1).
\]
Consequently, \(P_{m_+}\) satisfies
Assumption~\ref{ass:margin} with
\(\Delta=\varepsilon\), \(f_{\min}=f_{\max}=2m_+\),
while \(P_{m_-}\) satisfies it with
\(\Delta=\varepsilon\), \(f_{\min}=f_{\max}=2(1-m_-)\).
Thus each endpoint separately has \(\kappa=1\), and the two constant
density values both belong to
\([2\min\{\alpha,1-\alpha\},2]\).

\paragraph{C.2: Accuracy forces a binary decision.}
Define
\[
B(\cD;r)
:=
\one\!\left\{
\cA(\cD;r)<\frac12
\right\}.
\]
Under \(P_{m_+}\), every \(t\ge1/2\) satisfies
\[
F_{m_+}(t)
\ge
m_+
=
p+2\varepsilon
>
p+\varepsilon.
\]
Thus an accurate output must be below \(1/2\), and hence
\[
\Pd_{\substack{r\sim\nu\\
\cD\sim P_{m_+}^{\otimes n}}}
[B(\cD;r)=1]
\ge1-\delta.
\]
Under \(P_{m_-}\), every \(t<1/2\) satisfies
\[
F_{m_-}(t)
\le
m_-
=
p-2\varepsilon
<
p-\varepsilon.
\]
Thus an accurate output must be at least \(1/2\), and hence
\[
\Pd_{\substack{r\sim\nu\\
\cD\sim P_{m_-}^{\otimes n}}}
[B(\cD;r)=1]
\le\delta.
\]
Finally,
\[
\{B(\cD;r)\ne B(\cD';r)\}
\subseteq
\{\cA(\cD;r)\ne\cA(\cD';r)\}.
\]
Therefore \(B\) is also \(\rho\)-replicable at every
\(P_m\), \(m\in M\).

\paragraph{C.3: Binomial reduction and derivative bound.}
Let
\[
N
:=
\sum_{i=1}^n
\one\!\left\{S_i<\frac12\right\}.
\]
Under \(P_m^{\otimes n}\),
\(N\sim\operatorname{Bin}(n,m)\),
and the joint density of the sample, with respect to Lebesgue measure,
is
\(2^n m^N(1-m)^{n-N}\)
almost everywhere.

For \(k\in\{0,\ldots,n\}\), let \(Q_k\) be the law obtained by first
choosing a \(k\)-subset \(J\) of \(\{1,\ldots,n\}\) uniformly and
then, independently, drawing
\[
S_i\sim\operatorname{Unif}[0,\tfrac12)
\quad(i\in J),
\qquad
S_i\sim\operatorname{Unif}[\tfrac12,1]
\quad(i\notin J).
\]
For every \(m\in M\), the conditional law of \(\cD\) given \(N=k\)
is \(Q_k\), which does not depend on \(m\).

For fixed \(r\), define
\[
\psi_r(k)
:=
\int_{[0,1]^n}
\one\!\left\{B(\mathbf s;r)=1\right\}
\,Q_k(\mathrm d\mathbf s)
\in[0,1]
\]
and
\[
h_r(m)
:=
\Pd_{\cD\sim P_m^{\otimes n}}
[B(\cD;r)=1].
\]
Conditioning on \(N\) gives
\[
h_r(m)
=
\sum_{k=0}^n
\psi_r(k)\binom{n}{k}m^k(1-m)^{n-k}.
\]
Joint measurability of \(\cA\) implies that
\(r\mapsto\psi_r(k)\) is measurable. Hence
\((r,m)\mapsto h_r(m)\) is jointly measurable. The endpoint
conclusions give
\[
\E_r h_r(m_-)\le\delta,
\qquad
\E_r h_r(m_+)\ge1-\delta.
\]

For each fixed \(r\), \(h_r\) is a polynomial. In what follows,
\(\E_m\), \(\operatorname{Cov}_m\), and
\(\operatorname{Var}_m\) refer to
\(N\sim\operatorname{Bin}(n,m)\). Differentiating the binomial
probabilities gives
\[
h_r'(m)
=
\E_m\!\left[
\psi_r(N)\frac{N-nm}{m(1-m)}
\right]
=
\frac{\operatorname{Cov}_m(\psi_r(N),N)}
     {m(1-m)}.
\]
By Cauchy--Schwarz,
\(\operatorname{Var}_m(N)=nm(1-m)\),
and, since \(0\le\psi_r(N)\le1\),
\(\operatorname{Var}_m(\psi_r(N))\le h_r(m)(1-h_r(m))\).
Consequently,
\begin{equation}\label{eq:lower-acceptance-derivative}
|h_r'(m)|
\le
\frac{
\sqrt{
\operatorname{Var}_m(\psi_r(N))
\operatorname{Var}_m(N)
}
}{
m(1-m)
}
\le
\sqrt{
\frac{
n\,h_r(m)(1-h_r(m))
}{
m(1-m)
}
}.
\end{equation}

\paragraph{C.4: A wide transition for every good seed.}
Let
\[
G
:=
\left\{
r:
h_r(m_-)\le\frac14
\ \text{and}\
h_r(m_+)\ge\frac34
\right\}.
\]
Markov's inequality and a union bound give
\[
\Pd_r(G^c)
\le
\Pd_r\!\left(h_r(m_-)>\frac14\right)
+
\Pd_r\!\left(1-h_r(m_+)>\frac14\right)
\le8\delta.
\]
Therefore
\begin{equation}\label{eq:lower-good-seeds}
\Pd_r(G)\ge1-8\delta.
\end{equation}
Set
\(v_*:=\inf_{m\in M}m(1-m)>0\).
Fix \(r\in G\). By continuity and the intermediate value theorem, the
following level sets are nonempty and closed in compact intervals, so
the extrema exist:
\[
b_r
:=
\min\left\{
m\in M:h_r(m)=\frac34
\right\},
\qquad
a_r
:=
\max\left\{
m\in[m_-,b_r]:h_r(m)=\frac14
\right\}.
\]
Minimality of \(b_r\), maximality of \(a_r\), and continuity imply
\[
h_r(m)\in\left[\frac14,\frac34\right]
\qquad
\text{for every }m\in[a_r,b_r].
\]
Indeed, a value above \(3/4\) before \(b_r\), or below \(1/4\) after
\(a_r\), would create an earlier \(3/4\) crossing or a later \(1/4\)
crossing by the intermediate value theorem. No monotonicity is being
assumed.
On \([a_r,b_r]\), equation
\eqref{eq:lower-acceptance-derivative} and
\(h_r(m)(1-h_r(m))\le1/4\) imply
\[
|h_r'(m)|
\le
\frac12\sqrt{\frac n{v_*}}.
\]
It follows that
\[
\frac12
=
h_r(b_r)-h_r(a_r)
\le
\int_{a_r}^{b_r}|h_r'(m)|\,\mathrm dm
\le
\frac12\sqrt{\frac n{v_*}}(b_r-a_r).
\]
Therefore
\begin{equation}\label{eq:lower-transition-width}
b_r-a_r
\ge
\sqrt{\frac{v_*}{n}}.
\end{equation}

\paragraph{C.5: Integrating replicability.}
Conditional on a fixed shared seed \(r\), the two decisions based on
independent samples are independent Bernoulli random variables with
success probability \(h_r(m)\). Their conditional disagreement
probability is therefore
\(2h_r(m)(1-h_r(m))\).
Since decision disagreement implies threshold disagreement,
\[
\E_r\!\left[
2h_r(m)(1-h_r(m))
\right]
\le\rho
\qquad
\text{for every }m\in M.
\]
The jointly measurable, nonnegative function
\[
I(r)
:=
\int_M
2h_r(m)(1-h_r(m))\,\mathrm dm
\]
therefore satisfies, by Tonelli's theorem,
\begin{equation}\label{eq:lower-integrated-replicability}
\E_r I(r)
\le
\rho\,|M|
=
\rho\gamma.
\end{equation}
For every \(r\in G\), equation
\eqref{eq:lower-transition-width} gives
\[
I(r)
\ge
\int_{a_r}^{b_r}\frac38\,\mathrm dm
\ge
\frac38\sqrt{\frac{v_*}{n}}.
\]
Combining this pointwise bound with
\eqref{eq:lower-good-seeds} and
\eqref{eq:lower-integrated-replicability}, without requiring
measurability of \(a_r\) or \(b_r\), gives
\[
\rho\gamma
\ge
(1-8\delta)\frac38
\sqrt{\frac{v_*}{n}}.
\]
Consequently,
\[
n
\ge
\frac{
9(1-8\delta)^2v_*
}{
64\rho^2\gamma^2
}.
\]
For every \(m\in M\),
\(m\ge(1-\alpha)/2\) and \(1-m\ge\alpha/2\).
Hence
\(v_*\ge\alpha(1-\alpha)/4\).
Since \(\gamma=4\varepsilon\),
\[
n
\ge
\frac{9(1-8\delta)^2}{4096}
\frac{\alpha(1-\alpha)}
     {\varepsilon^2\rho^2}.
\]
Finally, \(\delta\le1/16\) implies \(1-8\delta\ge1/2\), and therefore
\[
n
\ge
\frac9{16384}
\frac{\alpha(1-\alpha)}
     {\varepsilon^2\rho^2}.
\]
\end{proof}

\paragraph{C.6: Limitation of the hard-family reduction.}
The preceding hard family alone cannot yield the same lower bound for
outputs that may be arbitrary measurable subsets of the score space
instead of sublevel sets from the fixed nested family. Indeed, the
fixed set
\[
A^\star
=
\left[0,\frac{1-\alpha}{2}\right]
\cup
\left[
\frac12,
\frac12+\frac{1-\alpha}{2}
\right]
\]
satisfies
\[
P_m(A^\star)
=
m(1-\alpha)+(1-m)(1-\alpha)
=
1-\alpha
\]
for every \(m\in M\). Thus a deterministic, perfectly replicable
procedure can attain exact coverage on this particular family if it
is allowed such outputs. This observation concerns the reach of this
reduction; it is not a universal positive result for unrestricted set
outputs.
There is no vanishing-margin-width issue in the construction. The
whole family has the common parameters established in C.1; the
convenient displayed choices correspond to
\(\kappa_{\mathrm{common}}=2/\min\{\alpha,1-\alpha\}\).
At \(m=1-\alpha\), the sharp ratio between the two density values is
\(\max\{\alpha,1-\alpha\}/\min\{\alpha,1-\alpha\}\).
Hence the proof establishes the stated
\(\alpha(1-\alpha)/(\varepsilon^2\rho^2)\)
lower bound, but does not by itself show that the \(\kappa^2\) factor
in the upper bound is minimax necessary.

\subsection*{Proof of Corollary~\ref{cor:sizelb} (set-size cost)}
By a union bound, for every atomless $P$, $\Pd[F_P(\cA(\cD;r))\in[1-\alpha,\,1-\alpha+e]]\ge1-2\delta$. Set $\alpha':=\alpha-e/2$ and $\varepsilon':=e/2$, so the band is $[1-\alpha'-\varepsilon',\,1-\alpha'+\varepsilon']$ and $\cA$ satisfies the hypotheses of Theorem~\ref{thm:lower} at $(\alpha',\varepsilon',\rho,\delta'{=}2\delta)$ once the side conditions hold. They do: $\delta'=2\delta\le\tfrac1{16}$; and since $e\le\min(\alpha,1-\alpha)/4$ we have $\min(\alpha',1-\alpha')\ge\min(\alpha,1-\alpha)-e/2\ge\tfrac{7}{8}\min(\alpha,1-\alpha)\ge\tfrac{7}{2}e>4\varepsilon'$, so $\varepsilon'\le\min(\alpha',1-\alpha')/4$. Theorem~\ref{thm:lower} then gives $n\ge\tfrac{9(1-8\delta')^{2}}{4096}\,\alpha'(1-\alpha')/(\varepsilon'^{2}\rho^{2})$, i.e.\ $e=2\varepsilon'\ge\tfrac{3(1-16\delta)}{32}\sqrt{\alpha'(1-\alpha')/n}/\rho$. For the simplified form: if $\alpha\le\tfrac12$ then $\alpha'\ge\tfrac{7}{8}\alpha$ and $1-\alpha'\ge1-\alpha$, so $\alpha'(1-\alpha')\ge\tfrac{7}{8}\alpha(1-\alpha)$; if $\alpha>\tfrac12$, then $\alpha'$ moves toward $\tfrac12$, so $\alpha'(1-\alpha')\ge\alpha(1-\alpha)$ directly once $\alpha'\ge\tfrac12$, and otherwise $\alpha'\in(\tfrac12-\tfrac{e}{2},\tfrac12]$ gives $\alpha'(1-\alpha')\ge\tfrac14-\tfrac{e^{2}}{16}\ge\tfrac78\cdot\tfrac14\ge\tfrac78\alpha(1-\alpha)$. With $\delta\le\tfrac1{32}$, $\tfrac{3(1-16\delta)}{32}\sqrt{7/8}\ge\tfrac{3}{2\cdot32}\cdot0.935\ge\tfrac{3}{70}$. \hfill$\square$

\emph{Scope.} The corollary inherits the scope of Theorem~\ref{thm:lower}: threshold replicability over all atomless laws, with a shared seed. The two-sided band is supplied by the validity hypothesis on one side and the inflation cap on the other; neither alone suffices, and the oracle rule $\tau\equiv F_P^{-1}(1-\alpha)$ shows the worst-case quantifier cannot be dropped.

\subsection*{Proof of Lemma~\ref{lem:churn} (churn localization)}
For a threshold rule, $\cC_\tau(x)=\{y:s(x,y)\le\tau\}$ is monotone in $\tau$, so for any $\tau,\tau'$ the symmetric difference is $\{y:\tau\wedge\tau'<s(x,y)\le\tau\vee\tau'\}$, which is the stated identity. If both outputs lie on the grid $u+\beta\mathbb{Z}$ and $\tau\neq\tau'$, then $(\tau\wedge\tau',\tau\vee\tau']$ is a union of $|\tau-\tau'|/\beta\ge1$ consecutive full cells, so the count is at least the count over one full cell contained in it. Taking $\E_X$ and using that, on the $1-\delta$ event of Theorem~\ref{thm:upper}(ii), both thresholds lie in the stated window, the conditional bound follows with the minimum over cells meeting the window. \hfill$\square$

\section{Proofs for Section~\ref{sec:twolist}}\label{app:twolist}

\paragraph{Full statement of Theorem~\ref{thm:twolist}.}
Fix
$\alpha\in(0,1)$, $\delta\in(0,1)$,
$0<\varepsilon<\min\{\alpha,1-\alpha\}$.
Let $\Delta>0$ and $0<f_{\min}\le f_{\max}<\infty$ be fixed,
public constants.  Let $\mathfrak F$ be the nonempty class of continuous
score distributions satisfying Assumption~\ref{ass:margin} with these
same constants, and set
$\kappa:=f_{\max}/f_{\min}$,
$\beta:=\varepsilon/(2f_{\max})$.
Assume $\beta\le\Delta/2$.

\emph{(a)} Fix $F\in\mathfrak F$, write
$q:=F^{-1}(1-\alpha)$, and work in the standing split-conformal setup
$\cC_t(x):=\{y:s(x,y)\le t\}$,
$S_i:=s(X_i,Y_i)$,
where $S_1,\ldots,S_{n+1}$ are i.i.d.\ with distribution function $F$;
$S_1,\ldots,S_n$ are the calibration scores and $S_{n+1}$ is the test
score.  Suppose
$n\ge\lceil32\kappa^2\log(2/\delta)/\varepsilon^2\rceil$.
For
\[
k_n:=\left\lceil(1-\alpha)(n+1)\right\rceil,
\qquad
\ttau:=S_{(k_n)},
\qquad
\tau:=\beta\left\lceil\frac{\ttau}{\beta}\right\rceil,
\]
there is a sample-independent list of two adjacent grid thresholds
\[
\cL_{n,\delta}(F)
:=\{g_{F,n,\delta},g_{F,n,\delta}+\beta\}
\subset\beta\mathbb Z,
\]
depending only on $F$ and the public protocol parameters, such that
\[
\Pd_F\!\left(
\tau\in\cL_{n,\delta}(F),\ 
1-\alpha-\frac{\varepsilon}{4\kappa}
\le F(\tau)\le
1-\alpha+\frac{\varepsilon}{2}
+\frac{\varepsilon}{4\kappa}
\right)
\ge1-\delta,
\]
where $\Pd_F$ in this display is over the calibration sample.
Consequently, the deterministic-grid version of \ReCal{} is
$(2,\delta)$-list replicable as a numerical-threshold calibrator and,
on the same event,
$F(\tau)\in[1-\alpha-\varepsilon,\,1-\alpha+\varepsilon]$.
Its classifier-valued output belongs to the list
$\{\cC_{g_{F,n,\delta}},\cC_{g_{F,n,\delta}+\beta}\}$,
whose cardinality is at most two.  For any $m\ge1$ analysts using
size-$n$ calibration samples from the same $F$, all $m$ thresholds
belong to this common list with probability at least
$\max\{0,1-m\delta\}$.
Upward rounding also preserves marginal conformal validity:
$\Pd\{Y_{n+1}\in\cC_\tau(X_{n+1})\}\ge1-\alpha$.

\emph{(b)} Suppose additionally that $\delta<1/2$.  Then list size two
is optimal among numerical-threshold lists over the same fixed class
$\mathfrak F$.  More precisely, for every integer $N\ge1$, there is no
single measurable, possibly randomized threshold procedure
$\cA:\R^N\times\mathsf R\longrightarrow\R$
with a fixed seed law $R\sim\nu$, independent of the data distribution,
that satisfies both of the following properties for every
$F\in\mathfrak F$:
\begin{enumerate}
\item there is a deterministic $v_F\in\R$, which may depend on $F$ but
      not on the sample or on $R$, such that
      \[
      \Pd_{\cD\sim F^{\otimes N},\,R\sim\nu}
      \{\cA(\cD;R)=v_F\}\ge1-\delta;
      \]
\item
      \[
      \Pd_{\cD\sim F^{\otimes N},\,R\sim\nu}\!\left\{
      F\bigl(\cA(\cD;R)\bigr)
      \in[1-\alpha-\varepsilon,\,1-\alpha+\varepsilon]
      \right\}\ge1-\delta.
      \]
\end{enumerate}
In fact, impossibility already holds on the translated-uniform
subfamily
\[
\left\{
Q_t:=\operatorname{Unif}\!\left[t,t+f_{\min}^{-1}\right]
:t\in\R
\right\}\subseteq\mathfrak F,
\]
whose local density is constant and whose intrinsic density ratio is
one.

\subsection{Proof of Theorem~\ref{thm:twolist}}

\begin{proof}
\emph{Part (a).}
Put
\[
L_\delta:=\log\frac{2}{\delta},
\qquad
t_n:=\sqrt{\frac{L_\delta}{2n}},
\qquad
e_n(\delta):=t_n+\frac2n,
\qquad
w:=\frac{e_n(\delta)}{f_{\min}}.
\]
The assumed lower bound on $n$ gives
$t_n\le\varepsilon/(8\kappa)$.
Also, $\kappa\ge1$, $L_\delta>\log 2>1/2$, and
$\varepsilon<1/2$, so $2\kappa L_\delta>\varepsilon$.  Therefore the
same sample-size bound implies $n\ge16\kappa/\varepsilon$, and hence
$2/n\le\varepsilon/(8\kappa)$.
Consequently,
\begin{equation}
e_n(\delta)
\le\frac{\varepsilon}{4\kappa}
=\frac{f_{\min}\beta}{2},
\qquad
w\le\frac\beta2.
\label{eq:twolist-en}
\end{equation}
In particular, $2w\le\beta$.  Moreover,
$n\ge16\kappa/\varepsilon$ and $\varepsilon<\alpha$ imply
\[
\frac1{n+1}<\frac1n
\le\frac{\varepsilon}{16\kappa}<\alpha,
\]
so $k_n\le n$ and $S_{(k_n)}$ is well defined.

Let $\widehat F_n$ be the empirical distribution function and define
the Dvoretzky--Kiefer--Wolfowitz event
$E:=\{\|\widehat F_n-F\|_\infty\le t_n\}$.
The DKW inequality gives $\Pd_F(E)\ge1-\delta$.  Since $F$ is
continuous, the calibration scores are almost surely distinct and
$\widehat F_n(\ttau)=k_n/n$.  The ceiling definition gives
$1-\alpha\le k_n/n\le1-\alpha+2/n$.
Thus, on $E$,
\begin{equation}
\bigl|F(\ttau)-(1-\alpha)\bigr|
\le e_n(\delta).
\label{eq:twolist-dkw}
\end{equation}

Continuity and the positive local density imply $F(q)=1-\alpha$.
By \eqref{eq:twolist-en} and $\beta\le\Delta/2$,
\[
e_n(\delta)
\le\frac{f_{\min}\beta}{2}
\le\frac{f_{\min}\Delta}{4}
<f_{\min}\Delta.
\]
If $\ttau<q-\Delta$ or $\ttau>q+\Delta$, monotonicity of $F$ and
integration of the lower density bound on the corresponding side of
$q$ would give
$|F(\ttau)-F(q)|\ge f_{\min}\Delta$, contradicting
\eqref{eq:twolist-dkw}.  Hence $\ttau\in[q-\Delta,q+\Delta]$ on $E$.
Integrating the same lower bound between $q$ and $\ttau$ then yields
\begin{equation}
|\ttau-q|
\le\frac{|F(\ttau)-F(q)|}{f_{\min}}
\le w.
\label{eq:twolist-localization}
\end{equation}

Let $J:=[q-w,q+w]$ and define
\[
g_{F,n,\delta}
:=\beta\left\lceil\frac{q-w}{\beta}\right\rceil,
\qquad
R_\beta(t):=\beta\left\lceil\frac{t}{\beta}\right\rceil.
\]
The interval $J$ has length $2w\le\beta$.  For every $t\in J$,
monotonicity of $R_\beta$ gives
$R_\beta(t)\ge R_\beta(q-w)=g_{F,n,\delta}$.
Furthermore,
\[
t\le q+w=(q-w)+2w
\le(q-w)+\beta\le g_{F,n,\delta}+\beta.
\]
Since $g_{F,n,\delta}+\beta$ is a grid point,
$R_\beta(t)\le g_{F,n,\delta}+\beta$.  Therefore
\begin{equation}
R_\beta(t)
\in\{g_{F,n,\delta},g_{F,n,\delta}+\beta\}
\qquad(t\in J).
\label{eq:twolist-grid}
\end{equation}
This argument includes all endpoint cases, including $2w=\beta$ and
endpoints of $J$ lying on the grid.  Equations
\eqref{eq:twolist-localization} and \eqref{eq:twolist-grid} prove the
list assertion on $E$.

It remains to prove the displayed accuracy band on the same event.
From $|\ttau-q|\le w\le\beta/2$ and $\beta\le\Delta/2$,
\[
\ttau\ge q-\frac\beta2\ge q-\Delta,
\qquad
\ttau+\beta\le q+\frac{3\beta}{2}
\le q+\frac{3\Delta}{4}<q+\Delta.
\]
Thus $[\ttau,\ttau+\beta]$ lies in the interval on which the upper
density bound holds.  Since
$\ttau\le\tau<\ttau+\beta$,
\[
F(\ttau)\le F(\tau)
\le F(\ttau)+f_{\max}\beta.
\]
Using \eqref{eq:twolist-en}, \eqref{eq:twolist-dkw}, and
$f_{\max}\beta=\varepsilon/2$ gives
\[
1-\alpha-\frac{\varepsilon}{4\kappa}
\le F(\tau)
\le1-\alpha+\frac{\varepsilon}{2}
+\frac{\varepsilon}{4\kappa}.
\]
Because $\kappa\ge1$, this interval is contained in
$[1-\alpha-\varepsilon,1-\alpha+\varepsilon]$.  Since
$\Pd_F(E)\ge1-\delta$, the joint list-and-accuracy assertion follows.

For $m$ analysts, let $E_j$ be the corresponding DKW event.  The
same deterministic list works for every analyst, and the union bound
gives
\[
\Pd\!\left(\bigcap_{j=1}^m
\{\tau_j\in\cL_{n,\delta}(F)\}\right)
\ge1-\sum_{j=1}^m\Pd(E_j^c)
\ge1-m\delta.
\]
Combining this inequality with nonnegativity gives the asserted bound.
Independence is not needed for the union-bound argument.

Finally, $\tau\ge\ttau$ and the prediction sets are nested in the
threshold.  Exchangeability and continuity give
\[
\Pd\{Y_{n+1}\in\cC_{\ttau}(X_{n+1})\}
=\Pd\{S_{n+1}\le S_{(k_n)}\}
=\frac{k_n}{n+1}
\ge1-\alpha.
\]
Therefore upward rounding preserves the marginal guarantee.

\emph{Part (b).}
Choose any $F_0\in\mathfrak F$ and let
$q_0:=F_0^{-1}(1-\alpha)$.  Integrating the lower density bound on the
two sides of $q_0$ gives
\[
f_{\min}\Delta
\le F_0(q_0)-F_0(q_0-\Delta)
\le1-\alpha,
\qquad
f_{\min}\Delta
\le F_0(q_0+\Delta)-F_0(q_0)
\le\alpha.
\]
Hence
\begin{equation}
f_{\min}\Delta\le\min\{\alpha,1-\alpha\}.
\label{eq:twolist-compatibility}
\end{equation}
Set $c:=f_{\min}^{-1}$ and
$Q_t:=\operatorname{Unif}[t,t+c]$.  Its $(1-\alpha)$-quantile is
$q_t=t+c(1-\alpha)$,
whose distances from the left and right support endpoints are
$c(1-\alpha)$ and $c\alpha$.  By
\eqref{eq:twolist-compatibility}, both distances are at least $\Delta$.
The density on $[q_t-\Delta,q_t+\Delta]$ is the constant
$1/c=f_{\min}\le f_{\max}$.  Thus $Q_t\in\mathfrak F$ for every
$t\in\R$.  Lemma~\ref{lem:translate} below rules out an unconditional
singleton threshold list with the required accuracy band even on this
subfamily, proving part~\emph{(b)}.
\end{proof}

\begin{lemma}[No unconditional singleton threshold list under translations]
\label{lem:translate}
Fix an integer $N\ge1$, $\alpha\in(0,1)$,
$0<\varepsilon<\min\{\alpha,1-\alpha\}$,
$0<\delta<1/2$, and $c>0$.  Let
$Q_t:=\operatorname{Unif}[t,t+c]$, $t\in\R$,
with distribution function $F_t$.  Let $\nu$ be a fixed probability
law on a measurable seed space $\mathsf R$.  There is no single
measurable procedure $\cA:\R^N\times\mathsf R\to\R$ for which, for
every $t\in\R$, both
\[
\text{there exists }v_t\in\R\text{ such that }
\Pd_t\{\cA(\cD;R)=v_t\}\ge1-\delta
\]
and
\[
\Pd_t\!\left\{
F_t\bigl(\cA(\cD;R)\bigr)
\in[1-\alpha-\varepsilon,1-\alpha+\varepsilon]
\right\}\ge1-\delta
\]
hold, where
$\Pd_t:=Q_t^{\otimes N}\otimes\nu$ is the joint law of
$\cD$ and an independent seed $R$.
\end{lemma}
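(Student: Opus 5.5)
Assume such an $\cA$ exists and argue by contradiction, reusing the heavy-atom mechanism of Theorem~\ref{thm:impossible}(a). Write $\mu_t$ for the law of $\cA(\cD;R)$ under $\Pd_t$. Since $\delta<1/2$, the atom $v_t$ with $\mu_t(\{v_t\})\ge1-\delta>1/2$ is unique. Intersecting the atom event with the accuracy event, which has probability at least $1-2\delta>0$, shows $v_t$ itself must satisfy $F_t(v_t)\in[1-\alpha-\varepsilon,1-\alpha+\varepsilon]$. Because $\varepsilon<\min\{\alpha,1-\alpha\}$, this interval lies strictly inside $(0,1)$, so $v_t\in(t,t+c)$. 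Moreover $v_t-t=cF_t(v_t)$ lies in the fixed compact window $W:=[c(1-\alpha-\varepsilon),c(1-\alpha+\varepsilon)]$. Thus the atom is forced to translate with $t$: $v_t\in t+W$.

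Next show that $t\mapsto v_t$ is locally constant. Compute $\dtv(Q_t,Q_{t'})=\min\{1,|t-t'|/c\}$. By the data-processing and product bounds used in Theorem~\ref{thm:impossible}(a), $\dtv(\mu_t,\mu_{t'})\le N|t-t'|/c$. If $N|t-t'|/c<1-2\delta$, then $\mu_{t'}(\{v_t\})>\delta$. If also $v_t\ne v_{t'}$, the two disjoint atoms would carry total mass exceeding $\delta+(1-\delta)=1$, which is impossible. Hence $v_{t'}=v_t$ whenever $|t-t'|<c(1-2\delta)/N$.

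Chaining this along a finite sequence of steps then gives $v_t=v_0$ for every $t\in\R$. Choose $t$ with $|t|>c$, for example $t=2c$. The translated window $t+W\subset(t,t+c)$ is then disjoint from $W\ni v_0$, contradicting $v_t\in t+W$.

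The main obstacle is only bookkeeping: checking that the chaining step size is uniform in $t$, and that the total-variation bound between translated uniforms is computed correctly. Both are immediate from the explicit formula for $\dtv(Q_t,Q_{t'})$. Measurability of $\{\cA=v\}$ for a fixed point $v\in\R$ is automatic.
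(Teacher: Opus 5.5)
Your proof is correct and follows the same route as the paper. Total-variation continuity of the output laws along translations, $\dtv(\mu_t,\mu_s)\le N\,\dtv(Q_t,Q_s)$, together with uniqueness of a heavy atom makes $v_t$ locally and hence globally constant, while intersecting the atom and accuracy events forces $F_t(v_t)$ into the band, which a constant atom cannot satisfy for every $t$. The differences are cosmetic: you impose the band before proving constancy, you use the sharper step $c(1-2\delta)/N$ of Theorem~\ref{thm:impossible}(a) instead of the paper's $c(1-2\delta)/(2N)$, and you conclude via the disjoint windows $W$ and $2c+W$ rather than by evaluating $F_t(v^\ast)$ at $t=v^\ast$ and $t=v^\ast-c$.
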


\begin{proof}
Suppose, for a contradiction, that such a procedure exists, and let
$\mu_t$ be the law of $\cA(\cD;R)$ under $\Pd_t$.  For all
$s,t\in\R$,
\[
\dtv(Q_t,Q_s)
=\min\left\{\frac{|t-s|}{c},1\right\}.
\]
Tensorization, invariance under adjoining the common independent seed
law, and data processing give
\begin{equation}
\dtv(\mu_t,\mu_s)
\le\dtv(Q_t^{\otimes N}\otimes\nu,
        Q_s^{\otimes N}\otimes\nu)
=\dtv(Q_t^{\otimes N},Q_s^{\otimes N})
\le N\,\dtv(Q_t,Q_s).
\label{eq:translate-tv}
\end{equation}
Define
$\eta:=c(1-2\delta)/(2N)>0$.
If $|t-s|<\eta$, then
$\dtv(\mu_t,\mu_s)<(1-2\delta)/2$.  Therefore
\[
\mu_s(\{v_t\})
\ge\mu_t(\{v_t\})-\dtv(\mu_t,\mu_s)
>1-\delta-\frac{1-2\delta}{2}
=\frac12.
\]
Also $\mu_s(\{v_s\})\ge1-\delta>1/2$.  If $v_t\ne v_s$, these two
disjoint atoms would have total $\mu_s$-mass greater than one.  Hence
\begin{equation}
|t-s|<\eta\quad\Longrightarrow\quad v_t=v_s.
\label{eq:translate-local-constancy}
\end{equation}
For arbitrary $s,t\in\R$, divide the finite interval joining them into
finitely many subintervals of length less than $\eta$.  Iterating
\eqref{eq:translate-local-constancy} shows that $v_s=v_t$.  Thus there
is one $v^\ast\in\R$ such that $v_t=v^\ast$ for every $t\in\R$.

Fix $t$.  The singleton-list event
$E_t:=\{\cA(\cD;R)=v^\ast\}$
and the accuracy event
\[
G_t:=\left\{
F_t\bigl(\cA(\cD;R)\bigr)
\in[1-\alpha-\varepsilon,1-\alpha+\varepsilon]
\right\}
\]
each have $\Pd_t$-probability at least $1-\delta$.  Hence
$\Pd_t(E_t\cap G_t)\ge1-2\delta>0$.  On this intersection the
deterministic number $F_t(v^\ast)$ lies in the accuracy band, so
\begin{equation}
F_t(v^\ast)
\in[1-\alpha-\varepsilon,1-\alpha+\varepsilon]
\qquad\text{for every }t\in\R.
\label{eq:translate-all-t}
\end{equation}
But
\[
F_t(v^\ast)=
\begin{cases}
0, & v^\ast\le t,\\[1mm]
(v^\ast-t)/c, & t<v^\ast<t+c,\\[1mm]
1, & v^\ast\ge t+c.
\end{cases}
\]
Taking $t=v^\ast$ gives $F_t(v^\ast)=0$, whereas taking
$t=v^\ast-c$ gives $F_t(v^\ast)=1$.  The assumed inequality
$\varepsilon<\min\{\alpha,1-\alpha\}$ excludes both values from the
band in \eqref{eq:translate-all-t}, a contradiction.
\end{proof}

\begin{remark}[Threshold lists versus classifier-map lists]
The optimality assertion in Theorem~\ref{thm:twolist}(b) is directly
about numerical thresholds.  It also transfers to extensional equality
of classifier maps whenever the translated-uniform score family above
is admitted by the application and
\[
t_1<t_2
\quad\Longrightarrow\quad
\text{there exists $(x,y)$ with }t_1<s(x,y)\le t_2.
\]
This condition makes $t\mapsto\cC_t$ injective; a dense attainable score
range on the admissible threshold domain is sufficient.
\end{remark}

\paragraph{Full-conformal 1-NN convention.}
Let
\(
\cD=((X_1,Y_1),\ldots,(X_n,Y_n))
\)
and, for a candidate \((x,y)\), write
\[
(\widetilde X_i,\widetilde Y_i)=(X_i,Y_i),
\quad 1\le i\le n,
\qquad
(\widetilde X_{n+1},\widetilde Y_{n+1})=(x,y).
\]
For each \(i\in\{1,\ldots,n+1\}\), let
\[
\nu_i^{x,y}(\cD)
\in
\operatorname*{arg\,min}_{\substack{1\le j\le n+1\\j\ne i}}
|\widetilde X_i-\widetilde X_j|
\]
be chosen using a fixed measurable rule, common to all runs, for which
the resulting score vector is permutation equivariant. For example,
among the distance minimizers one may select the lexicographically
smallest value \((\widetilde X_j,\widetilde Y_j)\), and then use the
smallest index if that same pair occurs more than once. The final index
tie-break need not itself be permutation equivariant, but all indices
remaining at that stage have the same pair value; hence the induced
score vector is permutation equivariant. Define the refitted
leave-one-out 1-nearest-neighbor scores
\[
R_i^{x,y}(\cD)
:=
\bigl|
\widetilde Y_i-
\widetilde Y_{\nu_i^{x,y}(\cD)}
\bigr|.
\]
Set
\[
c_{\cD}(x,y)
:=
\sum_{i=1}^{n}
\one\!\left\{
R_i^{x,y}(\cD)\ge R_{n+1}^{x,y}(\cD)
\right\},
\qquad
\widehat p_{\cD}(x,y)
:=
\frac{1+c_{\cD}(x,y)}{n+1},
\]
and, for \(m\in\{0,1,\ldots,n+1\}\), define the rank-cutoff map
\[
\widehat\cC_{\cD}^{(m)}(x)
:=
\{y\in\R:c_{\cD}(x,y)\ge m\}.
\]
Thus
\(\widehat\cC_{\cD}^{(0)}\equiv\R\) and
\(\widehat\cC_{\cD}^{(n+1)}\equiv\varnothing\).

\paragraph{Full statement of Proposition~\ref{prop:fullconf}.}
Let \(n\ge2\), let \(P\) be a Borel probability law on \(\R^2\) such
that \(P\ll\mathrm{Leb}^2\), and let \(\cD,\cD'\) be independent
\(P^{\otimes n}\) samples. All events involving extensional equality
of set-valued maps are understood in the completion of the relevant
joint law; equivalently, each zero-probability assertion below may be
read as an outer-probability assertion.
Then, for every \(m\in\{1,\ldots,n\}\),
\[
\Pd\!\left[
\widehat\cC_{\cD}^{(m)}
\equiv
\widehat\cC_{\cD'}^{(m)}
\right]=0,
\]
where \(\equiv\) means pointwise equality as set-valued maps on all of
\(\R\).
More generally, let \(U\) be a shared random seed independent of
\((\cD,\cD')\), and let
\(M=M(U)\in\{0,1,\ldots,n+1\}\)
be measurable. If both runs use the same cutoff \(M(U)\), then
\[
\Pd\!\left[
\widehat\cC_{\cD}^{(M)}
\equiv
\widehat\cC_{\cD'}^{(M)}
\right]
=
\Pd\!\left[M\in\{0,n+1\}\right].
\]
In particular, if
\(\Pd[M\in\{1,\ldots,n\}]=1\),
then the exact-agreement probability is zero.
For the usual convention
\[
\widehat\cC_{\cD,\alpha}(x)
:=
\{y\in\R:\widehat p_{\cD}(x,y)>\alpha\},
\qquad 0<\alpha<1,
\]
one has
\[
\widehat\cC_{\cD,\alpha}
=
\widehat\cC_{\cD}^{(m_\alpha)},
\qquad
m_\alpha:=\lfloor\alpha(n+1)\rfloor.
\]
Consequently,
\[
\Pd\!\left[
\widehat\cC_{\cD,\alpha}
\equiv
\widehat\cC_{\cD',\alpha}
\right]=0
\quad\text{for every}\quad
\alpha\in\left[\frac1{n+1},1\right),
\]
whereas \(0<\alpha<1/(n+1)\) gives the trivial always-\(\R\) map.
No condition on whether \(\alpha(n+1)\) is an integer is needed.

\begin{remark}[Scope of Proposition~\ref{prop:fullconf}]
\label{rem:fullconf-scope}
At the level of thresholded acceptance decisions, the proposition
covers precisely every sample-independent transformation of the
attainable \(p\)-value grid that is nondecreasing for each realization
of a shared seed and is followed by a fixed strict threshold.
Indeed, let \(U\) be the independent shared seed from the proposition,
fix \(a\in\R\), and set
\[
\mathcal P_n
:=
\left\{
\frac{1+c}{n+1}:c=0,\ldots,n
\right\}.
\]
For each seed value \(u\), let both runs use the same nondecreasing map
\(q_u:\mathcal P_n\to\R\),
where \(u\mapsto q_u(v)\) is measurable for every
\(v\in\mathcal P_n\). If the rounded rule accepts when
\(q_U(\widehat p_{\cD}(x,y))>a\),
define
\[
M(U)
:=
\min\left\{
c\in\{0,\ldots,n\}:
q_U\!\left(\frac{1+c}{n+1}\right)>a
\right\},
\qquad
\min\varnothing:=n+1.
\]
Because \(\mathcal P_n\) is finite and its coordinate maps are
measurable, \(M(U)\) is measurable. Monotonicity gives, pointwise,
\[
\{y:q_U(\widehat p_{\cD}(x,y))>a\}
=
\widehat\cC_{\cD}^{(M(U))}(x).
\]
Thus its exact-agreement probability is
\(\Pd[M(U)\in\{0,n+1\}]\),
and it is zero if the induced cutoff is nondegenerate almost surely.
Conversely, every cutoff in \(\{0,\ldots,n+1\}\) is representable in
this form; for example, use
\[
q\!\left(\frac{1+c}{n+1}\right)
=
\one\{c\ge m\},
\qquad
a=\frac12.
\]
Degenerate output-collapsing transformations correspond to the two
endpoint cutoffs and are the exceptions to the zero-agreement
conclusion. For example, mapping every \(p\)-value to \(1\) and taking
\(a<1\) gives the exactly replicable always-\(\R\) map. The result
does not cover run-specific data-dependent cutoffs, candidate-wise
randomization, seed-dependent changes to the score map, or arbitrary
postprocessing. Accordingly, it shows only that cutoff-level
stabilization is insufficient; it does not prove that equality of the
entire fitted training maps is logically necessary.
Finally, the conclusion concerns pointwise equality on the whole
domain \(\R\). Because the proof uses a sufficiently far-right input,
it does not by itself rule out equality merely \(P_X\)-almost surely
or only on \(\operatorname{supp}(P_X)\).
\end{remark}

\subsection{Proof of Proposition~\ref{prop:fullconf}}

\begin{proof}
Because \(P\ll\mathrm{Leb}^2\), both of its one-dimensional marginals
are absolutely continuous and
\(P^{\otimes n}\ll\mathrm{Leb}^{2n}\).
Hence, with probability one, the \(X_i\)'s are pairwise distinct, every
\(X_i\) has a unique nearest neighbor among the other training inputs,
and the \(Y_i\)'s are pairwise distinct. Indeed, the exceptional events
are contained in finite unions of affine hyperplanes of the forms
\[
X_i=X_j,
\qquad
2X_i=X_j+X_k,
\qquad
Y_i=Y_j.
\]
Here the first and third forms use distinct pairs of indices, while the
midpoint form uses distinct \(i,j,k\). Let \(\mathcal G\) denote this
probability-one event. The stipulated tie rule defines the prediction
map everywhere; on \(\mathcal G\), it is irrelevant for the
within-sample neighbors used below.

Fix \(\cD\in\mathcal G\), and define
\[
J=J(\cD)
:=
\operatorname*{arg\,max}_{1\le i\le n}X_i,
\qquad
N_i=N_i(\cD)
:=
\operatorname*{arg\,min}_{\substack{1\le j\le n\\j\ne i}}
|X_i-X_j|,
\]
\[
d_i:=|X_i-X_{N_i}|,
\qquad
T(\cD):=\max_{1\le i\le n}(X_i+d_i).
\]
If \(x>T(\cD)\), then \(x>X_i\) for every \(i\), because \(d_i>0\).
The candidate's unique nearest neighbor is therefore \(J\). Moreover,
for every training index \(i\),
\(|x-X_i|=x-X_i>d_i\),
so the candidate is not a nearest neighbor of any training
observation. Consequently, for every \(y\in\R\), augmentation by
\((x,y)\) leaves all training nearest neighbors unchanged and
\[
R_i^{x,y}(\cD)
=
r_i(\cD)
:=
|Y_i-Y_{N_i}|,
\quad 1\le i\le n,
\qquad
R_{n+1}^{x,y}(\cD)=|y-Y_J|.
\]

Write the training residuals in decreasing order, with multiplicities,
as
\(r_{[1]}(\cD)\ge\cdots\ge r_{[n]}(\cD)\),
and fix \(m\in\{1,\ldots,n\}\). For every \(a\ge0\),
\[
\#\{i:r_i(\cD)\ge a\}\ge m
\quad\Longleftrightarrow\quad
a\le r_{[m]}(\cD).
\]
It follows that, for every \(x>T(\cD)\),
\[
\widehat\cC_{\cD}^{(m)}(x)
=
\bigl[
Y_J-r_{[m]}(\cD),
\,Y_J+r_{[m]}(\cD)
\bigr].
\]
Define the eventual far-right upper endpoint
\(B_m(\cD):=Y_J+r_{[m]}(\cD)\).
For completeness, set \(B_m(\cD):=0\) on \(\mathcal G^c\). This
defines a Borel random variable: \(\mathcal G\) is Borel, the unique
finite argmax and argmin maps are Borel on \(\mathcal G\), and absolute
values and finite order statistics are Borel measurable.

We claim that \(B_m(\cD)\) is atomless. Fix \(t\in\R\). On
\(\mathcal G\), the order statistic \(r_{[m]}\) equals \(r_i\) for at
least one \(i\), and \(Y_i\ne Y_{N_i}\). Thus, for some
\(i\in\{1,\ldots,n\}\) and \(\sigma\in\{-1,1\}\),
\(r_{[m]}=\sigma(Y_i-Y_{N_i})\).
Therefore
\[
\{B_m=t\}\cap\mathcal G
\subseteq
\bigcup_{\substack{1\le j,i,k\le n,\ i\ne k\\
                    \sigma\in\{-1,1\}}}
\{Y_j+\sigma(Y_i-Y_k)=t\}.
\]
Each set in this finite union is a proper affine hyperplane. Indeed,
the coefficient vector of \(Y_j+\sigma(Y_i-Y_k)\) is
\(e_j+\sigma(e_i-e_k)\),
whose coordinate sum is \(1\), so it cannot be the zero vector.
Absolute continuity of \(P^{\otimes n}\) makes every such hyperplane
null. Since \(\Pd[\mathcal G^c]=0\),
\[
\Pd[B_m(\cD)=t]
\le
\Pd[\mathcal G^c]
+
\sum_{\substack{1\le j,i,k\le n,\ i\ne k\\
                 \sigma\in\{-1,1\}}}
\Pd\!\left[
Y_j+\sigma(Y_i-Y_k)=t
\right]
=0
\]
for every \(t\in\R\), proving atomlessness.
This argument does not require the residuals to be distinct. In
particular, if \(N_i=k\) and \(N_k=i\), then the structural duplicate
\(r_i=r_k=|Y_i-Y_k|\)
is handled without modification.

Now let \(\cD,\cD'\) be independent. Then
\(B_m(\cD)\) and \(B_m(\cD')\) are independent and atomless, whence
\(\Pd[B_m(\cD)=B_m(\cD')]=0\).
On the probability-one event
\(\{\cD\in\mathcal G,\ \cD'\in\mathcal G\}\),
take the common input
\(x_*:=1+\max\{T(\cD),T(\cD')\}\).
If
\(\widehat\cC_{\cD}^{(m)}\equiv\widehat\cC_{\cD'}^{(m)}\),
then their interval values at \(x_*\) are equal, so their upper
endpoints are equal. Writing
\(E_m:=\{\widehat\cC_{\cD}^{(m)}\equiv\widehat\cC_{\cD'}^{(m)}\}\),
we therefore have
\[
E_m
\subseteq
\{B_m(\cD)=B_m(\cD')\}
\cup
\{\cD\notin\mathcal G\}
\cup
\{\cD'\notin\mathcal G\}.
\]
The right-hand side is a measurable null set. Hence \(E_m\) has outer
probability zero and belongs to the completed product sigma-field,
where it has probability zero. This proves the fixed-\(m\) assertion
without requiring a separate function-space sigma-field.

For the shared-seed claim, write
\[
E_M
:=
\left\{
\widehat\cC_{\cD}^{(M)}
\equiv
\widehat\cC_{\cD'}^{(M)}
\right\},
\qquad
A:=\{M\in\{0,n+1\}\}.
\]
The two endpoint maps agree identically, so \(A\subseteq E_M\).
Moreover,
\[
E_M\setminus A
\subseteq
\bigcup_{m=1}^{n}
\bigl(\{M=m\}\cap E_m\bigr)
\subseteq
\bigcup_{m=1}^{n}E_m.
\]
After lifting the \(E_m\)'s to the joint space carrying
\((U,\cD,\cD')\), the last union is a finite null set. Thus \(E_M\)
differs from \(A\) only by a subset of a measurable null set, and hence,
in the completed joint law,
\[
\Pd[E_M]
=
\Pd[A]
=
\Pd[M\in\{0,n+1\}].
\]
This argument avoids any need to factor probabilities of
completion-measurable map-identity events.

Finally, since \(c_{\cD}(x,y)\) is integer-valued,
\[
\widehat p_{\cD}(x,y)>\alpha
\quad\Longleftrightarrow\quad
c_{\cD}(x,y)>\alpha(n+1)-1
\quad\Longleftrightarrow\quad
c_{\cD}(x,y)\ge\lfloor\alpha(n+1)\rfloor.
\]
This equivalence holds whether or not \(\alpha(n+1)\) is an integer.
For
\(\alpha\in[1/(n+1),1)\),
the cutoff belongs to \(\{1,\ldots,n\}\), so the zero-agreement result
applies. For \(0<\alpha<1/(n+1)\), the cutoff is zero and every
candidate is accepted, yielding the always-\(\R\) map. This completes
the proof.
\end{proof}

\begin{remark}[Sharpness in the sample size]
The condition \(n\ge2\) is sharp for this construction. If \(n=1\),
the sole training point and the candidate are each other's nearest
neighbors, their two scores are equal, and
\(\widehat p_{\cD}(x,y)=1\)
for every \((x,y)\).
Hence the usual rule is the always-\(\R\) map for every
\(0<\alpha<1\).
\end{remark}

\section{The distribution-free fallback}\label{app:distfree}

\begin{remark}[\ReCal$^{+}$, distribution-free fallback, extensions]\label{rem:extensions}
Calibrating at rank $k^{+}=\lceil(1-\alpha+e_n(\delta))(n+1)\rceil$ gives $F(\tau)\ge1-\alpha$ with probability $1-\delta$ whenever $k^{+}\le n$ (training-conditional validity; \citealp{vovk2012,bian2023}). Without Assumption~\ref{ass:margin}, replicable binary search over a $b$-bit score grid retains all guarantees at $n=\tOh(\log^{2}(R)\log(1/\delta)/(\varepsilon^{2}\rho^{2}))$ \citep[Thm.~B.6]{kalavasis2024} (Proposition~\ref{prop:distfree}, Appendix~\ref{app:distfree}): the margin improves constants, not feasibility. Every method calibrating one scalar over a nested family \citep{gupta2022} (APS, RAPS, conformal risk control, conformal-LM stopping) admits \ReCal{} unchanged; Section~\ref{sec:exp} instantiates \ReCal-APS, \ReCal-RAPS, and a risk-controlled top-$p$ variant. Composing with the DP-to-replicability conversion of \citet{bun2023} from private prediction sets \citep{angelopoulos2022private} would cost a quadratic sample overhead and still require shared randomness; the direct construction is preferable.
\end{remark}

\begin{proposition}[Distribution-free \ReCal]\label{prop:distfree}
Suppose scores take values on a known grid of $R$ points (e.g.\ $b$-bit floats, $R=2^{b}$). There is a shared-seed calibrator that is $\rho$-replicable over \emph{all} score distributions on the grid and, with probability $\ge1-\delta$, outputs $\tau$ with $F(\tau)\ge1-\alpha-\varepsilon$ and $F(\tau^{-})\le1-\alpha+\varepsilon$, using $n=\tOh\big(\log^{2}(R)\log(1/\delta)/(\varepsilon^{2}\rho^{2})\big)$ samples and no margin assumption.
\end{proposition}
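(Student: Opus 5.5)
The plan is to run the replicable quantile estimator of \citet[Thm.~B.6]{kalavasis2024} on the score grid, and then check that its output has the conformal semantics claimed. Index the grid points as $g_1<\dots<g_R$. The empirical CDF at a fixed grid point is a statistical query with values in $\{0,1\}$. We replace each such query by a replicable statistical query: compute the empirical mean on a fresh batch, then round it on a randomly offset grid of width of order $\varepsilon$, with the offset taken from the shared seed.

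The binary search looks for the smallest index $j$ whose rounded query value is at least $1-\alpha$. It uses $\lceil\log_2R\rceil$ adaptive queries, each answered on a disjoint batch.

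By \citet{ilps2022}, a single rounded query with tolerance $\varepsilon'$ and replicability $\rho'$ needs $O(\log(1/\delta')/(\varepsilon'^2\rho'^2))$ samples. Take $\rho'=\rho/\log_2R$ and $\delta'=\delta/\log_2R$. A union bound over the $\log_2R$ queries then gives $\rho$-replicability.

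Two facts make the composition replicable:
\begin{itemize}
\item Both runs share every query offset.
\item On the event that every query answer agrees, the two searches follow the same path, because each next query point is a deterministic function of the earlier answers.
\end{itemize}
Summing the batch sizes gives $n=O(\log^3R\log(\log R/\delta)/(\varepsilon^2\rho^2))$. The extra $\log R$ factor can be removed by the standard trick of allocating the replicability budget across levels, or by noting that only $\log R$ rounds are adaptive. This yields the stated $\tOh(\log^2(R)\log(1/\delta)/(\varepsilon^2\rho^2))$.

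For accuracy, suppose every rounded answer is within $\varepsilon$ of $F(g_j)$, which holds with probability $1-\delta$. The returned $\tau=g_{j^\ast}$ was accepted, so $F(\tau)\ge1-\alpha-\varepsilon$. Its left neighbour, or equivalently the last point rejected along the search path, was rejected, so $F(\tau^-)\le1-\alpha+\varepsilon$.

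No density or margin condition enters anywhere. The argument uses only the finiteness of the grid and Hoeffding bounds, so it holds over all score distributions on the grid, atoms included.

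The main obstacle is that binary search with noisy answers need not return a monotone crossing. We handle this by taking the final output to be the point at which the search terminates, and checking the two inequalities only at the final accept/reject pair. That pair is adjacent on the grid when the search ends, so $F(\tau^-)$ is bounded by the rejected neighbour's value. If one wants $F(\tau)\ge1-\alpha$ exactly, one can additionally shift the target to $1-\alpha+\varepsilon$, as in \ReCal$^{+}$.
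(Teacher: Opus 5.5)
Your search skeleton, the shared-offset path-coupling argument, and the accuracy check at the final adjacent accept/reject pair are all correct. The gap is the sample-size claim.

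With $T=\lceil\log_2R\rceil$ fresh disjoint batches, each a replicable query at tolerance $\varepsilon$ and replicability $\rho/T$, your own accounting gives $n=O(\log^3R\,\log(\log R/\delta)/(\varepsilon^2\rho^2))$. Neither suggested repair removes the extra $\log R$:
\begin{itemize}
\item Reallocating the budget does not help. A batch of size $n_t$ buys per-round disagreement only of order $1/(\varepsilon\sqrt{n_t})$. So for any split $\rho_1+\dots+\rho_T=\rho$ the total cost is of order $\varepsilon^{-2}\sum_t\rho_t^{-2}\ge T^3/(\varepsilon\rho)^2$, by convexity of $x\mapsto x^{-2}$. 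Equal allocation is already optimal.
\item You cannot relax the tolerance at some levels, since any query may become an endpoint of the final bracket.
\item ``Only $\log R$ rounds are adaptive'' is exactly the count you already used.
\end{itemize}
The statement writes $\log^2(R)$ explicitly, and Remark~\ref{rem:constants} even evaluates $\log_2^2(2^{32})$. So this factor is not one that $\tOh$ may absorb.

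The missing idea, which is what the paper does, is to reuse a single calibration sample for every query. Control all queries at once with the Dvoretzky--Kiefer--Wolfowitz inequality: it gives $\|\widehat F_n-F\|_\infty\le\eta$ simultaneously at every grid point. The data-dependent search path then costs nothing: no fresh batches, and no union bound over queried points.

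The construction runs as follows.
\begin{itemize}
\item At round $t$, compare $\widehat F_n(v)$ with $1-\alpha+u_t$, where $u_t\sim\mathrm{Unif}[-\varepsilon/2,\varepsilon/2]$ comes from the shared seed.
\item At the first round where the two runs disagree, their histories coincide, so they query the same $v$.
\item On both runs' DKW events the two empirical values at $v$ differ by at most $2\eta$. That decision therefore disagrees with probability at most $2\eta/\varepsilon$ over $u_t$.
\item Hence the paths diverge with probability at most $2T\eta/\varepsilon$.
\end{itemize}
Choose $\eta\asymp\varepsilon\rho/\log_2R$, folding the DKW failure probability into $\rho$ and $\delta$. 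This gives $\rho$-replicability at $n\asymp\log_2^2R\,\log(1/\delta)/(\varepsilon^2\rho^2)$, which is the stated rate. Your endpoint argument then carries over unchanged, with accuracy $\varepsilon/2+\eta\le\varepsilon$.
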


\emph{Proof sketch}, after \citet{ilps2022} (replicable statistical queries) and \citet[Thm.~B.6]{kalavasis2024} (replicable quantiles over a discretized range). Maintain a binary search over the sorted grid $v_1<\dots<v_R$. Each run draws one calibration sample and controls $\|\widehat F-F\|_\infty\le\varepsilon\rho/(8\log_2R)$ by the DKW inequality, which holds simultaneously at every grid point so that no union bound over the data-dependent search path is needed, at cost $n=O\big(\log_2^{2}(R)\log(1/\delta)/(\varepsilon^{2}\rho^{2})\big)$ (DKW failure probabilities folded into $\rho$ and $\delta$). At round $t$ of the $\lceil\log_2R\rceil$ rounds, decide ``$F(v)\ge1-\alpha$?'' by comparing $\widehat F(v)$ to $1-\alpha+u_t$, where $u_t\sim\mathrm{Unif}[-\varepsilon/2,\varepsilon/2]$ is drawn from the shared seed. On the two runs' joint DKW event, the empirical values at any queried $v$ differ by at most $\varepsilon\rho/(4\log_2R)$, so the round-$t$ comparisons disagree only if $u_t$ falls in an interval of that length: probability at most $\rho/(4\log_2R)$, hence all rounds agree, and the search paths and outputs are identical, except with probability $\le\rho/4$ on that event. The search invariant ($F$ below target to the left of the bracket, above target minus $\varepsilon$ to the right) yields the two-sided display on the $1-\delta$ event. \qed

\section{Constants: derivations for Remark~\ref{rem:constants}}\label{app:constants}

All three rows analyze calibrated rounding to the \emph{same} target $(\alpha,\varepsilon,\rho,\delta)=(0.1,0.02,0.1,0.05)$ with $\kappa=1$.

\paragraph{(a) Exact-Beta analysis (Corollary~\ref{cor:protocol}).} $n\ge\max\{32\kappa^{2}\alpha(1-\alpha)/(\varepsilon^{2}\rho^{2}),\,8\log(2/\delta)/\varepsilon^{2}\}$: the replicability term is $32\cdot0.09/(4\times10^{-4}\cdot10^{-2})=7.2\times10^{5}$; the coverage term is $8\ln(40)/(4\times10^{-4})\approx7.4\times10^{4}$; hence $n\approx7.2\times10^{5}$.

\paragraph{(b) DKW/union analysis of the same scheme.} Control each run's quantile in probability instead of in expectation: by DKW at level $\rho/4$ per run, with probability $\ge1-\rho/2$ both runs satisfy $|F(\ttau)-(1-\alpha)|\le t+2/n$, $t=\sqrt{\ln(8/\rho)/(2n)}$, whence $|\ttau-\ttau'|\le2(t+2/n)/f_{\min}$ and the offset-collision probability adds $\le2(t+2/n)/(f_{\min}\beta)$. With $\beta=\varepsilon/(2f_{\max})$, requiring $2t/(f_{\min}\beta)=4\kappa t/\varepsilon\le\rho/2$ forces $t\le\varepsilon\rho/(8\kappa)$, i.e.
\[
n\;\ge\;\frac{32\kappa^{2}\ln(8/\rho)}{\varepsilon^{2}\rho^{2}}\;=\;\frac{32\cdot\ln 80}{4\times10^{-6}}\;\approx\;3.5\times10^{7}.
\]
The ratio to (a) is $\ln(8/\rho)/(\alpha(1-\alpha))\approx49$: the expectation-level Beta argument beats the high-probability union argument by exactly the variance-vs-tail gap.

\paragraph{(c) Black-box replicable binary search.} Proposition~\ref{prop:distfree} at $R=2^{32}$ (single-precision score grid) costs $\tOh(\log_2^{2}(R)\log(1/\delta)/(\varepsilon^{2}\rho^{2}))$; even suppressing the $\log(1/\delta)$ and all constants, $\log_2^{2}(2^{32})/(\varepsilon^{2}\rho^{2})=1024/(4\times10^{-6})\approx2.6\times10^{8}$, some $\sim350\times$ row (a). Note that (c) requires no margin assumption; the comparison quantifies the cost of Assumption~\ref{ass:margin}, and rows (a)/(b) quantify what the assumption provides.

\paragraph{Empirical anchor.} The measured frontier gives $n^{\ast}(0.04,\,0.1)\approx3.6\times10^{4}$ (Figure~\ref{fig:nstar}); row (a) at those parameters is $32\cdot0.09/(1.6\times10^{-3}\cdot10^{-2})=1.8\times10^{5}$: the certified constants are $\approx5\times$ conservative against practice on this testbed.

\section{Experimental details}\label{app:exp}

\begin{figure}[t]
\centering
\includegraphics[width=0.86\linewidth]{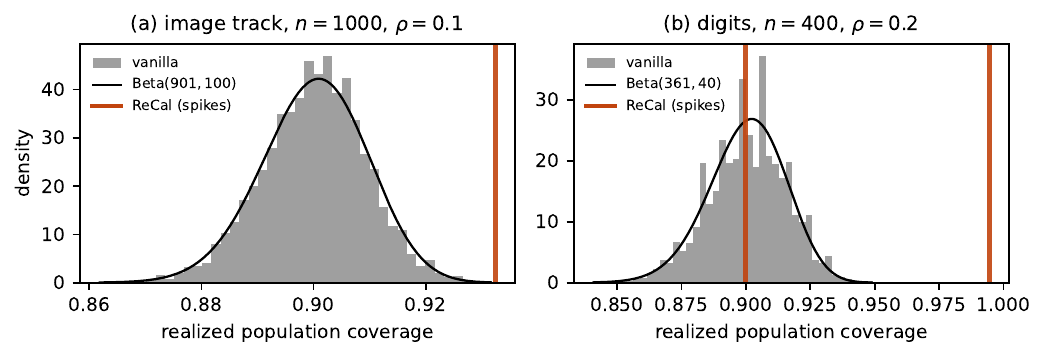}
\caption{Realized population coverage across calibration redraws: standard calibration follows the exact $\mathrm{Beta}(k,n{+}1{-}k)$ law; \ReCal{} collapses it onto at most a few shared grid atoms, all $\ge1-\alpha$; this mechanism underlies both replication and the selection bound.}
\label{fig:beta}
\end{figure}

\begin{figure}[t]
\centering
\includegraphics[width=0.94\linewidth]{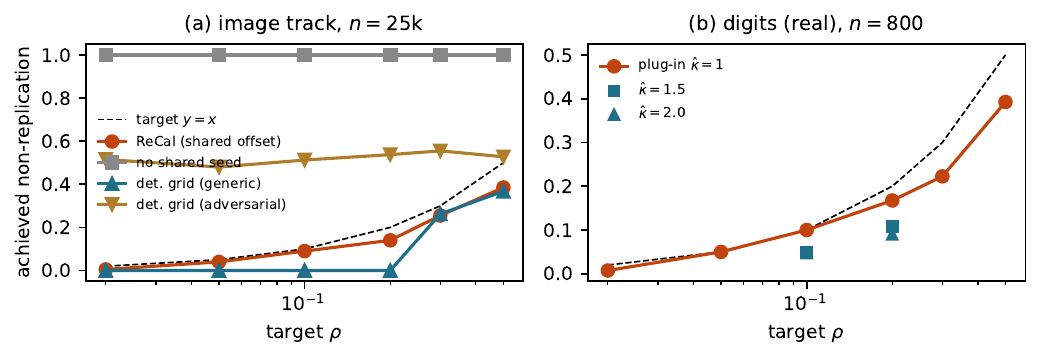}
\caption{Achieved non-replication vs.\ target $\rho$. (a) Image track, $n{=}25{,}000$: \ReCal{} tracks $y{=}x$ from below; no shared seed gives zero replication, and an adversarial $\beta/2$ score shift reduces the deterministic grid to $\approx1/2$ (Theorem~\ref{thm:impossible}a). (b) Digits, $n{=}800$: the $\hat\kappa{=}1$ plug-in shows excursions above the target; $\hat\kappa\in\{1.5,2\}$ restores compliance (Table~\ref{tab:safety}).}
\label{fig:rho}
\end{figure}

\begin{figure}[t]
\centering
\includegraphics[width=0.92\linewidth]{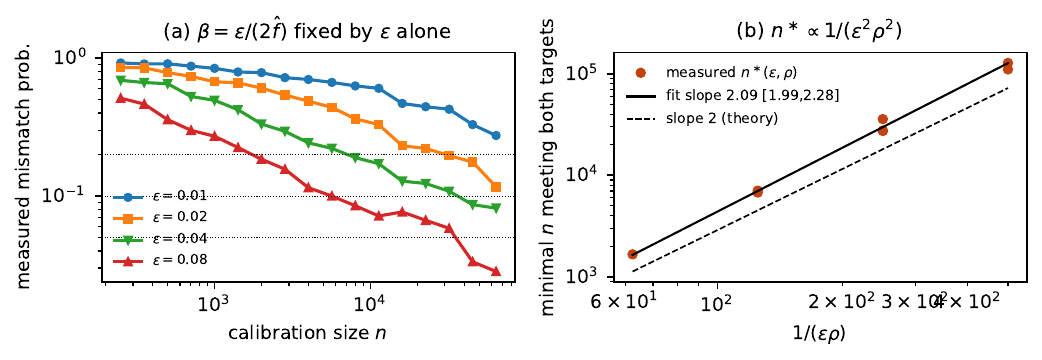}
\caption{\textbf{Measured sample-complexity frontier.} $\beta=\varepsilon/(2\hat f)$ is fixed by $\varepsilon$ alone. (a) Mismatch vs.\ $n$: decay exponents $-.49,-.50,-.43,-.49$ across $\varepsilon$ (predicted $n^{-1/2}$); small-$\varepsilon$ curves saturate where the theory requires ($\beta\ll$ threshold spread). (b) $n^\ast$ vs.\ $1/(\varepsilon\rho)$ fits slope $2.09$ (CI $[1.99,2.28]$; extrapolated points excluded), matching Corollary~\ref{cor:protocol} and Theorem~\ref{thm:lower}.}
\label{fig:nstar}
\end{figure}

\begin{table}[t]
\centering\small
\caption{\textbf{Real-data tracks}: digits, and Llama-3.2-1B next-token calibration on WikiText-103, including the risk-control instantiation (top-mass/top-$p$ sets with $0/1$ risk, calibrated CRC-style) on real model probabilities. Cross-model comparison: Table~\ref{tab:realllm}; the synthetic long-tail testbed is in Appendix~\ref{app:exp}.}
\label{tab:small}
\begin{tabular}{llcccc}
\toprule
 & Method & Coverage & Avg.\ size & Ident. & Agree \\
\midrule
\multirow{4}{*}{\shortstack[l]{digits\\($n{=}800$,\\$600$ pairs)}}
 & APS (derand.) & $.9001\pm.0109$ & $1.08$ & $.02$ & $.98$ \\
 & \ReCal{} $\rho{=}.2$ & $.9365\pm.0233$ & $1.12$ & $.81$ & $.98$ \\
 & \ReCal{} $\rho{=}.1$ & $.9626\pm.0341$ & $3.65$ & $.89$ & $.88$ \\
 & Det.-grid $2$-list ($\beta$ as $\rho{=}.2$) & $.9438\pm.0305$ & $1.13$ & $.63$ & $.94$ \\
\midrule
\multirow{5}{*}{\shortstack[l]{Llama-3.2-1B\\(real, WikiText,\\$n{=}5{,}000$,\\$100$ pairs)}}
 & APS (derand.) & $.9000\pm.0044$ & $83.7$ & $.04$ & $.55$ \\
 & \ReCal{} $\rho{=}.1$ & $.9297\pm.0183$ & $132.8$ & $.89$ & $.93$ \\
 & \ReCal{} $\rho{=}.05$ & $.9573\pm.0299$ & $10{,}586$ & $.96$ & $.92$ \\
 & CRC top-$p$ (standard) & $.9012\pm.0044$ & $1084$ & $.01$ & $.19$ \\
 & \ReCal-CRC $\rho{=}.1$ & $.9356\pm.0140$ & $12{,}717$ & $.96$ & $.96$ \\
\bottomrule
\end{tabular}
\end{table}

\begin{table}[t]
\centering\small
\caption{\textbf{The replication spectrum} (Definition~\ref{def:spectrum}) at $\rho{=}0.1$, $60$ pairs. ``Churn''\ = mean $|\cC(X)\triangle\cC'(X)|$ per test point, marginal and conditional on the \emph{pair} mismatching (with event counts). Baseline ``bag-$10$''\ averages ten bootstrap quantiles: agreement matches plain calibration and identity stays $0$; averaging does not produce replication. \ReCal{} concentrates disagreement into rare events of one grid cell; digits: marginal churn increases from $.019$ to $.60$ and agreement drops $.98\to.93$; Llama: $5.0\to15.1$ tokens for identity $0\to.90$ ($40$ pairs); GPT-2: the cell is smaller than baseline churn and both metrics improve ($20.1\to0.0$, $60$ mismatch-free pairs; Appendix~\ref{app:exp}).}
\label{tab:spectrum}
\begin{tabular}{llccccc}
\toprule
 & Method & Ident. & Agree & Jaccard & Churn (marg.) & Churn $\mid$ pair mism.\ (\#) \\
\midrule
\multirow{3}{*}{image} & derand. & $.02$ & $.66$ & $.92$ & $0.36$ & $0.37$\ \ (59) \\
 & bag-$10$ & $.00$ & $.63$ & $.91$ & $0.39$ & $0.39$\ \ (60) \\
 & \ReCal{} $.1$ & $.97$ & $.97$ & $.94$ & $0.19$ & $5.82$\ \ (2) \\
\midrule
\multirow{3}{*}{digits} & derand. & $.05$ & $.98$ & $.92$ & $0.019$ & $0.02$\ \ (57) \\
 & bag-$10$ & $.00$ & $.98$ & $.92$ & $0.017$ & $0.02$\ \ (60) \\
 & \ReCal{} $.1$ & $.93$ & $.93$ & $.92$ & $0.60$ & $9.01$\ \ (4) \\
\midrule
\multirow{3}{*}{\shortstack[l]{LLM\\(Llama-1B)}} & derand. & $.00$ & $.52$ & $.94$ & $5.0$ & $5.0$\ \ (40) \\
 & bag-$10$ & $.00$ & $.52$ & $.94$ & $4.8$ & $4.8$\ \ (40) \\
 & \ReCal{} $.1$ & $.90$ & $.92$ & $.94$ & $15.1$ & $151$\ \ (4) \\
\bottomrule
\end{tabular}
\end{table}

\paragraph{Environment and reproduction.} Every image-track, digits, frontier, attack, and synthetic-LM number in this paper regenerates deterministically in approximately three minutes on one CPU core; the four real-model tables regenerate from the cached logits in $6$--$10$ CPU-minutes per model, and logit extraction itself used one GPU. The full suite was rerun end-to-end before finalizing this draft, and the paper reports that single run. All randomness derives from fixed master seeds (pool $20260808$, instance-jitter $777$, analyst resampling $424242$) with documented per-experiment offsets, and every pair's shared offset seed is derived deterministically. A common interface (an $(N,K)$ logit matrix with $(N)$ labels) substitutes real cached model outputs for the image and next-token generators with no other change. Next-token extraction computes logits on a WikiText-103 slice from locally cached checkpoints and has been verified end-to-end (measured test top-1: GPT-2 $.402$, Pythia-1.4B $.504$, Qwen2.5-1.5B $.519$, Llama-3.2-1B $.516$, consistent with published values for these models, and spanning vocabularies from $50$k to $152$k tokens); image extraction computes class logits from a ViT-Base checkpoint (google/vit-base-patch16-224) on an ImageNet-layout folder; memory scales as $N\times K\times2$ bytes in half precision ($\approx1$\,GB for $10^{4}$ contexts at a $50$k vocabulary).

\paragraph{Pools.} \emph{Image-classification testbed}: $N{=}50{,}000$, $K{=}1{,}000$; labels uniform; per-example margin $\mu_i=\mathrm{softplus}(m+\xi_i)$, $\xi_i\sim\mathcal N(0,1)$, with $m$ tuned by bisection (Gumbel approximation to the max of $K{-}1$ standard normals) to a top-1 of $0.78$; logits $z_i=\mu_ie_{y_i}+\mathcal N(0,I_K)$. Achieved top-1 $79.0\%$, top-5 $89.9\%$ (a real ResNet-152's top-5 is $\approx94\%$; independent-coordinate logits thin the top-5 mass, a known limitation of this generator class). \emph{digits}: scikit-learn digits, stratified $50/50$ split, multinomial logistic regression ($C{=}0.5$); pool = the $899$ held-out points; test accuracy $95.7\%$. \emph{LLM-scale testbed}: $N{=}10{,}000$, $K{=}8{,}192$; per-context Zipf (exponent $1.1$) logits under a random vocabulary permutation and per-context temperature $e^{0.35\mathcal N(0,1)}$; labels drawn from the context distribution; the ``model'' sees temperature-$1.2$, noise-perturbed logits (deliberate miscalibration). Top-1 $8.9\%$, mean entropy $7.0$ nats.

\paragraph{Scores and methods.} APS score $s_i=V_{i,o_i}-u_ip_{i,(o_i)}$ with $V$ the sorted-probability cumulative sums, $o_i$ the rank of the true label, and $u_i\in[0,1)$ a fixed deterministic per-instance uniform (the hash surrogate of Remark~\ref{rem:jitter}; shared by all analysts, independent of calibration draws). ``Fresh randomization'' redraws $u$ per analyst and per test evaluation, as in the original APS. RAPS adds $\lambda\max(0,o-k_{\mathrm{reg}})$ with $(k_{\mathrm{reg}},\lambda)=(5,0.05)$. Thresholds use $k=\lceil(1-\alpha)(n+1)\rceil$; \ReCal{} rounds up on the shared offset grid; the seedless variant uses the fixed grid $\beta\mathbb Z$. Sizes and agreement metrics are computed from the monotone per-label score matrices on a fixed evaluation subsample ($10{,}000$ points on the image track, the full pool for digits; $4{,}000$ points for the synthetic, GPT-2, and Pythia pools and $2{,}000$ for Qwen and Llama): first $25$ pairs per configuration in Tables~\ref{tab:imagenet}--\ref{tab:small}, $60$ pairs in Table~\ref{tab:spectrum}. Coverage is computed on the full pool.

\paragraph{Grid rule.} $\beta=\hat\kappa\cdot\sqrt{2\alpha(1-\alpha)/n}\,/(\hat f\rho)$ with $\hat f$ from a public pilot of $2{,}000$ pool draws (seed $1234$), window $\pm0.02$ around the pilot quantile; $\hat f=0.84$ (image), $1.31$ (digits), $0.75$/$0.79$ (synthetic LLM testbed, APS/required-mass); per-model pilots for the real-LM runs are listed in the provenance paragraph above ($0.64$--$1.15$ APS, $1.76$--$4.38$ required-mass). $\hat\kappa=1$ unless stated; the paper's recommended default for deployment is $\hat\kappa=1.5$ (F2, Table~\ref{tab:safety}).

\paragraph{Real LLM track: provenance and per-model details.} Logits were produced on locally cached checkpoints (single GPU, WikiText-103 test contexts of length $512$, half precision) and consumed through the common logits interface with no other change; suite runtime $6$--$10$ minutes per model on one CPU core. Per model ($N$, $K$, top-1/top-5, entropy, APS pilot $\hat f$, mass-score pilot): GPT-2 ($10{,}000$, $50{,}257$, $.402/.624$, $3.59$ nats, $1.04$, $1.76$); Pythia-1.4B ($10{,}000$, $50{,}304$, $.504/.729$, $2.46$, $1.10$, $3.83$); Qwen2.5-1.5B ($10{,}000$, $151{,}936$, $.519/.754$, $2.18$, $0.64$, $4.35$); Llama-3.2-1B ($5{,}000$, $128{,}256$, $.516/.751$, $2.31$, $1.03$, $4.38$). Tables~\ref{tab:small} and \ref{tab:realllm} and the LLM rows of Tables~\ref{tab:spectrum} and \ref{tab:safety} come from these per-model runs, the LLM bars of Figure~\ref{fig:hero} from the Llama run, and every other number from the canonical synthetic-suite run. All nine per-model runs (the four families above and the five-point scale sweep below) produce mutually bit-identical image, digits, and frontier sections; the shared-seed design makes this checkable, and we verified it. Across \emph{machines}, every discrete statistic reproduces exactly (the frontier slope to all printed digits, the attack counts, all fixed-pool thresholds) while floating-point aggregates agree to $\sim\!10^{-5}$; the one genuine environment sensitivity is the digits track, which refits its logistic model at run time: a different scikit-learn build fits a marginally different model (test accuracy $.9555$ vs.\ $.9566$, pilot $\hat f$ $1.38$ vs.\ $1.31$), moving the $\rho{=}.1$ identity from $.887$ to $.930$; these alternate-build values come from a separate rerun. We disclose it, and it supplies a third independent stream for the Monte-Carlo paragraph below. For reference, the synthetic long-tail testbed at the matched operating point gives: APS derand.\ cov/size/ident/agree $.9005/2570/.01/.06$; \ReCal{} $\rho{=}.1$: $.9294/3108/.87/.84$ and $\rho{=}.05$: $.9539/4116/.98/.96$; CRC $.8997\to.9278$ with identity $.03\to.94$; spectrum churn $87\to80$ tokens; safety $.093/.053$ at $\hat\kappa{=}1/1.5$; the real models confirm this pattern qualitatively, except for the tail-geometry effect (F8) that only real vocabularies exhibit.

\paragraph{Model-scale sweep (Pythia $70$M--$1.4$B).} Five sizes of one family (one tokenizer, one $50{,}304$-token vocabulary, a $20\times$ parameter range) isolate the effect of model quality (Table~\ref{tab:scale}, Figure~\ref{fig:scale}). First, the protocol required no per-scale tuning across this sweep: the pilot density stays in $[0.96,1.15]$ (the probability-integral-transform heuristic behind Assumption~\ref{ass:margin} holds at every scale), and the identical grid rule (same $\hat\kappa$, no per-model tuning) delivers identity $.85$--$.94$ at $\rho{=}.1$ and $.94$--$.96$ at $\rho{=}.05$, with $\hat\kappa{=}1.5$ at or below target everywhere except the $1.4$B tie already noted. Second, within this sweep, scale alone does not produce replication: a $20\times$ larger model shrinks standard sets $559\to108$ tokens and lifts two-run agreement only from $.16$ to $.40$; larger models reduce churn, but exact agreement remains at most $.40$. The size cost varies non-monotonically within $46$--$84\%$, consistent with the tail-geometry effect of F8 at smaller magnitude.

\begin{table}[h]
\centering\small
\caption{Pythia scale sweep (WikiText-103, $n{=}5{,}000$, $\rho{=}0.1$; $100$ table pairs and $300$ safety pairs per size). Identities at $\rho{=}.05$ are $.94$--$.96$ across the sweep.}
\label{tab:scale}
\begin{tabular}{lccccccc}
\toprule
Model & Top-1 & Entropy & $\hat f$ & Std.\ size / agree & \ReCal{} ident. & Cost & Non-rep.\ ($\hat\kappa{=}1.5$) \\
\midrule
Pythia-70m  & $.343$ & $3.89$ & $1.15$ & $559$ / $.16$ & $.87$ & $+46\%$ & $.070\pm.015$ \\
Pythia-160m & $.412$ & $3.53$ & $1.00$ & $342$ / $.28$ & $.85$ & $+47\%$ & $.060\pm.014$ \\
Pythia-410m & $.467$ & $2.75$ & $0.96$ & $162$ / $.32$ & $.89$ & $+65\%$ & $.070\pm.015$ \\
Pythia-1b   & $.489$ & $2.54$ & $0.97$ & $127$ / $.38$ & $.94$ & $+84\%$ & $.033\pm.010$ \\
Pythia-1.4B & $.504$ & $2.46$ & $1.10$ & $108$ / $.40$ & $.93$ & $+58\%$ & $.103\pm.018$ \\
\bottomrule
\end{tabular}
\end{table}

\begin{figure}[h]
\centering
\includegraphics[width=0.62\linewidth]{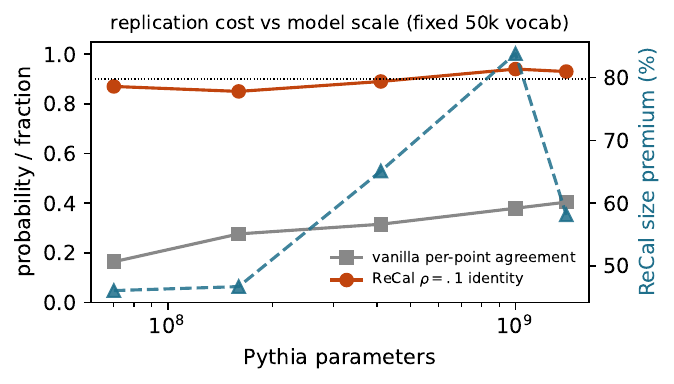}
\caption{Replication cost vs.\ model scale at fixed vocabulary: standard-calibration agreement (gray), \ReCal{} identity at $\rho{=}.1$ (dark), and the \ReCal{} size cost (right axis).}
\label{fig:scale}
\end{figure}

\paragraph{New experiments (specifications).}
\emph{Spectrum} (Table~\ref{tab:spectrum}): $60$ pairs per track at $\rho{=}0.1$; per pair we compute exact set masks on the evaluation subsample and report identity, pointwise agreement, mean Jaccard (per test point $|\cC\cap\cC'|/\allowbreak\max\{|\cC\cup\cC'|,1\}$, averaged over points and pairs), marginal churn $\E_X|\cC\triangle\cC'|$, and churn conditional on the \emph{pair} mismatching (averaged over mismatching pairs only, with event counts; the marginal version, averaged over all pairs, would dilute rare events by construction). ``bag-$10$'' recomputes each analyst's threshold as the mean of ten bootstrap-resample conformal quantiles of that analyst's own sample.
\emph{Safety} (Table~\ref{tab:safety}): fresh pair streams, independent of the main tables ($400$ pairs per digits configuration, $300$ per LLM configuration).
\emph{Cost curve} (Figure~\ref{fig:cost}): $40$ calibration draws per $\rho$, sizes on the first $15$.

\begin{figure}[t]
\centering
\includegraphics[width=0.52\linewidth]{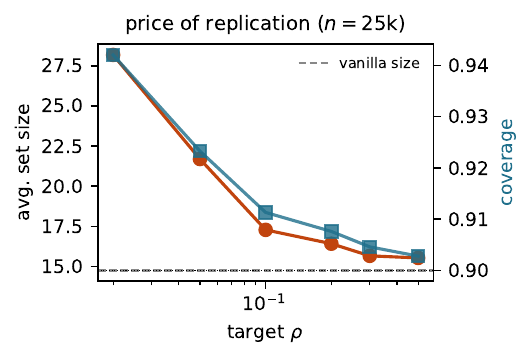}
\caption{The price of replication at $n{=}25$k: average set size and coverage vs.\ target $\rho$.}
\label{fig:cost}
\end{figure}

\emph{Sample-complexity frontier} (Figure~\ref{fig:nstar}): $\varepsilon\in\{.01,.02,.04,.08\}$, $\beta=\varepsilon/(2\hat f)$ fixed by $\varepsilon$ alone; $n$ on a $17$-point geometric grid $250\to64{,}000$; $600$ pairs per $(\varepsilon,n)$ cell measure the mismatch probability $p$ and the coverage inflation. Per $\varepsilon$, $\log p$ is regressed on $\log n$ over the unsaturated window $p\in[0.02,0.45]$ (measured decay slopes $-0.49,-0.50,-0.43,-0.49$; the theoretical exponent is $-\tfrac12$); $n^{\ast}(\varepsilon,\rho)$ solves the fitted line at $p=\rho$. Six of the twelve $(\varepsilon,\rho)$ combinations resolve inside the measured grid and enter the frontier fit (slope $2.09$, bootstrap CI $[1.99,2.28]$); two resolve just beyond it and are shown as open markers, excluded from the fit; four are censored beyond $1.28\times10^{5}$ and reported as such, exactly where the theory places them ($n^{\ast}\gtrsim7\times10^{5}$). The sweep holds $\alpha=0.1$ and $\kappa\approx1$ fixed: the $(\varepsilon\rho)^{-2}$ law is what is measured; the $\alpha(1-\alpha)$ and $\kappa^{2}$ factors of Corollary~\ref{cor:protocol} and Theorem~\ref{thm:lower} are not separately probed.
\emph{Selection attack} (Figures~\ref{fig:hero}b): $n{=}5{,}000$, $M{=}20$ redraws, $300$ trials, pre-registered per-trial offset; the adversary deploys the draw with minimal population coverage (equivalently minimal sets, by monotonicity). At $\rho{=}0.1$: standard selected $.8920\pm.0024$ vs.\ honest $.9003$; \ReCal{} selected $.9170$ vs.\ honest $.9239$; $\ge2$ distinct \ReCal{} outputs in $30.7\%$ of trials (the $(M{-}1)\rho{=}1.9$ union bound is vacuous here; the guarantee-preservation mechanism is what binds). At $\rho{=}0.01$: $\ge2$ distinct outputs in $1.3\%$ of trials against the bound $19\%$; selected $.9793$ vs.\ honest $.9797$.
\emph{Risk control} (Table~\ref{tab:small}): the required-mass score $M_i=V_{i,o_i}$ (cumulative sorted-probability mass through the true label, no jitter); $\hat\lambda$ is its conformal quantile (the CRC calibration of the monotone $0/1$ risk), and the deployed sets are $\{y:\text{mass before }y<\hat\lambda\}$ (calibrated top-$p$); \ReCal{} rounds $\hat\lambda$ on the shared grid in mass space; $100$ pairs.
\emph{Corollary-level checks} (image pool, $\rho{=}0.1$). \emph{Unequal calibration sizes} (Corollary~\ref{cor:twolabs}), $\beta$ set by the smaller analyst's rule: $(n_A,n_B)=(25\text{k},5\text{k})$ measures non-replication $.065\pm.012$ against the corollary's leading term $.102$, and $(25\text{k},1\text{k})$ measures $.052\pm.011$ against $.085$ ($400$ pairs each; both analysts' coverage $\ge.92$); the bound holds with $\approx1.6\times$ slack. \emph{Score mismatch} (Corollary~\ref{cor:robust}), analyst $B$'s scores perturbed within sup norm $\eta_s$: the worst case (constant $+\eta_s$ shift) measures $.087/.123/.190$ at $\eta_s/\beta=.05/.1/.2$ against the additive bound $\text{base}+\eta_s/\beta=.143/.193/.293$ ($300$ pairs, base $.093\pm.017$), tracking the exact collision formula and converging to $\eta_s/\beta$ once $\eta_s$ exceeds the threshold spread; mean-zero noise of the \emph{same} sup norm adds nothing ($.080$--$.110$, statistically flat); the $\eta_s/\beta$ term is a sup-norm worst case, as stated. The population-mismatch term $\eta_P/f_{\min}$ and the membership bound of Corollary~\ref{cor:robust} remain analytic.

\paragraph{Real ImageNet run.} Cross-validated ResNet-50 predicted probabilities on the ILSVRC-2012 validation set ($N{=}50{,}000$, $K{=}1{,}000$; the float16-quantized probability matrix of \citet{northcutt2021}; recomputed top-1 $.727$, top-5 $.895$) enter the pipeline as $\log p$ logits; nothing else changes. At $n{=}25{,}000$, $100$ pairs: baseline identity $0.00$ (fresh and derandomized), agreement $.65/.87$, size $4.32$; \ReCal{} at $\rho{=}.1$: identity $.87$ ($\hat\kappa{=}1$; $.98$ at $\hat\kappa{=}1.5$), coverage $.910$, size $5.05$ ($+17\%$); at $\rho{=}.01$: identity $.99$, size $13.2$; \ReCal-RAPS $.94$ at $5.93$; seedless two-list mass $1.000$. Pilot $\hat f{=}2.03$.

\paragraph{Clinical site-split track.} The four UCI Heart Disease sites \citep{detrano1989} (Cleveland Clinic $303$, Hungarian Institute of Cardiology $294$, V.A.\ Long Beach $200$, University Hospital Zurich $123$; CC BY 4.0; principal investigators A.~Janosi, W.~Steinbrunn, M.~Pfisterer, R.~Detrano). A logistic score on seven cross-site-complete features is trained on a public Cleveland split of $150$ patients ($.85$ training accuracy) and frozen, with deterministic instance jitter and a public pilot on the training scores ($\hat f{=}.67$); analysts calibrate on their own hospital with unequal sizes ($n_A{=}120$ Cleveland, $n_B{=}120$--$200$ elsewhere; Corollary~\ref{cor:twolabs}), $\hat\kappa{=}1.5$, $400$ trials. Baseline identity is $0.0$ for every pair. Cleveland--Budapest at $\rho{=}.3$: identity $.91$, per-site coverage $.982/.986$, sizes $1.84/1.89$ of $2$; at $\rho{=}.5$: identity $.84$, sizes $1.74/1.81$. Across all pairs (score Kolmogorov--Smirnov shift $.08$--$.17$): identity $.78$--$.96$, coverage $.97$--$.99$. Identity is affordable at hospital scale only at coarse $\rho$ and near-vacuous sets: Corollary~\ref{cor:sizelb} in the field.

\paragraph{$\alpha$-dependence probe.} At fixed grid width $\beta^{\ast}$ (the protocol width at $\alpha{=}.1$, $\hat\kappa{=}1$) and $n{=}25{,}000$ on the image pool, mismatch was measured over $800$ pairs at each $\alpha\in\{.05,.1,.2,.3,.4,.5\}$ in two designs. On the raw APS score, whose pilot density at the working quantile varies $22\times$ over the sweep ($\hat f(q_\alpha)$ from $0.45$ to $9.7$), observed mismatch falls from $.069$ to $.0025$ and matches the plug-in prediction $B_n(\alpha)/(\hat f(q_\alpha)\beta^{\ast})$ with through-origin proportionality $0.97$ and $R^{2}=.99$: the bound's joint $(\alpha,f)$ structure holds at unit constant. On a rank-transformed copy of the pool ($f\equiv1$ exactly), which isolates the $\alpha(1-\alpha)$ factor, mismatch increases from $.028$ to $.073$ as predicted and lies below the theoretical bound $B_n(\alpha)/\beta^{\ast}$ at every level ($.037$ to $.083$; against $\sqrt{\alpha(1-\alpha)}$, $R^{2}=.54$ at $800$ pairs, binomial standard errors $\approx.008$).

\paragraph{Adaptive-limit check.} The limits of Proposition~\ref{prop:adaptive} are visible at the protocol's own scale. On the rank-transformed pool ($f\equiv1$, so the pilot ratio drops out), at $n{=}1.6\times10^{5}$, $\rho{=}.1$, $10^{4}$ pairs: mismatch$/\rho=.382\pm.019$ against the limit $\sqrt{2/\pi}/2\approx.399$; inflation$\times\rho/B_n$ equals $1.001$--$1.005$ from $n{=}10^{4}$ on (limit $1$); and the $\sqrt{2/\pi}$ width of the last claim achieves mismatch $.100$ at the $\rho{=}.1$ target. On the raw APS score the pilot ratio enters as predicted: inflation$\times\rho/B_n$ rises $.56\to.93$ toward $f(q)/\hat f=1.04$, and mismatch$/\rho$ moves toward $.399\,\hat f/f(q)=.38$ as $\beta_n\downarrow0$.

\paragraph{Finite-pool ties.} Because the ``population'' is a finite pool, redrawn indices repeat and standard thresholds can tie: the nonzero baseline identity rates ($\le.02$ image, $.02$--$.05$ digits, $.00$--$.04$ on the real LLMs, largest for Llama whose pool is $N{=}5{,}000$) are pool-atom artifacts that vanish for a continuous population (Proposition~\ref{prop:vanilla}); they slightly help all methods equally.

\begin{table}[h]
\centering\small
\caption{Safety factor $\hat\kappa$ (digits: $n{=}800$, $400$ fresh pairs per row; real LLMs: $n{=}5{,}000$, $300$ pairs each, all at $\rho{=}0.1$). ``Non-rep.''\ is achieved non-replication ($\pm$\,s.e.); target is $\rho$. Pre-registered $\hat\kappa=1.5$ is at or below target in every configuration up to one statistical tie (Pythia $.103\pm.018$ against $.10$, indistinguishable from its own $\hat\kappa{=}1$ stream). Qwen's sizes reflect the tail geometry of F8. Average size is \emph{not} monotone in $\hat\kappa$ on the small digits pool: as $\beta$ grows the rounded threshold lands in different cells of a spiky score distribution, so size can dip before jumping (coverage inflation, by contrast, is monotone as theory requires).}
\label{tab:safety}
\begin{tabular}{llcccc}
\toprule
 & Setting & $\hat\kappa$ & Non-rep. & Cov.\ inflation & Avg.\ size \\
\midrule
\multirow{6}{*}{digits}
 & $\rho{=}0.1$ & $1.0$ & $0.075\pm0.013$ & $0.065$ & $3.79$ \\
 & $\rho{=}0.1$ & $1.5$ & $0.050\pm0.011$ & $0.074$ & $2.89$ \\
 & $\rho{=}0.1$ & $2.0$ & $0.048\pm0.011$ & $0.082$ & $6.47$ \\
 & $\rho{=}0.2$ & $1.0$ & $0.160\pm0.018$ & $0.036$ & $1.11$ \\
 & $\rho{=}0.2$ & $1.5$ & $0.107\pm0.015$ & $0.053$ & $1.15$ \\
 & $\rho{=}0.2$ & $2.0$ & $0.090\pm0.014$ & $0.064$ & $4.67$ \\
\midrule
\multirow{8}{*}{\shortstack[l]{LLM (real)\\all $\rho{=}0.1$}}
 & GPT-2 & $1.0$ & $0.103\pm0.018$ & $0.026$ & $526$ \\
 & GPT-2 & $1.5$ & $0.050\pm0.013$ & $0.038$ & $589$ \\
 & Pythia-1.4B & $1.0$ & $0.097\pm0.017$ & $0.027$ & $187$ \\
 & Pythia-1.4B & $1.5$ & $0.103\pm0.018$ & $0.041$ & $253$ \\
 & Qwen2.5-1.5B & $1.0$ & $0.060\pm0.014$ & $0.051$ & $15575$ \\
 & Qwen2.5-1.5B & $1.5$ & $0.053\pm0.013$ & $0.063$ & $60880$ \\
 & Llama-3.2-1B & $1.0$ & $0.073\pm0.015$ & $0.030$ & $139$ \\
 & Llama-3.2-1B & $1.5$ & $0.043\pm0.012$ & $0.044$ & $147$ \\
\bottomrule
\end{tabular}
\end{table}

\paragraph{Monte Carlo precision.} Identity rates carry binomial standard errors: $.95$ over $100$ pairs is $\pm.022$; the digits $\rho{=}.1$ table entry ($.887$ over $600$ pairs, i.e.\ non-replication $.113\pm.013$) and the independent $400$-pair safety stream ($.075\pm.013$) are $2.1$ combined standard errors apart and bracket the $.10$ target; a full regeneration on a second machine, whose scikit-learn fit differs as noted above, gave $.070\pm.010$ on its own $600$-pair stream; the three independent streams straddle the target, reflecting the plug-in variability at $n{=}800$ and which makes $\hat\kappa{=}1.5$ the recommended default. A $0$-failure outcome over $100$ pairs (image track, $\rho{=}.01$) certifies only a $\le.03$ rate at $95\%$ confidence by the rule of three; we state it as such.

\end{document}